\newcommand{\rtpic}[2]{%
\raisebox{2pt}{%
\begin{tikzpicture}[
  baseline={(current bounding box.center)},
  x=0.55ex,
  y=0.55ex,
  line cap=round
]
#1
\foreach \p in {#2}{\filldraw (\p) circle [radius=1.35pt];}
\end{tikzpicture}%
}
}%
\newcommand{\TEmpty}{\ensuremath{\emptyset}}
\newcommand{\TLeaf}{%
\rtpic{\coordinate (r) at (0,0);}{r}}
\newcommand{\TChainTwo}{%
\rtpic{
  \coordinate (r) at (0,0);
  \coordinate (a) at (0,2);
  \draw[thin] (r)--(a);
}{r,a}}
\newcommand{\TTwoLeaves}{%
\rtpic{
  \coordinate (r) at (0,0);
  \coordinate (a) at (-1.5,2);
  \coordinate (b) at (1.5,2);
  \draw[thin] (a)--(r)--(b);
}{r,a,b}}
\newcommand{\TChainThree}{%
\rtpic{
  \coordinate (r) at (0,0);
  \coordinate (a) at (0,2);
  \coordinate (b) at (0,4);
  \draw[thin] (r)--(a)--(b);
}{r,a,b}}
\newcommand{\TThreeLeaves}{%
\rtpic{
  \coordinate (r) at (0,0);
  \coordinate (a) at (-2,2);
  \coordinate (b) at (0,2);
  \coordinate (c) at (2,2);
  \draw[thin] (a)--(r)--(b);
  \draw[thin] (r)--(c);
}{r,a,b,c}}
\newcommand{\TLeafChainTwo}{%
\rtpic{
  \coordinate (r) at (0,0);
  \coordinate (a) at (-1.5,2);
  \coordinate (b) at (1.5,2);
  \coordinate (c) at (1.5,4);
  \draw[thin] (a)--(r)--(b)--(c);
}{r,a,b,c}}
\newcommand{\TChainTwoLeaf}{%
\rtpic{
  \coordinate (r) at (0,0);
  \coordinate (a) at (-1.5,2);
  \coordinate (b) at (-1.5,4);
  \coordinate (c) at (1.5,2);
  \draw[thin] (b)--(a)--(r)--(c);
}{r,a,b,c}}
\newcommand{\TNestedTwoLeaves}{%
\rtpic{
  \coordinate (r) at (0,0);
  \coordinate (a) at (0,2);
  \coordinate (b) at (-1.5,4);
  \coordinate (c) at (1.5,4);
  \draw[thin] (r)--(a);
  \draw[thin] (b)--(a)--(c);
}{r,a,b,c}}
\newcommand{\TChainFour}{%
\rtpic{
  \coordinate (r) at (0,0);
  \coordinate (a) at (0,2);
  \coordinate (b) at (0,4);
  \coordinate (c) at (0,6);
  \draw[thin] (r)--(a)--(b)--(c);
}{r,a,b,c}}
\newcommand{\TFourLeaves}{%
\rtpic{
  \coordinate (r) at (0,0);
  \coordinate (a) at (-3,2);
  \coordinate (b) at (-1,2);
  \coordinate (c) at (1,2);
  \coordinate (d) at (3,2);
  \draw[thin] (a)--(r)--(b);
  \draw[thin] (r)--(c);
  \draw[thin] (r)--(d);
}{r,a,b,c,d}}
\newcommand{\TLeafLeafChainTwo}{%
\rtpic{
  \coordinate (r) at (0,0);
  \coordinate (a) at (-2.2,2);
  \coordinate (b) at (0,2);
  \coordinate (c) at (2.2,2);
  \coordinate (d) at (2.2,4);
  \draw[thin] (a)--(r)--(b);
  \draw[thin] (r)--(c)--(d);
}{r,a,b,c,d}}
\newcommand{\TLeafChainTwoLeaf}{%
\rtpic{
  \coordinate (r) at (0,0);
  \coordinate (a) at (-2.2,2);
  \coordinate (b) at (0,2);
  \coordinate (c) at (0,4);
  \coordinate (d) at (2.2,2);
  \draw[thin] (a)--(r)--(b)--(c);
  \draw[thin] (r)--(d);
}{r,a,b,c,d}}
\newcommand{\TLeafNestedTwoLeaves}{%
\rtpic{
  \coordinate (r) at (0,0);
  \coordinate (a) at (-1.6,2);
  \coordinate (b) at (1.6,2);
  \coordinate (c) at (0.7,4);
  \coordinate (d) at (2.5,4);
  \draw[thin] (a)--(r)--(b);
  \draw[thin] (c)--(b)--(d);
}{r,a,b,c,d}}
\newcommand{\TLeafChainThree}{%
\rtpic{
  \coordinate (r) at (0,0);
  \coordinate (a) at (-1.6,2);
  \coordinate (b) at (1.6,2);
  \coordinate (c) at (1.6,4);
  \coordinate (d) at (1.6,6);
  \draw[thin] (a)--(r)--(b)--(c)--(d);
}{r,a,b,c,d}}
\newcommand{\TChainTwoLeafLeaf}{%
\rtpic{
  \coordinate (r) at (0,0);
  \coordinate (a) at (-2.2,2);
  \coordinate (b) at (-2.2,4);
  \coordinate (c) at (0,2);
  \coordinate (d) at (2.2,2);
  \draw[thin] (b)--(a)--(r)--(c);
  \draw[thin] (r)--(d);
}{r,a,b,c,d}}
\newcommand{\TChainTwoChainTwo}{%
\rtpic{
  \coordinate (r) at (0,0);
  \coordinate (a) at (-1.5,2);
  \coordinate (b) at (-1.5,4);
  \coordinate (c) at (1.5,2);
  \coordinate (d) at (1.5,4);
  \draw[thin] (b)--(a)--(r)--(c)--(d);
}{r,a,b,c,d}}
\newcommand{\TNestedTwoLeavesLeaf}{%
\rtpic{
  \coordinate (r) at (0,0);
  \coordinate (a) at (-1.6,2);
  \coordinate (b) at (-2.5,4);
  \coordinate (c) at (-0.7,4);
  \coordinate (d) at (1.6,2);
  \draw[thin] (r)--(a);
  \draw[thin] (b)--(a)--(c);
  \draw[thin] (r)--(d);
}{r,a,b,c,d}}
\newcommand{\TNestedThreeLeaves}{%
\rtpic{
  \coordinate (r) at (0,0);
  \coordinate (a) at (0,2);
  \coordinate (b) at (-2,4);
  \coordinate (c) at (0,4);
  \coordinate (d) at (2,4);
  \draw[thin] (r)--(a);
  \draw[thin] (b)--(a)--(c);
  \draw[thin] (a)--(d);
}{r,a,b,c,d}}
\newcommand{\TNestedLeafChainTwo}{%
\rtpic{
  \coordinate (r) at (0,0);
  \coordinate (a) at (0,2);
  \coordinate (b) at (-1.5,4);
  \coordinate (c) at (1.5,4);
  \coordinate (d) at (1.5,6);
  \draw[thin] (r)--(a);
  \draw[thin] (b)--(a)--(c)--(d);
}{r,a,b,c,d}}
\newcommand{\TChainThreeLeaf}{%
\rtpic{
  \coordinate (r) at (0,0);
  \coordinate (a) at (-1.6,2);
  \coordinate (b) at (-1.6,4);
  \coordinate (c) at (-1.6,6);
  \coordinate (d) at (1.6,2);
  \draw[thin] (c)--(b)--(a)--(r)--(d);
}{r,a,b,c,d}}
\newcommand{\TNestedChainTwoLeaf}{%
\rtpic{
  \coordinate (r) at (0,0);
  \coordinate (a) at (0,2);
  \coordinate (b) at (-1.5,4);
  \coordinate (c) at (-1.5,6);
  \coordinate (d) at (1.5,4);
  \draw[thin] (r)--(a);
  \draw[thin] (c)--(b)--(a)--(d);
}{r,a,b,c,d}}
\newcommand{\TDoubleNestedTwoLeaves}{%
\rtpic{
  \coordinate (r) at (0,0);
  \coordinate (a) at (0,2);
  \coordinate (b) at (0,4);
  \coordinate (c) at (-1.5,6);
  \coordinate (d) at (1.5,6);
  \draw[thin] (r)--(a)--(b);
  \draw[thin] (c)--(b)--(d);
}{r,a,b,c,d}}
\newcommand{\TChainFive}{%
\rtpic{
  \coordinate (r) at (0,0);
  \coordinate (a) at (0,2);
  \coordinate (b) at (0,4);
  \coordinate (c) at (0,6);
  \coordinate (d) at (0,8);
  \draw[thin] (r)--(a)--(b)--(c)--(d);
}{r,a,b,c,d}}
\definecolor{bg_yellow}{HTML}{FCFCE3}
\definecolor{title_yellow}{HTML}{F4EEAC}
\DeclareSymbolFont{stixletters}{LS1}{stix}{m}{it}
\DeclareMathAccent{\backvec}{\mathord}{stixletters}{"91}
\newtcolorbox{road2}{
  colback=bg_yellow,
  enhanced,
  title=,
  attach boxed title to top left,
  fonttitle=\bfseries,
  coltitle=black,
  toprule=0pt,
  bottomrule=0pt,
  rightrule=0pt,
  leftrule=3pt,
  arc=0mm,
  skin=enhancedlast jigsaw,
  sharp corners,
  colframe=bg_yellow,
  colbacktitle=title_yellow,
  boxed title style={
    frame code={
      \fill[title_yellow] 
        (frame.south west) -- 
        (frame.north west) -- 
        (frame.north east) -- 
        ([xshift=3mm]frame.east) -- 
        (frame.south east) -- 
        cycle;

      \draw[line width=1mm, title_yellow] 
        ([xshift=2mm]frame.north east) -- 
        ([xshift=5mm]frame.east) -- 
        ([xshift=2mm]frame.south east);

      \draw[line width=1mm, title_yellow] 
        ([xshift=5mm]frame.north east) -- 
        ([xshift=8mm]frame.east) -- 
        ([xshift=5mm]frame.south east);

    }
  }
}
\theoremstyle{definition}
\newtheorem{definition}{Definition}[section]
\newtheorem{example}{Example}[section]
\theoremstyle{remark}
\newtheorem*{remark}{Remark}
\newtheorem{proposition}{Proposition}[section]
\newtheorem{theorem}{Theorem}[section]
\title{Explicit and Effectively Symmetric Schemes for Neural SDEs on Lie Groups}
\author{%
  Daniil Shmelev\\
  Department of Mathematics\\
  Imperial College London\\
  \texttt{daniil.shmelev23@imperial.ac.uk} \\
  \And
  Luke Thompson\\
  University of Sydney\\
  \texttt{luke.thompson@sydney.edu.au} \\
  \And
  Cristopher Salvi\\
  Department of Mathematics\\
  Imperial College London\\
  \texttt{c.salvi@imperial.ac.uk} \\
}
\providecommand{\sshu}{\mathbin{\joinrel{\,\scriptscriptstyle\amalg\hskip -2.5pt\amalg}\,}}
\begin{document}

\maketitle

\begin{abstract}
  Backpropagation through (neural) SDE solvers is traditionally approached in two ways: discretise-then-optimise, which offers accurate gradients but incurs prohibitive memory costs; and optimise-then-discretise, which achieves constant memory cost by solving an auxiliary backward SDE, but suffers from slower evaluation and gradient approximation errors. Algebraically reversible solvers promise both memory efficiency and gradient accuracy, yet existing methods such as Reversible Heun are often unstable under complex models and large step sizes, and their non-standard auxiliary-state structure obstructs extension to manifold-valued SDEs. Building on the recently introduced \textit{Explicit and Effectively Symmetric (EES)} schemes -- a class of stable, near-reversible explicit Runge--Kutta methods -- we address both limitations of existing schemes. We extend EES schemes from ODEs to SDEs and show that they admit an efficient Williamson $2N$-storage realisation. Bazavov's commutator-free construction then lifts these schemes to arbitrary Lie groups and homogeneous spaces. To our knowledge, this is the first explicit \mbox{(near-)reversible} integrator in this setting, unlocking the reversible adjoint approach for manifold-valued problems. On Euclidean neural SDE benchmarks, our schemes improve stability under stiff drift and large steps compared with other reversible solvers, while the commutator-free lift reduces memory by up to an order of magnitude on manifold-valued problems versus other baselines. These results establish effectively symmetric integration as a unified, geometry-aware foundation for memory-efficient and stable training of neural SDEs.
\end{abstract}

\section{Introduction}
\emph{Neural stochastic differential equations (NSDEs)} have recently emerged as a flexible tool for modelling stochastic dynamics, with training typically cast as a distribution-matching problem between generated and observed trajectories. Several approaches have been proposed in the literature, differing mainly in the choice of discriminating divergence. SDE-GANs~\citep{kidger2021neural} use the 1-Wasserstein distance, while Latent SDEs~\citep{li2020scalable} optimise with respect to the KL divergence via variational inference. Another alternative proposed by~\cite{issa2023non} trains neural SDEs non-adversarially using maximum mean discrepancies (MMD) with \emph{signature kernels}~\citep{kiraly2019kernels, salvi2021signature, lemercier2024log}, a recently introduced family of efficient kernels on path space ~\citep{salvi2021higher, lemercier2021distribution, pannier2024path, cirone2025rough}. Across these formulations, training requires differentiating through an SDE solver, making the numerical method a central part of the learning algorithm.

A second pressure on the design of an SDE integrator comes from the geometry of the state space. Many stochastic dynamics of practical interest evolve in non-Euclidean spaces. In biology, torus-valued processes underpin peptide design and torsion-angle prediction~\citep{linPPFlowTargetawarePeptide2024, zhuTrivializedMomentumFacilitates2025, costa2024equijumpproteindynamicssimulation}. In motion capture, articulated poses evolve on products of rotations~\citep{https://doi.org/10.24432/c57g8x, zeng2023latent, bastian_continuous-time_2025}. In finance, asset-return covariances evolve on the SPD manifold~\citep{noureldinMultivariateHighfrequencybasedVolatility2012, johansson_simple_2023}. These geometries share a common structure: they are Lie groups, products of Lie groups, or are generated by Lie-group actions. In such settings, leaving the manifold breaks the required geometric constraints, and training therefore requires differentiating through a geometry-preserving solver.

\begin{wrapfigure}{r}{0.5\textwidth}
    \vspace{-1em}
    \centering
    \includegraphics[trim={1mm 1mm 1mm 1mm},clip,width=\linewidth]{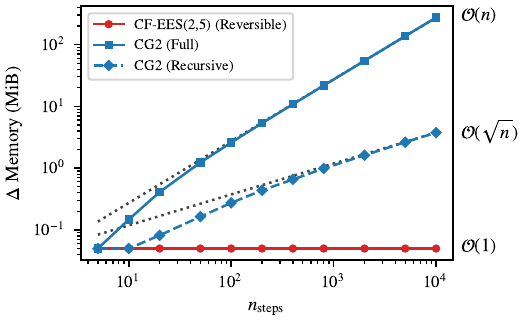}
    \caption{\textit{Growth in memory requirement for one forward and backward solve of a batch of 1024 SDEs on the 7-torus $\mathbb{T}^7$. Our method is in \textcolor[HTML]{D62728}{red}.}}
    \label{fig:torus_scaling}
\end{wrapfigure}
\textit{Adjoint methods} provide a family of approaches to perform this backpropagation through the solver. A first approach, known as \emph{discretise-then-optimise}, directly backpropagates through the solver's internal operations. This yields the exact gradient of the discretised computation and is computationally efficient, but requires storing all intermediate states, making it memory-intensive ($\mathcal{O}(n)$); we refer to this method as the \textbf{Full} adjoint. A second approach, \emph{optimise-then-discretise}, instead derives a backward-in-time adjoint equation and solves it numerically using another call to the solver. This avoids storing the full forward trajectory, resulting in constant memory with respect to the number of solver steps. However, it generally does not return the exact gradient of the discretised solver and is often slower due to the need to recompute forward trajectories during the backward pass. Practical implementations use recursive checkpointing~\citep{stumm2010new} to balance memory and recomputation, yielding an intermediate $\mathcal{O}(\sqrt{n})$ memory regime; we refer to this hybrid as the \textbf{Recursive} adjoint.

A third option leverages \emph{algebraically reversible solvers}, which enable exact reconstruction of the solution trajectory from the terminal state, and hence permit accurate and memory-efficient backpropagation in $\mathcal{O}(1)$ memory; we refer to this as the \textbf{Reversible} adjoint. In the setting of an autonomous ODE
\begin{equation} \label{eq:ode}
    \mathrm{d}y_t = f(y_t) \, \mathrm{d}t,
\end{equation}

a one step method $y_{n+1} = y_n + \Phi_h(y_n)$ is said to be \emph{reversible} or \emph{symmetric} if a step of the method starting from $y_1$ with a negative step size exactly recovers the initial condition $y_0$, that is, $\Phi_{-h} = \Phi^{-1}_h$. While reversible schemes offer an efficient approach to backpropagation through differential equations, such schemes are difficult to construct. It is well known that Runge--Kutta schemes are reversible only if they are implicit, making them unsuitable for applications to Neural ODEs. More generally, symmetric parasitism-free general linear methods cannot be explicit \citep{butcher2016symmetric}.

To overcome this problem, existing reversible methods proposed in literature, such as the Reversible Heun method \citep{kidger2021efficient} and the McCallum-Foster methods \citep{mccallum2024efficient}, track auxiliary states as part of the integration. While this allows explicit reversible schemes, it comes at the cost of stability, with both methods known to be unstable and prone to failure when integrating complex equations \cite{zhang2021path}. In addition to these stability concerns, the non-standard constructions of these schemes make it unclear how to generalise them to non-Euclidean spaces. To our knowledge, there are no efficient reversible numerical schemes on Lie groups, making such $\mathcal{O}(1)$ memory efficiency unattainable for manifold-valued NSDEs.

A potential solution to the difficulties of reversible solvers comes with the class of \emph{Explicit and Effectively Symmetric (EES)} Runge--Kutta schemes~\citep{shmelev2025explicit}. EES schemes relax exact reversibility to reversibility within a controlled tolerance, giving stable explicit methods that are empirically indistinguishable from truly symmetric schemes while retaining stability comparable to classical schemes such as RK3 and RK4. Importantly, unlike auxiliary-state reversible solvers, EES schemes retain the structure of ordinary Runge--Kutta methods making them more suited to geometric integration. However, the existing formulation in \citet{shmelev2025explicit} is limited to Euclidean ODEs: it does not treat stochastic dynamics, establish low-storage $2N$ realisations, or provide a construction capable of geometric integration over Lie groups.

Building on this prior work, we extend EES schemes to SDEs and derive Williamson $2N$ realisations, halving the constant-factor memory footprint of the resulting reversible solvers. We then use this $2N$ structure to construct a commutator-free lift to Lie groups. The resulting schemes, denoted $\mathrm{EES}_{\mathcal R}$ in Euclidean space and $\mathrm{CF\text{-}EES}$ on Lie groups, improve the stability of reversible Euclidean NSDE training and, to our knowledge, provide the first explicit near-reversible route to constant-memory training of manifold-valued NSDEs. Concretely, we make three main contributions:
\begin{enumerate}[]
    \item \textbf{Stable, reversible SDE integration.} We extend EES schemes to SDEs, obtaining explicit effectively symmetric integrators applicable to neural SDE training, with substantially better stability than existing explicit reversible solvers.
    \item \textbf{Geometric reversible integration.} We show that EES schemes admit a low-memory $2N$ formulation which naturally lifts to a commutator-free Lie group integrator family, $\mathrm{CF\text{-}EES}$. We then rigorously prove the convergence and order of these new SDE schemes.
    \item \textbf{Practical training benefit.} Practically, the improved stability of $\mathrm{EES}$ schemes enhances the accuracy of Euclidean NSDEs; while $\mathrm{CF\text{-}EES}$ cuts the memory requirements of Lie-group-valued NSDEs by an order of magnitude, or more.
\end{enumerate}
We release open-source implementations of the standard, low-memory, and Lie group EES schemes in \href{https://anonymous.4open.science/r/jax_geo_int/README.md}{JAX}, and \href{https://anonymous.4open.science/r/EffectivelySymmetric-jl/}{Julia} to facilitate adoption by the broader research community.

The paper is organized as follows. Section~\ref{sec:preliminaries} reviews reversible ODE solvers and their stability. Section~\ref{sec:ees} develops EES schemes for SDEs, including low-storage, adaptive, and manifold-valued variants, with convergence, stability, and backpropagation results. We compare our EES schemes against other reversible methods in Section~\ref{sec:experiments}, before concluding in Section~\ref{sec:conclusion}.

\section{Preliminaries}
\label{sec:preliminaries}
\paragraph{Existing reversible ODE and SDE solvers.}
The major drawback of classical reversible schemes is their low efficiency. It is well known that Runge--Kutta schemes are reversible only if they are implicit. More generally, symmetric parasitism-free general linear methods cannot be explicit ~\citep{butcher2016symmetric}. A limited number of efficient reversible solvers have been proposed in the literature. The asynchronous leapfrog integrator (ALF)~\citep{zhuang2021mali} for Neural ODEs overcomes the barrier of implicit schemes by tracking an additional state $v$ as part of the integration. Reversible Heun takes a similar approach for SDEs of the form $dy_t = g(t,y_t)dt + f(t,y_t)dW_t$. Although efficient, requiring only one evaluation of the drift $g$ and the diffusion $f$ per step, Reversible Heun is known to be inherently unstable.

\begin{theorem}{\cite[Theorem D.19]{kidger2021efficient}}\label{thm:rev_heun_stability}
    Suppose that the Reversible Heun method is used to obtain a solution $\{y_n, v_n\}_{n \geq 0}$ to the linear test ODE $dy = \lambda y\, dt$, where $\lambda \in \mathbb{C}$ and $y_0 \neq 0$. Then $\{y_n, v_n\}_{n \geq 0}$ is bounded if and only if $\lambda h \in [-i, i]$.
\end{theorem}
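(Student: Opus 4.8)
The plan is to observe that, applied to the linear test equation, the Reversible Heun recursion is a constant‑coefficient linear iteration for the pair $(y_n, v_n)\in\mathbb{C}^2$, so that boundedness is governed entirely by the spectrum of a single $2\times 2$ transfer matrix. First I would substitute $g(t,y)=\lambda y$, $f\equiv 0$ and $\Delta t = h$ into the update rule, use the $v$-equation to eliminate $v_{n+1}$ from the $y$-equation, and set $z:=\lambda h$, obtaining
\begin{equation*}
\begin{pmatrix} y_{n+1} \\ v_{n+1} \end{pmatrix}
= M \begin{pmatrix} y_n \\ v_n \end{pmatrix},
\qquad
M = \begin{pmatrix} 1+z & z^2/2 \\ 2 & z-1 \end{pmatrix}.
\end{equation*}
The two facts to record are $\operatorname{tr} M = 2z$ and $\det M = -1$; the determinant identity is precisely the algebraic reversibility of the scheme (indeed $M(z)^{-1} = M(-z)$) and is what forces the sharp dichotomy below.

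The characteristic polynomial $\mu^2 - 2z\mu - 1 = 0$ has roots $\mu_\pm = z \pm \sqrt{z^2+1}$ with $\mu_+\mu_- = -1$. Since the product of the eigenvalues has modulus $1$, the spectral radius of $M$ is always at least $1$, with equality exactly when both eigenvalues lie on the unit circle; moreover the powers $\{M^n\}_{n\ge 0}$ are bounded if and only if the spectral radius equals $1$ and every repeated unit‑modulus eigenvalue is semisimple. So the heart of the argument is the equivalence $|\mu_\pm| = 1 \iff z\in[-i,i]$. For the forward direction, if $|\mu_+| = 1$ then $\mu_- = -1/\mu_+ = -\overline{\mu_+}$, so $2z = \mu_+ + \mu_- = \mu_+ - \overline{\mu_+}$ is purely imaginary with $|2z| = 2|\operatorname{Im}\mu_+| \le 2$; for the converse, writing $z = is$ with $s\in[-1,1]$ gives $z^2+1 = 1-s^2 \ge 0$, hence $\mu_\pm = is \pm \sqrt{1-s^2}$ and $|\mu_\pm|^2 = s^2 + (1-s^2) = 1$.

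It then remains to assemble these with the behaviour of the actual orbit. On the interior of the segment the eigenvalues are distinct and unimodular, so $M$ is power‑bounded and every orbit — in particular the one produced by the scheme — stays bounded, and translating "$M^n$ bounded" back to "$\{y_n,v_n\}$ bounded" is immediate. At the two endpoints $z=\pm i$ the eigenvalue $\pm i$ is double and a one‑line computation shows $M\mp iI$ is a nonzero matrix squaring to zero, so $M$ is not diagonalisable; this is the degenerate boundary case, settled by evaluating $M^n$ directly. When $z\notin[-i,i]$, $M$ has a one‑dimensional unstable eigenspace, and one checks (using $y_0\neq 0$ to rule out the zero orbit) that the scheme's initialisation is not proportional to the complementary stable eigenvector, so the orbit escapes to infinity. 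The only step that is not routine bookkeeping is the spectral characterisation in the middle paragraph: the crucial point is recognising that the constraint $\mu_+\mu_- = -1$ pins both eigenvalues to the unit circle simultaneously — there is no stable/unstable splitting once $|\mu_\pm|=1$ — together with remembering to examine the defective boundary case rather than stopping at the spectral radius.
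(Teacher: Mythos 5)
Your reduction to the constant-coefficient iteration is correct and is essentially the same route as the cited proof in \cite[Theorem D.19]{kidger2021efficient}: with $g(t,y)=\lambda y$, $f\equiv 0$, $z=\lambda h$ one indeed gets $M=\begin{pmatrix}1+z & z^2/2\\ 2 & z-1\end{pmatrix}$ with $\operatorname{tr}M=2z$, $\det M=-1$, eigenvalues $\mu_\pm=z\pm\sqrt{z^2+1}$ satisfying $\mu_+\mu_-=-1$, and your observation that this determinant constraint forbids any stable/unstable splitting once one eigenvalue is unimodular is exactly the right key point. The equivalence $|\mu_\pm|=1\iff z\in[-i,i]$ is proved correctly in both directions.

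There is, however, a genuine gap at the two endpoints, which you defer with ``settled by evaluating $M^n$ directly'' without stating the verdict --- and the verdict goes against the closed-interval statement. At $z=\pm i$ you correctly note that $N:=M\mp iI$ is a nonzero nilpotent, so $M^n=(\pm i)^nI+n(\pm i)^{n-1}N$ grows linearly unless the initial vector lies in $\ker N=\operatorname{span}\{(1,2)^T\}$. But the Reversible Heun scheme is initialised with $v_0=y_0$, i.e.\ along $(1,1)^T$, which is \emph{not} in that kernel (indeed $(1,1)^T$ is an eigenvector of $M$ only when $z=0$); a direct iteration at $z=i$ from $(y_0,v_0)=(1,1)$ gives $|y_n|\sim n\sqrt{5}/2\to\infty$. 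So your argument, carried to completion, establishes boundedness exactly for $z\in(-i,i)$, with only marginal (linearly growing) stability at the endpoints --- the same phenomenon as for the classical leapfrog method. You should either state this explicitly (and note that the closed interval in the quoted statement is to be read up to this boundary degeneracy), or make precise which initialisation of $v_0$ you are assuming, since both the endpoint case and your divergence argument for $z\notin[-i,i]$ (``the initialisation is not proportional to the stable eigenvector'') silently depend on $v_0=y_0$, which neither you nor the surrounding text ever records.
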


As remarked in ~\cite{kidger2021efficient}, this domain is also the absolute stability region for the reversible asynchronous leapfrog integrator ~\citep{zhuang2021mali}. This instability has proven to be a significant bottleneck in certain practical applications~\citep{zhang2021path, mccallum2024efficient}. \citet{mccallum2024efficient} proposed a method to transform any ODE integration method $y_{n+1} = y_n + \Psi_h(t,y_n)$ into one which is reversible, by coupling the action of the integrator with both positive and negative step sizes. The method offers a way of constructing reversible schemes with larger stability domains than those of the ALF and Reversible Heun integrators \cite[Theorem 2.3]{mccallum2024efficient}. However, the resulting stability domain of the transformed method is typically much smaller than that of the underlying method $\Psi$, and depends additionally on the coupling parameter. The McCallum-Foster methods were subsequently extended by~\citep{blasingameRexReversibleSolvers2025} to \emph{REX solvers} -- a class of algebraically reversible exponential RK/SRK solvers specifically designed for diffusion-model inversion.

\paragraph{EES schemes for ODEs.}
EES schemes~\citep{shmelev2025explicit} are a class of explicit Runge--Kutta methods which offer an efficient approach to reversible integration without compromising on stability. Given positive integers $m \geq n$, an explicit Runge--Kutta scheme $\Phi_h$ is said to be an $\mathrm{EES}(n,m)$ scheme if $\Phi_h$ is of order $n$ and $\Phi_{-h} \circ \Phi_h$ recovers the initial condition of the ODE up to order $m$. When $m$ is large, such schemes exhibit near-reversible behaviour, which is often sufficient in practice. In~\cite{shmelev2025explicit}, Butcher tableaux are derived for 3-stage $\mathrm{EES}(2,5)$ and 4-stage $\mathrm{EES}(2,7)$ schemes. Although the emphasis there is mostly on $\mathrm{EES}(2,7)$, for our Neural SDE applications, we restrict attention to $\mathrm{EES}(2,5)$ as the additional accuracy of $\mathrm{EES}(2,7)$ does not justify the extra stage in the NSDE setting, as shown in Figure~\ref{fig:ees25_vs_27_grad_error}. Proposition~\ref{prop:EES_2_5} gives the general one-parameter family of $\mathrm{EES}(2,5;x)$ tableaux.

\pagebreak[3]
\begin{wrapfigure}[12]{r}{0.44\textwidth}
    \vspace{-1.2em}
    \centering
    \includegraphics[clip,width=\linewidth]{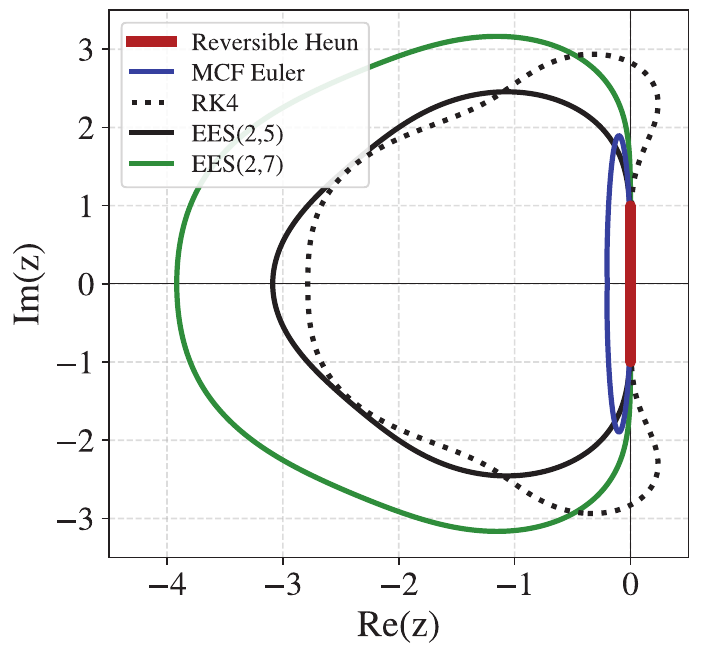}
    \captionsetup{oneside,margin={0.2cm,0cm}}
    \caption{\textit{Stability domains for $\mathrm{EES}(2,5)$ and $\mathrm{EES}(2,7)$ compared to RK4, MCF Euler, and Reversible Heun.}}
    \label{fig:ees25_stability}
    \vspace{-1em}
\end{wrapfigure}
\refstepcounter{proposition}\label{prop:EES_2_5}%
\noindent\textbf{Proposition \theproposition} (\cite[Proposition 8.4]{shmelev2025explicit})\textbf{.}\hspace{0.4em}\textit{For $x \in \mathbb{R} \setminus \{1,\pm \tfrac{1}{2}\}$, the 3-stage $\mathrm{EES}(2,5;x)$ Runge--Kutta scheme has Butcher tableau}\nopagebreak\par\nopagebreak\smallskip\nopagebreak
\noindent\hfill$\renewcommand{\arraystretch}{1.4}\setlength{\arraycolsep}{3pt}
\begin{array}{c|ccc}
0 &&&\\
\tfrac{1+2x}{4(1-x)} & \tfrac{1+2x}{4(1-x)} &&\\
\tfrac{3}{4(1-x)} & \tfrac{(4x-1)^2}{4(x-1)(1-4x^2)} & \tfrac{1-x}{1-4x^2} &\\
\hline
& x & \tfrac{1}{2} & \tfrac{1}{2} - x
\end{array}$\hfill\null
\par\smallskip

The stability region for $\mathrm{EES}(2,5)$ is comparable to that of classical methods such as Kutta's RK4, but significantly larger than those of Reversible Heun and the MCF methods. Theorem~\ref{thm:ees25_ode_stability} gives the exact form of this region for $\mathrm{EES}(2,5;x)$.

\begin{theorem}\label{thm:ees25_ode_stability}
    Suppose that, for $x \neq 1, \pm \frac{1}{2}$, $\mathrm{EES}(2,5;x)$ is used to obtain a solution $\{y_n\}_{n \geq 0}$ to the linear test equation $dy = \lambda y\, dt$, where $\lambda \in \mathbb{C}$ and $y_0 \neq 0$. Then $y_n \to 0$ as $n \to \infty$ if and only if
    \begin{equation*}
        \left\lvert 1 + \rho + \frac{1}{2} \rho^2 + \frac{1}{8} \rho^3 \right\rvert < 1, \qquad \rho = \lambda h.
    \end{equation*}
\end{theorem}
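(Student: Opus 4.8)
The plan is to compute the \emph{stability function} of $\mathrm{EES}(2,5;x)$ and verify that, as the statement already hints, it does not depend on the parameter $x$. For any Runge--Kutta method with Butcher tableau $(A,b)$ applied to the linear test equation $dy = \lambda y\,dt$, one step multiplies the solution by a rational function $R(\rho)$, i.e.\ $y_{n+1} = R(\rho)\,y_n$ with $\rho = \lambda h$, where $R(\rho) = 1 + \rho\, b^\top (I - \rho A)^{-1}\mathbf{1}$. Since $\mathrm{EES}(2,5;x)$ is a $3$-stage \emph{explicit} method, $A$ is strictly lower triangular, so $(I - \rho A)^{-1} = I + \rho A + \rho^2 A^2$ and the stability function is the cubic polynomial
\begin{equation*}
R(\rho) = 1 + \rho\,(b^\top \mathbf{1}) + \rho^2\,(b^\top A \mathbf{1}) + \rho^3\,(b^\top A^2 \mathbf{1}).
\end{equation*}

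First I would read off from the tableau in Proposition \ref{prop:EES_2_5} the abscissae (row-sum) vector $c = A\mathbf{1}$, namely $c_1 = 0$, $c_2 = \tfrac{1+2x}{4(1-x)}$, $c_3 = \tfrac{3}{4(1-x)}$, together with the weights $b = \bigl(x,\ \tfrac12,\ \tfrac12 - x\bigr)$. The four coefficients above are then evaluated directly: $b^\top\mathbf{1} = x + \tfrac12 + (\tfrac12 - x) = 1$; $b^\top A\mathbf{1} = b^\top c = \tfrac12 c_2 + (\tfrac12 - x)c_3 = \tfrac12$ after a one-line simplification (this also re-confirms that the scheme has order $2$); and $b^\top A^2\mathbf{1} = b^\top(Ac)$, where only the third component of $Ac$ is nonzero, equal to $a_{32}c_2 = \tfrac{1-x}{1-4x^2}\cdot\tfrac{1+2x}{4(1-x)} = \tfrac{1}{4(1-2x)}$, so that $b^\top A^2\mathbf{1} = (\tfrac12 - x)\cdot\tfrac{1}{4(1-2x)} = \tfrac18$. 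Hence $R(\rho) = 1 + \rho + \tfrac12\rho^2 + \tfrac18\rho^3$, independent of $x$; the hypotheses $x \neq 1, \pm\tfrac12$ are precisely what is needed to keep every denominator nonzero in this computation.

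Finally, iterating the one-step recursion gives $y_n = R(\rho)^n y_0$. Since $y_0 \neq 0$, this sequence tends to $0$ if and only if $|R(\rho)| < 1$ (when $|R(\rho)| = 1$ the moduli $|y_n|$ are constant and nonzero, and when $|R(\rho)| > 1$ they diverge), which is exactly the claimed condition $\bigl\lvert 1 + \rho + \tfrac12\rho^2 + \tfrac18\rho^3 \bigr\rvert < 1$.

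I expect the only genuine work to be the cancellation of the $x$-dependence in the $\rho^2$ and $\rho^3$ coefficients: a short but slightly delicate algebraic check that hinges on the factorisation $1 - 4x^2 = (1-2x)(1+2x)$ and on the specific forms of $a_{32}$ and $c_2$ in the tableau. The remainder is the standard linear-stability argument and requires no further ideas.
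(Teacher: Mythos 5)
Your proposal is correct and follows exactly the route the paper indicates: the paper reduces the theorem to "a direct computation of the stability function $R(\rho)=1+\rho+\tfrac12\rho^2+\tfrac18\rho^3$" and omits that computation, while you carry it out explicitly (and correctly — the coefficients $b^\top\mathbf{1}=1$, $b^\top A\mathbf{1}=\tfrac12$, $b^\top A^2\mathbf{1}=\tfrac18$ all check out, with the $x$-dependence cancelling as you describe). Your write-up is simply a fully detailed version of the paper's omitted argument.
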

This follows by a direct computation of the stability function
$R(\rho)=1+\rho+\frac{1}{2}\rho^2+\frac{1}{8}\rho^3$,
which is independent of $x$. Following \citet[Section 8.1]{shmelev2025explicit}, we fix $x=1/10$ to minimise leading error, and refer to $\mathrm{EES}(2,5;1/10)$ as \textit{the} $\mathrm{EES}(2,5)$ scheme.

%%%%%%%%%%%%%%%%%%%%%%%%%%%%%%%%%%%%%%%%%%%%%%%%%%%%%%%%%%%%
\section{Explicit and Effectively Symmetric Schemes for SDEs on Lie Groups}\label{sec:ees}

This section develops the main technical contributions of the paper. We extend the Euclidean EES schemes of \citet{shmelev2025explicit} from the ODE setting to SDEs, establish their mean-square stability, show that all $\mathrm{EES}(2,5;x)$ and $\mathrm{EES}(2,7;x)$ schemes admit an efficient Williamson $2N$-storage form, and lift them via Bazavov's commutator-free construction to a new family of effectively symmetric $\mathrm{CF\text{-}EES}$ integrators on arbitrary homogeneous spaces. To our knowledge, $\mathrm{CF\text{-}EES}$ is the first explicit (near-)reversible integrator in this setting, unlocking the reversible-adjoint backpropagation pipeline for manifold-valued neural SDEs.

\paragraph{EES schemes for SDEs.} We apply EES schemes to SDEs in the canonical way. Given an SDE $\mathrm{d}y_t = f(y_t)dt + g(y_t) \circ\mathrm{d}W_t$, we rewrite the equation as being driven by $X_t\coloneqq(t, W_t)$ with vector field $(f,g)$ and apply the Runge--Kutta scheme with increments $\mathrm{d}X_t$ in place of $h$. For Brownian drivers, this gives the usual strong order $1/2$, and weak order $1$. More generally, the same construction applies to differential equations driven by rough paths (RDEs) following \cite{redmann2020runge}, with the
convergence rate governed by the driver regularity. For a detailed description of EES schemes applied to RDEs, we refer the reader to Appendix~\ref{appendix:RDE}. Convergence rates for EES schemes applied to RDEs (of which SDEs are a special case) are given in Appendix~\ref{appendix:convergence_simplified}, followed by convergence experiments in Appendix~\ref{app:convergence_experiments}.

%%%%%%%%%%%%%%%%%%%%%%%%%%%%%%%%%%%%%%%%%%%%%%%%%%

\paragraph{Stability.}
%%%%%%%%%%%%%%%%%%%%%%%%%%%%%%%%%%%%%%%%%%%%%%%%%%

As discussed in the introduction, $\mathrm{EES}$ schemes offer a stable alternative to reversible integration. While the stability of $\mathrm{EES}$ in the case of ODEs has been studied in \citet{shmelev2025explicit} and Section~\ref{sec:preliminaries}, we are interested in the stability of $\mathrm{EES}$ when applied to stochastic drivers for our applications to Neural SDEs. To evaluate the stability in the context of SDEs, we consider the \textit{mean-square stability}, which is widely used for analysis of stochastic integration methods in the literature ~\citep{higham2000mean, drummond1991computer, hernandez1993convergence, komori1995stahle, komori1994some, petersen1998general, saito1993t, saito1996stability, schurz1996asymptotical}. Given the test equations $\mathrm{d}y_t = \lambda y_t \mathrm{d}t + \mu y_t \mathrm{d}W_t$, where $\lambda, \mu \in \mathbb{C}$ and $y_0 \neq 0$ almost surely, a solution $\{y_n\}_{n \geq 0}$ derived from a numerical integrator is said to be \textit{mean-square stable} if $\lim_{n \to \infty} \mathbb{E}(|y_n|^2) = 0$. It follows in a similar fashion to Theorem \ref{thm:ees25_ode_stability} that $\mathrm{EES}(2,5;x)$ applied to this test equation is mean-square stable if and only if
\begin{equation*}
    \mathbb{E} \left[ \left\lvert 1 + \rho + \frac{1}{2} \rho^2 + \frac{1}{8} \rho^3 \right\rvert^2 \right] < 1,
\end{equation*}
where $\rho = \lambda \mathrm{d}t + \mu \mathrm{d}W_t \sim N(\lambda \mathrm{d}t, \mu^2 \mathrm{d}t)$. Figure
\ref{fig:ees_stoch_stability} shows 4 cross-sections of the stability domain, compared to those of RK3 and RK4. Along most cross-sections, $\mathrm{EES}(2,5)$ achieves similar
or greater stability than RK3 and RK4.

\begin{figure}[ht]
\centering
    \includegraphics[width = \textwidth,trim={0.95cm .3cm 0cm .4cm},clip]{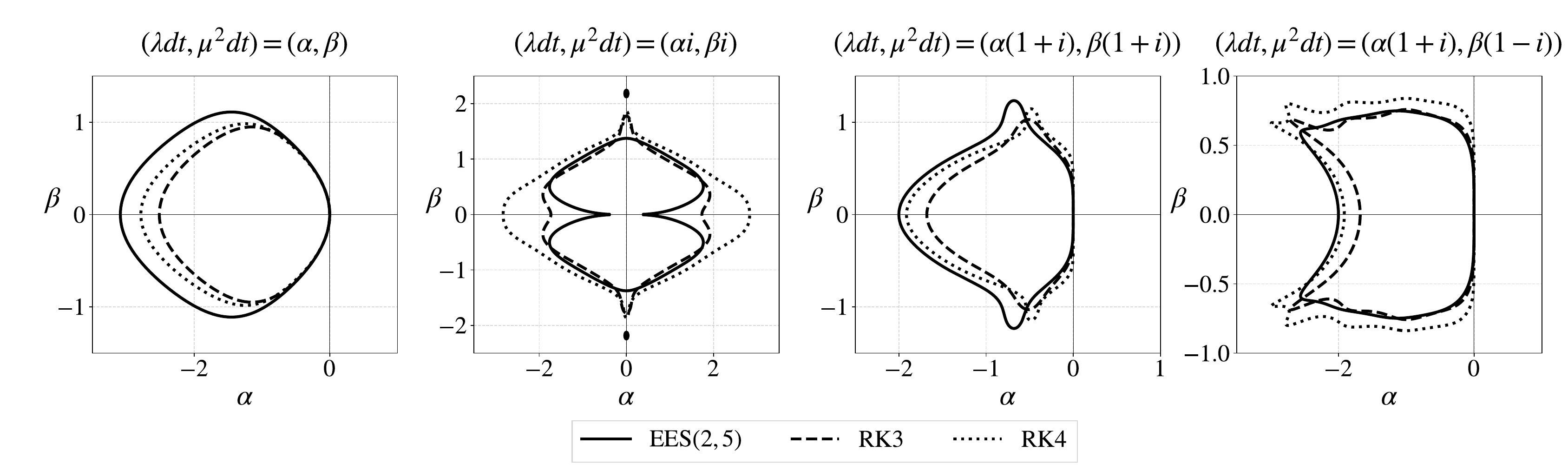}
    \caption{\textit{Cross sections of the mean-square stability domains of $\mathrm{EES}(2,5)$, RK3 and RK4.}}
    \label{fig:ees_stoch_stability}
\end{figure}

\paragraph{A $2N$ realization of EES Schemes.}
\label{sec:2n-ees}

The memory footprint of a classical Runge--Kutta step is proportional to the number of stages and thus becomes costly when the problem dimensionality is large. In \cite{williamson_low-storage_1980}, Williamson proposed a memory-efficient restructuring of the operations of a Runge--Kutta scheme into the following form
\begin{equation}
\label{eq:2n_rk}
\begin{aligned}
    \Delta Y_i &= A_i \Delta Y_{i-1} + h\, f(Y_{i-1}), \\
    Y_i &= Y_{i-1} + B_i \Delta Y_i,
\end{aligned}
\qquad i=1,\dots,s.
\end{equation}
Thus, for a state of size $N$, an $s$-stage implementation requires only $2N$ registers, compared with $(s+1)N$ for a standard explicit Runge--Kutta method. The conditions under which a Runge--Kutta scheme admits a Williamson 2N representation were formulated by \citet{bazavov_2n-storage_2025}.

\begin{theorem}[{\cite[Theorem~2]{bazavov_2n-storage_2025}}]
\label{thm:2N_relationship}
An explicit Runge--Kutta method admits a Williamson $2N$ representation if and only if
\begin{equation}\label{eq:williamson_2n_cond}
a_{ij}(b_{j-1}-a_{j,j-1})=(a_{i,j-1}-a_{j,j-1})b_j,
\qquad i=3,\dots,s,\quad j=2,\dots,i-1.
\end{equation}
\end{theorem}

In particular, a direct check of the conditions \eqref{eq:williamson_2n_cond} for $\mathrm{EES}(2,5;x)$ and $\mathrm{EES}(2,7;x)$ shows that both of these classes admit the Williamson 2N form.

\begin{proposition}
\label{thm:all-ees25-are-2n}
$\mathrm{EES}(2,5;x)$ and $\mathrm{EES}(2,7;x)$ are Williamson 2N for any admissible parameter $x$.
\end{proposition}
This formulation reduces the memory requirements of $\mathrm{EES}(2,5)$ and $\mathrm{EES}(2,7)$ from $4N$ and $5N$, respectively, to $2N$. As we will see in the following sections, the Williamson 2N formulation not only significantly reduces the memory footprint of the scheme but also allows lifting the scheme to commutator-free methods over Lie groups, or, more generally, homogeneous spaces.

\paragraph{Homogeneous spaces.}
A \emph{homogeneous space} is a smooth manifold $\mathcal{M}$ together
with a transitive action $\Lambda \colon G \times \mathcal{M} \to
\mathcal{M}$ of a Lie group $G$. Equivalently $\mathcal{M} \cong G/H$
for the isotropy subgroup $H \leq G$ at any chosen base point. We
write $\mathfrak{g} = T_e G$ for the Lie algebra of $G$. A vector field
$F$ on $\mathcal{M}$ is represented through a state-dependent generator
$\xi \colon \mathcal{M} \to \mathfrak{g}$ via the \emph{fundamental
vector field}
\[
\xi(y)_{\mathcal{M}}(y) := \frac{\mathrm{d}}{\mathrm{d}t}\bigg|_{t=0}
\Lambda\bigl(\exp(t\,\xi(y)),\, y\bigr),
\qquad F(y) = \xi(y)_{\mathcal{M}}(y),
\]
the infinitesimal version of the group action
(see Appendix~\ref{subsec:homogeneous_space_integration}).

\paragraph{Commutator-free methods.}
A natural strategy to integrate $F$ on $\mathcal{M}$ is the Munthe-Kaas (RKMK) approach~\citep{munthe-kaasRungeKuttaMethodsLie1998}, which uses the exponential map to pull each step back to an equation on the Lie algebra and integrates it there with a classical Runge--Kutta scheme. However, achieving an order higher than two requires evaluating nested commutators of the stage generators, which is computationally expensive. \emph{Crouch--Grossman (CG) methods}~\citep{crouch_numerical_1993} avoid this by composing single-slope exponentials of the form $\exp(h\,\alpha_{ij} K_j)$, with $K_j = \xi(Y_{j-1}) \in \mathfrak{g}$, so that no Lie brackets appear. \emph{Commutator-free (CF) methods}~\citep{celledoni_commutator-free_2003} generalise this by allowing each exponential's argument to be a real linear combination $\sum_j a_{l;ij} K_j$ of stage generators. More efficient CF variants further minimise the number of exponentials and storage required per step by reusing exponentials across stages~\citep{bazavov_commutator-free_2020}. Full descriptions of RKMK, CG, and CF methods are given in Appendices~\ref{subsec:rkmk_integrators}--\ref{subsec:cf_integrators}.

\paragraph{Bazavov's $2N$ commutator-free lift.}
\citet{bazavov_commutator-free_2020} showed that any explicit Williamson $2N$ Runge--Kutta scheme can be lifted to a commutator-free method on a homogeneous space. The Euclidean update $Y_l = Y_{l-1} + B_l \delta_l$ is replaced by the action $Y_l = \Lambda(\exp(B_l \delta_l), Y_{l-1})$, while the increment recurrence $\delta_l = A_l \delta_{l-1} + h K_l$ on the Lie algebra is unchanged. Concretely, given a Williamson $2N$ scheme with coefficients $(A_l, B_l)_{l=1}^s$ with $A_1 = 0$, the update rule from $y_n \in \mathcal{M}$ to $y_{n+1}$ is
\begin{equation}
\label{eq:bazavov-cf-lift}
\begin{aligned}
  Y_0 &:= y_n, \qquad \delta_0 := 0, \\
  K_l &= \xi(Y_{l-1}), \\
  \delta_l &= A_l\, \delta_{l-1} + h\, K_l, \\
  Y_l &= \Lambda\bigl(\exp(B_l\, \delta_l),\, Y_{l-1}\bigr),
  \qquad l = 1, \ldots, s,
\end{aligned}
\end{equation}
and $y_{n+1} \coloneqq Y_s$. Here $Y_l \in \mathcal{M}$ are the internal stage values and $\delta_l \in \mathfrak{g}$ is the current Lie-algebra increment; only these two quantities are stored at any time, preserving the two-register low-storage pattern. On a flat manifold ($\Lambda(\exp(v), y) = y + v$, $\xi(y) = f(y)$), the recurrence collapses to \eqref{eq:2n_rk}.

\paragraph{The $\mathrm{CF\text{-}EES}$ family.}
Since $\mathrm{EES}(2,5;x)$ and $\mathrm{EES}(2,7;x)$ are Williamson $2N$ for every admissible $x$ (Proposition~\ref{thm:all-ees25-are-2n}), Bazavov's lift~\eqref{eq:bazavov-cf-lift} applies and produces the $\mathrm{CF\text{-}EES}(2,5;x)$ and $\mathrm{CF\text{-}EES}(2,7;x)$ schemes on any homogeneous space $\mathcal{M}$. Substituting the $2N$ coefficients of Section~\ref{sec:2n-ees} into~\eqref{eq:bazavov-cf-lift} gives the explicit recurrences with $s = 3$ and $s = 4$ stages respectively; the explicit reused-stage form of $\mathrm{CF\text{-}EES}(2,5;x)$ is recorded in Appendix~\ref{subsec:cfees25-general-form} and memory-compute optimality is shown in \ref{app:lie_integrator_complexity}.

\begin{remark}
    We note that, among existing explicit reversible solvers, the above lift to a commutator-free method is unique to EES thanks to its Runge--Kutta form and Williamson 2N properties. As Reversible Heun and McCallum-Foster methods are not of Runge--Kutta form, they do not admit an analogous lift. A manifold extension of those schemes would require replacing affine state operations with non-canonical group operations.\footnote{Affine combinations are not well-defined on a general manifold. On a Lie group, logarithmic coordinates supply a substitute, but only after fixing a base point or trivialisation.}
\end{remark}

\paragraph{Order and reversibility of CF-EES on homogeneous spaces.}
The order and near-reversibility properties of the Euclidean EES schemes carry over to the manifold lift.

\begin{theorem}\label{thm:cfees-order-main}
For every admissible $x$ and every homogeneous space $\mathcal{M} = G/H$, $\mathrm{CF\text{-}EES}(2,5;x)$ applied to an ODE on $\mathcal{M}$ is of order $2$, and $\Phi_{-h} \circ \Phi_h$ recovers the initial condition up to order $5$. The analogous statement holds for $\mathrm{CF\text{-}EES}(2,7;x)$ with recovery up to order $7$.
\end{theorem}

The proof follows a symbolic computation of the Lie-Butcher series for $\mathrm{CF\text{-}EES}(2,5;x)$ and $\mathrm{CF\text{-}EES}(2,7;x)$ in terms of the free parameter $x$, which is compared to the Lie-Butcher series of the true solution to the ODE to extract the local order of the scheme. For the recovery of the initial condition, the LB series of $\Phi_{-h} \circ \Phi_h$ is computed similarly and compared to $0$. For further details, the reader is referred to Appendix~\ref{sec:cfees-order-conditions}.

\begin{remark}
The corresponding statement for the local order of the SDE schemes follows naturally via the formalism of rough path theory. The argument follows that of \cite{redmann2020runge} for Euclidean spaces, and is detailed in Appendix~\ref{sec:cfees-rde-extension}.
\end{remark}

\paragraph{Backpropagation through EES and CF-EES.}
Algorithm~\ref{alg:rk_backprop} of Appendix \ref{app:backprop_through_explicit_rk_methods} describes backpropagation through the Euclidean $\mathrm{EES}$ integrator for general RDEs. The commutator-free analogue (Algorithm~\ref{alg:homogeneous_cf_backprop}, Appendix \ref{app:backprop_through_homogeneous_cf_methods}) differs mainly in that the adjoint evolves on the cotangent bundle.

\section{Experiments}
\label{sec:experiments}

We evaluate the performance of $\mathrm{CF\text{-}EES}(2,5)$ on a range of problems covering Euclidean spaces, Lie groups and homogeneous spaces. Suitable reversible SDE baselines are limited in these settings. At the time of writing, Reversible Heun \cite{kidger2021efficient} is the only widely adopted explicit reversible solver. To broaden this comparison, we adapt the construction of \citet{mccallum2024efficient} for reversible ODE solvers to the SDE versions of the Euler and Explicit Midpoint methods. Algorithm~\ref{alg:rk_backprop}, together with the backpropagation procedure of \citet{mccallum2024efficient}, then enables efficient differentiation through the resulting schemes. The focus of our Euclidean experiments is the superior stability of EES schemes compared to these schemes.\par

Neither method admits a lift to Lie group integrators, and we are not aware of other explicit reversible schemes for Lie groups. Therefore, for problems on Lie groups or homogeneous spaces, we compare against non-reversible methods using memory-intensive full adjoints and emphasize the memory advantages of $\mathrm{CF\text{-}EES}$ over these schemes. We report metrics with two standard deviations, as well as runtimes and memory, in the Lie-group case. Appendix~\ref{app:additional_experiments} contains additional experiments on unstable/stiff dynamics, finance, and molecular dynamics.

\begin{wrapfigure}[15]{R}{0cm}
    \includegraphics[width = 0.40\textwidth,trim={4.5mm 4mm 4.1mm 0mm},clip]{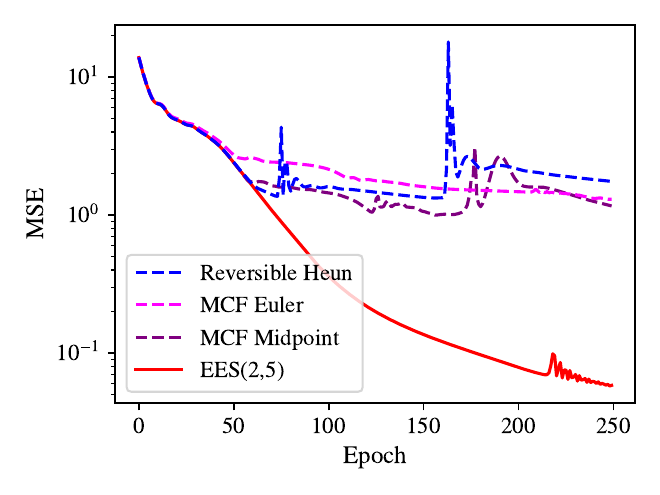}
    \captionsetup{oneside,margin={0.5cm,0cm}}
    \caption{\textit{Training MSE for OU dynamics with a fixed $f,g$ evaluation count.}}
    \label{fig:lsde}
\vspace{-0.5em}
\end{wrapfigure}
\paragraph{High volatility Ornstein–Uhlenbeck process.}
Consider learning the Ornstein–Uhlenbeck (OU) dynamics $\mathrm{d}y_t = \nu(\mu - y_t) \mathrm{d}t + \sigma \mathrm{d}W_t, y_0 \in \mathbb{R}$, under a high-volatility regime $\sigma \gg 0$. Specifically, we take $\nu=0.2, \mu=0.1$ and $\sigma=2$. Motivated by ~\citet{oh2024stable}, we take a Neural Langevin SDE (LSDE) defined by
\begin{equation*}
    \mathrm{d}z_t = g(z_t; \theta_g) \mathrm{d}t + f(t; \theta_f) \circ \mathrm{d}W_t,
\end{equation*}

with $z_0 = h(\mathbf{x}, \theta_h) \in \mathbb{R}^{d_z}$, where $h$ is a learnable affine function of the input data $\mathbf{x} = \{x_n\}_{n\geq 0}$, $x_n \in \mathbb{R}^2$, sampled from the true OU dynamics, and $g,f$ are neural networks parametrised by $\theta_g,\theta_f$ respectively. Architecture, training schedule, and loss are detailed in Appendix~\ref{app:ou_details}.\par\smallskip

Figure \ref{fig:lsde} shows the training loss using Reversible Heun, McCallum-Foster (MCF) methods, and $\mathrm{EES}(2,5)$, with the step size chosen such that the number of evaluations of $f,g$ is fixed between solvers. Such a choice yields comparable runtimes across all solvers, enabling a fair comparison. Table \ref{table:lsde} gives the number of evaluations of $f,g$ per step of the solvers, the chosen step size, the terminal MSE, and the total runtime of each solver. From Figure \ref{fig:lsde}, we see that for the initial $\sim 50$ epochs, the methods perform similarly. After this, $\mathrm{EES}(2,5)$ significantly outperforms the other methods, suggesting the model has begun to learn high-volatility dynamics which cause instability in the Reversible Heun and McCallum-Foster methods.\par\smallskip

\begin{table}[ht]
\centering
\caption{Metrics for OU dynamics. The step size is chosen such that the total number of evaluations of $f,g$ per integration is fixed.}
\label{table:lsde}
\renewcommand{\arraystretch}{1.1}
\begin{tabular}{lcccc}
\toprule
Method & \# Eval. / Step & Step Size & Terminal MSE & Runtime (s) \\
\midrule
Reversible Heun & 1 & $1/12$ & 1.02 & 368.2 \\
MCF Euler & 2 & $1/6$ & 1.30 & 307.5 \\
MCF Midpoint & 4 & $1/3$ & 1.17 & 279.5 \\
$\mathrm{EES}(2,5)$ & 3 & $1/4$ & 0.05 & 261.3 \\
\bottomrule
\end{tabular}
\vspace{-0.5em}
\end{table}

\paragraph{Stochastic Volatility.}
\label{subsec:stochastic_volatility}

Following the empirical observations of \citet{gatheral_volatility_2014}, rough volatility models posit that log-volatility behaves as a rough fractional process, with Hurst parameter \(H \ll \tfrac{1}{2}\). In direct discretisations of the rough driver, reduced regularity sharply increases the timestep budget required to attain a target error \(\varepsilon\), from \(\varepsilon^{-2}\) in the Brownian case to \(\varepsilon^{-6.25}\) when \(H=0.33\) \cite{friz2014convergence}. While finite-dimensional Markovian lifts can mitigate this, they do so at the cost of a substantially enlarged state space. This makes rough volatility a natural stress test for reversible solvers: long trajectories are required, and the benefit of \(\mathcal{O}(1)\) adjoint memory grows with more timesteps.

We therefore train a neural SDE with the same architecture and hyperparameters as in \eqref{eq:neural_sde} on rough Bergomi dynamics using a signature kernel score-matching objective \cite{issa2023non}. We use a generous fixed total evaluation budget to ensure good path fidelity and to reveal the runtime advantages of the $2N$ recurrence at high evaluation counts. Table~\ref{table:rough_bergomi} shows that, in this regime, all methods attain comparable terminal MSE, while \(\mathrm{EES}(2,5)\) is the fastest by a clear margin. Thus, in the long-horizon setting, \(\mathrm{EES}(2,5)\) preserves the accuracy of the reversible baselines while achieving the most favourable runtime among the reversible methods considered. We present additional results in Appendix~\ref{app:further_stoch_vol} for Black-Scholes, classical Bergomi, a local stochastic volatility model, the Heston model, rough Heston model, and quadratic rough Heston model.

\begin{table}[t]
\centering
\caption{Metrics for rough Bergomi dynamics. The step size is chosen such that the total number of evaluations of $f,g$ per integration is fixed.}
\label{table:rough_bergomi}
\renewcommand{\arraystretch}{1.1}
\begin{tabular}{lcccc}
\toprule
Method & \# Eval. / Step & Step Size & Terminal MSE & Runtime (s) \\
\midrule
Reversible Heun & 1 & $1/504$ & $7.81 \scriptstyle{\pm 1.06}$ & 999.3 \\
MCF Euler & 2 & $1/252$ & $7.81 \scriptstyle{\pm 1.06}$ & 928.9 \\
MCF Midpoint & 4 & $1/126$ & $7.81 \scriptstyle{\pm 1.06}$ & 519.4 \\
$\mathrm{EES}(2,5)$ & 3 & $1/168$ & $7.81 \scriptstyle{\pm 1.06}$ & 405.4 \\
\bottomrule
\end{tabular}
\vspace{-0.5em}
\end{table}

\paragraph{Stochastic Kuramoto network on $T\mathbb{T}^{N}$.}
\label{subsec:experiment_torus_sde}

\begin{wrapfigure}[31]{r}{0.40\textwidth}
    \centering
    \begin{subfigure}[t]{0.40\textwidth}
        \centering
        \includegraphics[width=0.85\linewidth]{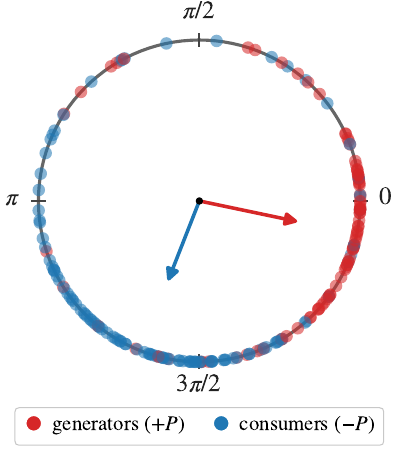}
        \caption{$200$ Kuramoto oscillators on $\mathbb{T}^{200}$ at partial synchronisation. At full synchronisation, the arrows would be superimposed.}
        \label{fig:kuramoto_trajectory}
    \end{subfigure}\\
    \vspace{1em}
    \begin{subfigure}[t]{0.40\textwidth}
        \centering
        \includegraphics[width=\linewidth]{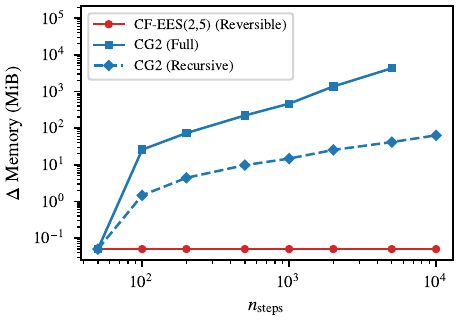}
        \caption{Memory complexity of CFEES and CG2 on the stochastic Kuramoto problem on $T\mathbb{T}^{1000}$ using different adjoints.}
        \label{fig:kuramoto_memory_scaling}
    \end{subfigure}
\end{wrapfigure}

The Kuramoto model \citep{acebron2005kuramoto} describes a network of $N$ coupled oscillators with phases $\theta_i \in S^{1}$ that interact through their pairwise differences. The model captures synchronisation phenomena across a wide range of physical and biological systems, such as circadian rhythms \cite{Lu2016}, neuronal oscillations \cite{Kitzbichler2009, Breakspear2010}, and the rotor-angle dynamics of synchronous machines on a power grid \citep{filatrella2008analysis}. Stochastic variants \citep{schmietendorf2014self,schafer2018dynamically} add the ability to model noise in the system. We use the second-order stochastic form of \citet[eq.~(1) with $K_2 = 0$]{olmi2024stochastic},
\begin{equation}\label{eq:kuramoto_2nd_order}
\begin{gathered}
m\,\ddot{\theta}_i \;=\; -\dot{\theta}_i \;+\; \Omega_i \;+\; \tfrac{K}{N}\textstyle\sum_j \sin(\theta_j - \theta_i) \;+\; \xi_i(t),\\
\langle \xi_i(t)\xi_j(s) \rangle = 2D\,\delta_{ij}\delta(t-s).
\end{gathered}
\end{equation}
with bimodal natural frequencies $\Omega_i \in \{+P, -P\}$ as in the power-grid generator/consumer split of \citet{filatrella2008analysis}.
Equation~\eqref{eq:kuramoto_2nd_order} evolves on the product Lie group $T\mathbb{T}^{N} \cong \mathbb{T}^{N}\times\mathbb{R}^{N}$. We train a neural SDE on $T\mathbb{T}^{N}$, with MLP drift and diffusion fields whose inputs are the periodic encoding $(\sin\theta, \cos\theta, \omega) \in \mathbb{R}^{3N}$ and whose outputs lie in the Lie algebra $\mathbb{R}^{2N}$. We train against synthetic trajectories of \eqref{eq:kuramoto_2nd_order} in the partial-synchronisation regime ($K=2$, $P=0.5$, $D=0.05$), using a multi-horizon wrapped energy score \citep{gneiting2007strictly}. Figure~\ref{fig:kuramoto_memory_scaling} shows the observed memory complexity of the training using $\mathrm{CF\text{-}EES}(2,5)$ with the reversible adjoint and CG2 with the recursive and full adjoints. Table~\ref{tab:kuramoto_quality} reports the test energy score of each method. $\mathrm{CF\text{-}EES}(2,5)$ attains a test energy score within roughly one standard deviation of the CG2 baselines whilst maintaining $\mathcal{O}(1)$ memory complexity. Architecture, training, and adjoint diagnostics are in Appendix~\ref{app:torus_details}.

\begin{table}[ht]
\centering
\caption{Test energy score on the stochastic Kuramoto problem. Step sizes are chosen so that all methods use the same number of vector-field evaluations per integration.}
\label{tab:kuramoto_quality}
\renewcommand{\arraystretch}{1.1}
\begin{tabular}{lccccc}
\toprule
Method & Adjoint & \#Eval. / Step & Step size & Test ES $\downarrow$ & Runtime (s) \\
\midrule
CG2 & Full & 2 & $1/75$ & $370.30 \scriptstyle{\pm 0.78}$ & $2{,}988$ \\
CG2 & Recursive & 2 & $1/75$ & $370.27 \scriptstyle{\pm 0.76}$ & $3{,}994$ \\
$\mathrm{CF\text{-}EES}(2,5)$ & Reversible & 3 & $1/50$ & $392.77 \scriptstyle{\pm 15.55}$ & $2{,}892$ \\
\bottomrule
\end{tabular}
\end{table}

\paragraph{Latent SDE on the sphere.}
\label{sec:sphere_latent_sde}

We reproduce the experiment of \citet{zeng2023latent} and train a variational latent SDE on the unit sphere $S^{n-1}$, viewed as the homogeneous space $\mathrm{SO}(n)/\mathrm{SO}(n-1)$, for human-activity classification on the UCI Human Activity benchmark \cite{reyesortiz2013human}. This dataset comprises 12-dimensional time series of wearable motion sensor readings, each labelled with one of seven activity classes at each time point. An mTAN attention encoder summarises the input sequence into\begin{wrapfigure}[15]{r}{0.42\textwidth}
    \centering
    \includegraphics[trim={1mm 1mm 1mm 1mm},clip,width=\linewidth]{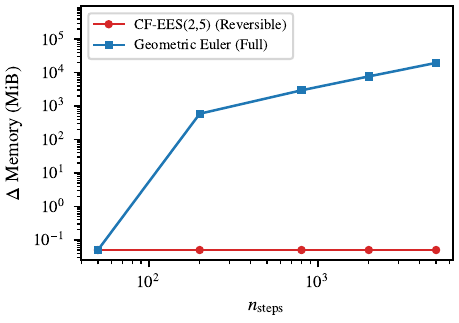}
    \caption{Memory complexity of CFEES and Geo E-M on the UCI Human Activity problem on $S^{15}$ using different adjoints.}
\label{fig:sphere_memory}
    \vspace{-3em}
\end{wrapfigure} a fixed-length context vector $h$ that conditions both the initial-state distribution $q(z_0\mid h)$ and the time-varying drift of a latent SDE on $S^{15}$. An MLP decoder maps each resulting latent state $z_t$ back to a Gaussian likelihood over the observed sensors, and a per-timepoint head predicts the activity class from $z_t$. \citet{zeng2023latent} use the geometric Euler-Maruyama integrator (Geo E-M), with the discretise-then-optimise (full) adjoint. We instead test the performance of $\mathrm{CF\text{-}EES}(2,5)$ with the reversible adjoint and also examine a stochastic RKMK (SRKMK) \cite{Muniz2023} form of ShARK \cite{foster2023high} with strong order 1 to see whether improved stochastic order aids learning.

Figure \ref{fig:sphere_memory} shows the observed peak memory requirement of one forward and backward pass of the model as a function of the number of steps taken by the latent SDE. As expected, the memory requirement of Geometric Euler-Maruyama grows linearly in the number of steps, while $\mathrm{CF\text{-}EES}(2,5)$ with the reversible adjoint retains $\mathcal{O}(1)$ memory complexity. At the same network-evaluation budget, SRKMK ShARK gives only a marginal accuracy improvement over $\mathrm{CF\text{-}EES}(2,5)$, despite its higher nominal stochastic order. This suggests that the lower strong order of $\mathrm{CF\text{-}EES}(2,5)$ is not a limiting factor in latent SDE learning.

\begin{table}[ht]
    \centering
    \caption{UCI Human Activity classification accuracy. The step size is chosen to keep the total number of NN evaluations per integration fixed.}
    \label{tab:sphere_parity}
    {\renewcommand{\arraystretch}{1.1}
    \begin{tabular}{lccccc}
        \toprule
        Method & Adjoint & \#Eval. / Step & Step Size & Test accuracy (\%) & Runtime (s) \\
        \midrule
        Geo E-M \citep{zeng2023latent} & Full & 1 & 1/30 & $86.25 \scriptstyle{\pm 0.76}$ & 366.3 \\
        $\mathrm{CG}2$ & Full  & 2 & 1/15 & $88.17 \scriptstyle{\pm 1.23}$ & 734.6 \\
        $\mathrm{CF\text{-}EES}(2,5)$ & Reversible  & 3 & 1/10 & $88.30 \scriptstyle{\pm 0.37}$ & 405.5 \\
        SRKMK ShARK & Full  & 3 & 1/10 & $88.66 \scriptstyle{\pm 0.59}$ & 657.2 \\
        \bottomrule
    \end{tabular}}
\end{table}

\section{Conclusions, limitations and future work}
\label{sec:conclusion}
In this paper, we have extended Explicit and Effectively Symmetric (EES) schemes \citep{shmelev2025explicit} to SDEs using the rough RK framework of \citet{redmann2020runge}. We have shown that the resulting schemes are stable via mean-square stability analysis, and that the resulting integrators admit clean generalisation to homogeneous spaces while retaining their $2N$ memory-optimality. These developments enable more stable training of Euclidean neural SDEs and unlock stable, constant-memory training of manifold neural SDEs, with applications in finance, biology, and robotics.
\vspace{-0.33em}
\paragraph{Limitations.}
Our constructions are fixed-step, and adaptive step sizing is nontrivial: step rejection requires restoring the previous state, which is incompatible with the two-register reversible implementation and would require a $3S^\ast$ reformulation \cite{KETCHESON20101763}. Additionally, on homogeneous spaces, local error estimates must be compared in a common linear space, e.g.\ via logarithmic maps or tangent-space trivialisations \cite[Chapter.~10]{Iserles_Munthe-Kaas_Nørsett_Zanna_2000}. A second structural limitation is that $\mathrm{CF\text{-}EES}$ relies intrinsically on a homogeneous-space representation with a transitive group action and tractable exponential map, and so does not apply to general Riemannian manifolds lacking such algebraic structure. Recent work \cite{bronascoHighOrderIntegration2025} suggests that such a generalisation is possible, albeit with different technical machinery.
\vspace{-0.33em}
\paragraph{Future directions.}
There are several potential extensions to this paper that are left for future research. Applications of EES schemes to more complicated models, including but not limited to Neural Jump SDEs \citep{jia2019neural, herrera2020neural}, Neural CDEs \citep{kidger2020neural} may be of interest, as would further examination of EES for accelerating Neural RDEs \citep{morrill2021neural}. An extension of EES schemes to include partitioned or adaptive step-size schemes would be valuable for training stiff neural differential equations. Further extension of our rough RK construction to higher levels of the signature suggests the possibility of a log-ODE-free neural RDE, escaping the primary cost of such methods \cite{redmannStateoftheArtNumericalSchemes2024}.

\paragraph{Reproducibility.} All experiments are available at \href{https://github.com/daniil-shmelev/EES-Neural-SDEs}{https://github.com/daniil-shmelev/EES-Neural-SDEs}, and the three released packages can be found at the links provided in Table~\ref{tab:new_packages}.

\begingroup
\sloppy
\bibliography{neurips_references}
\bibliographystyle{plainnat}
\endgroup
\clearpage

\appendix
\startcontents[appendices]
\renewcommand{\contentsname}{\textsc{Appendices}}
\printcontents[appendices]{}{1}{\setcounter{tocdepth}{2}}

\clearpage

\section{Algebraic Background}
\label{app:algebraic_background}
\subsection{The Connes-Kreimer Hopf Algebra}
%%%%%%%%%%%%%%%%%%%%%%%%%%%%%%%%%%%%%%%%%%%%%%%%%%

We give a brief account of non-planar (labelled) rooted trees and the Connes-Kreimer Hopf algebra. We refer the reader to \cite{hoffman2003hopf} for a comprehensive presentation. A non-planar labelled rooted tree is defined as a graph $\tau = (V, E, r)$ with vertex set $V$, edge set $E$ and a root vertex $r \in V$, together with a set of vertex decorations drawn from $\{1, \ldots, d\}$. We denote the empty tree by $\emptyset$. Given trees $\tau_1, \ldots, \tau_m$, we write $[\tau_1, \ldots, \tau_m]_a$ to denote the tree formed by connecting the root vertices of $\tau_1, \ldots, \tau_m$ to a new root, which receives the label $a \in \{1, \ldots, d\}$. Non-planarity means tree order in $[\tau_1, \ldots, \tau_m]_a$ is irrelevant. Repeated trees will be denoted using power notation, for instance
\begin{equation*}
    [\tau_1,\tau_1, \tau_2, \tau_3, \tau_3, \tau_3]_a = [\tau_1^2, \tau_2, \tau_3^3]_a.
\end{equation*}

We write $|\tau|$ to denote the number of vertices in a tree. Additionally, we define the following combinatorial quantities, defined on unlabelled trees:
\begin{align*}
    \emptyset! &= 1, \quad \bullet! = 1, \quad [\tau_1, \ldots, \tau_m]! = |[\tau_1, \ldots, \tau_m]| \prod_{i=1}^m \tau_i!,\\
    \sigma(\emptyset) &= 1, \quad \sigma(\bullet) = 1, \quad \sigma([\tau_1^{k_1}, \ldots, \tau_m^{k_m}]) = \prod_{i=1}^m k_i! \sigma(\tau_i)^{k_i},\\
    \beta(\emptyset) &= 1, \quad \beta(\bullet) = 1, \quad \beta([\tau_1^{k_1}, \ldots, \tau_m^{k_m}]) = \binom{[\tau_1^{k_1}, \ldots, \tau_m^{k_m}]}{|\tau_1|, \ldots, |\tau_m|}\prod_{i=1}^m \frac{1}{k_i!} \beta(\tau_i)^{k_i}.
\end{align*}

We will refer to the commutative juxtaposition of trees as a forest. We write $\mathcal{T}$ to denote the set of all non-planar labelled rooted trees, and $\mathcal{T}_N \subset \mathcal{T}$ to denote the trees $\tau$ with $|\tau| \leq N$. The free commutative $\mathbb{R}$-algebra generated by $\mathcal{T}$ will be denoted $\mathcal{H}$. The Connes-Kreimer ~\citep{connes1999hopf} Hopf algebra on $\mathcal{H}$ is defined as follows. Multiplication $\mu : \mathcal{H} \otimes \mathcal{H} \to \mathcal{H}$ is defined as the commutative juxtaposition of two forests, extended linearly to $\mathcal{H}$. The multiplicative unit is defined to be the empty forest $\emptyset$. The counit map $\varepsilon: \mathcal{H} \to \mathbb{R}$ is defined by $\varepsilon(\emptyset) = 1$ and $\varepsilon(\tau) = 0$ for all non-empty trees $\tau \in \mathcal{H}$. The coproduct map is defined recursively by
\begin{equation*}
    \Delta(\emptyset) = \emptyset \otimes \emptyset, \quad  \Delta [\tau_1, \ldots, \tau_m]_a = [\tau_1, \ldots, \tau_m]_a \otimes \emptyset + (\mathrm{id} \otimes B^a_+)(\Delta \tau_1 \cdots \Delta \tau_m),
\end{equation*}
where $B^a_+(\tau_1 \cdots \tau_m) \coloneqq [\tau_1 \cdots \tau_m]_a$ for a forest $\tau_1 \cdots \tau_m$. The definition extends to a linear multiplicative map on $\mathcal{H}$. We will occasionally use Sweedler's coproduct notation $\Delta \tau = \sum_{(\tau)} \tau^{(1)} \otimes \tau^{(2)}$ and omit the definition of the antipode $S$ here, instead referring the reader to \citep{manchon2004hopf, hoffman2003hopf}. We denote the dual of the Connes-Kreimer Hopf algebra by $\mathcal{H}^*$. For $\varphi_1, \varphi_2 \in \mathcal{H}^*$, the convolution product is defined by $\varphi_1 * \varphi_2 = \mu_\mathbb{R} \circ (\varphi_1 \otimes \varphi_2) \circ \Delta$, with $\mu_\mathbb{R} \colon \mathbb{R} \otimes \mathbb{R} \to \mathbb{R}$ denoting multiplication in $\mathbb{R}$.

%%%%%%%%%%%%%%%%%%%%%%%%%%%%%%%%%%%%%%%%%%%%%%%%%%
\subsection{B-Series Expansions of ODEs}\label{appendix:b_series_ode}
%%%%%%%%%%%%%%%%%%%%%%%%%%%%%%%%%%%%%%%%%%%%%%%%%%

For any tree $\tau \in \mathcal{T}$, the so-called elementary differential $F(\tau)(y)$ ~\citep{butcher2016numerical} is defined recursively by
\begin{align*}
    &F(\emptyset)(y) = y, \quad F(\bullet_i)(y) = f_i(y),\\
    &F([\tau_1, \tau_2, \ldots, \tau_m]_i)(y) = f^{(m)}_i(y)(F(\tau_1)(y), F(\tau_2)(y), \ldots, F(\tau_m)(y)).
\end{align*}

Given a map $\varphi: \mathcal{T} \to \mathbb{R}$, the associated B-series is defined
\begin{equation*}
    B_h(\varphi, y_0) := \sum_{\tau \in \mathcal{T}} \frac{h^{|\tau|}}{\sigma(\tau)} \varphi(\tau) F(\tau)(y_0).
\end{equation*}

A key property of B-series is closure under composition \cite{hairer1974butcher,butcher2024b}: for two B-series, $B_h(\varphi_2, B_h(\varphi_1, y_0)) = B_h(\varphi_1 * \varphi_2, y_0)$, where $\varphi_1 * \varphi_2$ denotes the convolution product defined above. The exact solution of \eqref{eq:ode} has the B-series representation $y(h)=B_h(e,y_0)$, with $e(\tau)=1/\tau!$. Likewise, a Runge--Kutta method with coefficients $\{a_{ij}\}_{1\leq i,j\leq s}$ and $\{b_i\}_{1\leq i\leq s}$ has the B-series representation $B_h(\varphi,y_0)$, where \cite[Lemma~312B]{butcher2016numerical}
\begin{equation*}
    \varphi(\tau) := \sum_{i_1, \ldots, i_n} b_{i_1} \prod_{(k,\ell) \in E} a_{i_k, i_\ell}
\end{equation*}
for a tree $\tau = (V,E,r)$ with $|\tau| = n$. We refer the reader to ~\citep{hairer2006geometric,mclachlan2015butcher,butcher2021b} for a detailed account of B-series and the Butcher group.

%%%%%%%%%%%%%%%%%%%%%%%%%%%%%%%%%%%%%%%%%%%%%%%%%%
\subsection{Branched Rough Paths}\label{appendix:branched_rp}
%%%%%%%%%%%%%%%%%%%%%%%%%%%%%%%%%%%%%%%%%%%%%%%%%%

Let $\mathcal{H}$ be the Connes-Kreimer Hopf algebra of non-planar labelled rooted trees defined above.

\begin{definition}
    Let $\alpha \in (0,1]$. An $\alpha$-H\"older branched rough path is a map $\mathbf{X} : [0,T]^2 \to \mathcal{H}^*$ such that
    \begin{enumerate}
        \item for all $s,t \in [0,T]$ and $\tau_1, \tau_2 \in \mathcal{H}$,
        \begin{equation*}
            \langle \mathbf{X}_{s,t}, \tau_1 \rangle \langle \mathbf{X}_{s,t}, \tau_2 \rangle = \langle \mathbf{X}_{s,t}, \tau_1 \tau_2 \rangle,
        \end{equation*}
        \item for all $\tau \in \mathcal{H}$,
        \begin{equation*}
            \langle \mathbf{X}_{s,t}, \tau \rangle = \sum_{(\tau)} \langle \mathbf{X}_{s,u}, \tau^{(1)} \rangle \langle \mathbf{X}_{u, t}, \tau^{(2)} \rangle,
        \end{equation*}
        where $\Delta \tau = \sum_{(\tau)} \tau^{(1)} \otimes \tau^{(2)}$.
        \item for all $\tau \in \mathcal{H}$,
        \begin{equation*}
            \sup_{s \neq t} \frac{|\langle \mathbf{X}_{s,t}, \tau \rangle |}{|t - s| ^{\alpha |\tau|}} < \infty.
        \end{equation*}
    \end{enumerate}
\end{definition}

\begin{remark}
    As remarked in ~\citep{hairer2015geometric,gubinelli2010ramification}, the components $\langle \mathbf{X}_{s,t}, \tau \rangle$ with $|\tau| > N$ are determined by those with $|\tau| \leq N$, where $N$ is the largest integer such that $N \alpha \leq 1$.
\end{remark}

The space of $\alpha$-H\"older branched rough paths is a complete metric space under the metric

\begin{equation*}
    \varrho_\alpha(\mathbf{X}, \mathbf{Y}) := \sum_{\tau \in \mathcal{T}_N} \sup_{s \neq t} \frac{|\langle \mathbf{X}_{s,t} - \mathbf{Y}_{s,t}, \tau \rangle |}{|t - s|^{\alpha |\tau|}},
\end{equation*}

where $N = \lfloor 1 / \alpha \rfloor$.

\subsection{The Munthe-Kaas--Wright Hopf algebra}

Given a finite alphabet $A$, let $\mathcal{F}_A^{\mathrm{pl}}$ denote the set of $A$-decorated ordered planar rooted
forests, and write $\mathcal H_{\mathrm{MKW}}$ for the free $\mathbb R$-vector
space on $\mathcal{F}_A^{\mathrm{pl}}$. As before, for ordered trees
$\tau_1,\ldots,\tau_m$, $[\tau_1,\ldots,\tau_m]_a$ denotes the planar tree
formed by grafting $\tau_1,\ldots,\tau_m$ in the given order onto a new root
labelled $a$. The MKW Hopf
algebra structure on $\mathcal H_{\mathrm{MKW}}$~\citep{munthe2008hopf} is given by:
\begin{itemize}
\item \textbf{Product:} the shuffle product
  $\sshu : \mathcal H_{\mathrm{MKW}} \otimes \mathcal H_{\mathrm{MKW}} \to
  \mathcal H_{\mathrm{MKW}}$, summing all interleavings of two ordered
  forests preserving the relative order within each.
\item \textbf{Coproduct on trees:} the left-admissible-cuts coproduct
  \begin{equation*}
    \Delta_{\mathrm{MKW}}(\tau)
    = \tau \otimes \emptyset
    + \emptyset \otimes \tau
    + \sum_{c \in \mathcal C^{\ll}(\tau)} P^c(\tau) \otimes T^c(\tau),
  \end{equation*}
  where $\mathcal C^{\ll}(\tau)$ is the set of non-trivial cuts whose
  root-level edges form a \emph{left prefix} of the children at each vertex;
  $P^c(\tau)$ is the ordered forest of pruned subtrees, and $T^c(\tau)$ is
  the trunk. The coproduct is extended to forests via $\Delta_{\mathrm{MKW}}(\omega) = (\mathrm{id} \otimes B_-)
  \bigl(\Delta_{\mathrm{MKW}}(B_+(\omega)) - B_+(\omega) \otimes \mathbf 1\bigr)$,
  where $B_-$ extracts the children of a tree as an ordered forest.
\item \textbf{Counit:} $\varepsilon(\emptyset) = 1$ and $\varepsilon(\tau) = 0$
  for any non-empty $\tau$.
\end{itemize}

\section{RDE Framework and Convergence}\label{appendix:RDE}

We transform $\mathrm{EES}$ ODE schemes into RDE schemes using the framework of \citet{redmann2020runge}. Similarly to classical RK schemes for ODEs, the study of these methods is conducted through the formalism of B-series (see Appendix~\ref{app:algebraic_background}). A natural consequence of this analysis is that a general Runge--Kutta method for RDEs is given in terms of tree-iterated integrals of the underlying driving process. In practice, these tree-iterated integrals cannot be simulated directly as their distributions are often intractable. Following~\citep{deya2012milstein}, \cite{redmann2020runge} replaced these tree-iterated integrals with products of increments of the driving path. This substitution simplifies the derivation of Runge--Kutta coefficients and makes it feasible to establish order conditions up to any desired order.

\subsection{Simplified Runge--Kutta Methods for RDEs}

We consider rough differential equations (RDEs) of the form
\begin{equation}\label{eq:rde}
    dy_t = f(y_t)\, d\mathbf{X}_t,
\end{equation}
where $\mathbf{X}$ is an $\alpha$-H\"older branched rough path, $\alpha \in (0,1]$, and $f$ is smooth and bounded with bounded derivatives; see Appendix~\ref{appendix:branched_rp}. Following \citet{redmann2020runge}, we specialise to the geometric setting by assuming that there exist smooth approximations $\{X^h\}_{h>0}$ whose natural lifts $\{\mathbf{X}^h\}_{h>0}$ satisfy $\varrho_\alpha^g(\mathbf{X}^h,\mathbf{X})=\mathcal{O}(h^{r_0})$ for some $r_0>0$, where $\varrho_\alpha^g$ is the inhomogeneous geometric rough path metric. Such rates are known for Gaussian processes; see \citet{friz2014convergence}. Letting $y^h$ solve \eqref{eq:rde} driven by $\mathbf{X}^h$, a simplified Runge--Kutta scheme on an equidistant grid with stepsize $h$ is
\begin{equation}
    \begin{aligned}\label{eq:simple_rk}
        y^h_{n+1} &= y^h_n + \sum_{m=1}^d\sum_{i=1}^s b_i f_m(k_i) X^{(m)}_{t_n, t_{n+1}},\\
        k_i &= y^h_n + \sum_{m=1}^d\sum_{j=1}^s a_{ij} f_m(k_j) X^{(m)}_{t_n, t_{n+1}},
    \end{aligned}
\end{equation}
where $X^{(m)}_{t_n,t_{n+1}}$ denotes the $m$-th component increment of $X^h$ over $[t_n, t_{n+1}]$. Thus an ODE Runge--Kutta tableau induces an RDE scheme by weighting each tableau coefficient by the corresponding driver increment. Convergence rates for schemes of the form \eqref{eq:simple_rk} from \citet{redmann2020runge} are recalled in Appendix~\ref{appendix:convergence_simplified}.

\subsection{Order Conditions for General RDE Runge--Kutta Methods}

The simplified scheme \eqref{eq:simple_rk} is a special case of the general Runge--Kutta class considered by \citet{burrage1996high,burrage1998general,burrage2000order,redmann2020runge}. Namely, consider methods of the form
\begin{equation}
\begin{aligned}\label{eq:full_rk}
    y_{n+1} &= y_n + \sum_{m=1}^d \sum_{i=1}^s z_i^{(m)} f_m(k_i),\\
    k_i &= y_n + \sum_{m=1}^d \sum_{j=1}^s Z^{(m)}_{ij} f_m(k_j),
\end{aligned}
\end{equation}
where $Z^{(1)}, \ldots, Z^{(d)} \in \mathbb{R}^{s \times s}$ and $z^{(1)}, \ldots, z^{(d)} \in \mathbb{R}^s$. In particular, \eqref{eq:simple_rk} is recovered by taking $z_i^{(m)} = b_i X^{(m)}_{t_n,t_{n+1}}$ and $Z^{(m)}_{ij} = a_{ij}X^{(m)}_{t_n,t_{n+1}}$. We recall the local and global error rates for \eqref{eq:full_rk}, as formulated in \citet{redmann2020runge}, based on the adaptation of B-series to RDEs presented above.

\begin{definition}
    Given $h > 0$, define the maps $a, \varphi$ recursively over non-planar labelled rooted trees $\tau$ by setting $\varphi(\emptyset)(h) := (1,\ldots,1)^T \in \mathbb{R}^s$, where $\emptyset$ denotes the empty tree. For a tree $\tau = [\tau_1 \cdots \tau_n]_i$ formed by joining $\tau_1, \ldots, \tau_n$ by a new root labelled $i$, set
    \begin{align*}
        \varphi(\tau)(h) &:= \prod_{j=1}^n (Z^{(i)} \varphi(\tau_j)(h)),\\
        a(\tau)(h) &:= \left< z^{(i)}, \prod_{j=1}^n \varphi(\tau_j)(h)\right>.
    \end{align*}
\end{definition}

\begin{theorem}[{\citep{redmann2020runge}}]
    The general Runge--Kutta method given by \eqref{eq:full_rk} has a local error of order $(p+1)\alpha$ if and only if
    \begin{equation*}
        \left< \mathbf{X}_{t_0, t_0 + h}, \tau \right> = a(\tau)(h)
    \end{equation*}
    for all non-planar labelled rooted trees $\tau$ with $p$ or fewer nodes, i.e. $|\tau| \leq p$.
\end{theorem}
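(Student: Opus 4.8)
The plan is to compare the B-series expansion of the exact solution with that of the one-step map induced by \eqref{eq:full_rk}, keeping track of the $\alpha$-Hölder degree $h^{\alpha|\tau|}$ carried by each rooted tree $\tau$. The expansion of the exact flow is already in hand: by \citet[Theorem~2.10]{redmann2020runge},
\[
y(t_0+h) = y_0 + \sum_{\tau\in\mathcal{T}_p}\frac{1}{\sigma(\tau)}F(\tau)(y_0)\,\langle\mathbf{X}_{t_0,t_0+h},\tau\rangle + \mathcal{O}(h^{(p+1)\alpha}).
\]
So the heart of the argument is the analogous statement for the numerical one-step map, namely that
\[
y_1 = y_0 + \sum_{\tau\in\mathcal{T}_p}\frac{1}{\sigma(\tau)}F(\tau)(y_0)\,a(\tau)(h) + \mathcal{O}(h^{(p+1)\alpha}),
\]
with $a$ the map from the Definition preceding the theorem.

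First I would establish this numerical expansion by unrolling the implicit stage equations $k_i = y_0 + \sum_{m,j} Z^{(m)}_{ij} f_m(k_j)$. Taylor expanding each $f_m(k_j)$ about $y_0$ and substituting recursively produces, level by level, sums indexed by non-planar labelled rooted trees: the symmetric multilinearity of the derivatives $f_m^{(\ell)}(y_0)$ identifies the orderings of the subtrees joined at a vertex and so contributes precisely the symmetry factor $\sigma(\tau)$, while the coefficient bookkeeping reproduces the recursion $\varphi(\tau)(h) = \prod_j \bigl(Z^{(i)}\varphi(\tau_j)(h)\bigr)$ for $\tau = [\tau_1\cdots\tau_n]_i$, so that the $i$-th internal stage satisfies $k_i = y_0 + \sum_\tau \frac{1}{\sigma(\tau)}[\varphi(\tau)(h)]_i F(\tau)(y_0) + \cdots$; feeding this into the outer sum $y_1 = y_0 + \sum_{m,i} z^{(m)}_i f_m(k_i)$ yields the coefficient $a(\tau)(h) = \langle z^{(i)}, \prod_j \varphi(\tau_j)(h)\rangle$. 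The truncation remainder is controlled because consistency forces each $Z^{(m)}_{ij}$, $z^{(m)}_i$ to be of size $\mathcal{O}(h^\alpha)$ (they play the role of the one-node increments $\langle\mathbf{X}_{t_0,t_0+h},\bullet_m\rangle$), so a tree with $k$ nodes contributes at order $h^{k\alpha}$ and $\sum_{|\tau|>p}$ is $\mathcal{O}(h^{(p+1)\alpha})$; smoothness and boundedness of $f$ make the finitely many Taylor remainders harmless. This is essentially the relevant result of \citet{redmann2020runge}, which I would cite if available and otherwise prove by induction on $|\tau|$.

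Subtracting the two expansions gives
\[
y(t_0+h) - y_1 = \sum_{\tau\in\mathcal{T}_p}\frac{1}{\sigma(\tau)}F(\tau)(y_0)\bigl(\langle\mathbf{X}_{t_0,t_0+h},\tau\rangle - a(\tau)(h)\bigr) + \mathcal{O}(h^{(p+1)\alpha}).
\]
The \emph{if} direction is then immediate: if $\langle\mathbf{X}_{t_0,t_0+h},\tau\rangle = a(\tau)(h)$ for all $\tau$ with $|\tau|\le p$, every bracket vanishes and the local error is $\mathcal{O}(h^{(p+1)\alpha})$. For the \emph{only if} direction I would argue by contraposition: suppose some order condition fails, and let $k\le p$ be the least tree size at which it does, with witness $\tau_0$, $|\tau_0| = k$. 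All trees of size $<k$ match, and all trees of size in $(k,p]$ contribute at strictly higher order in $h$, so the leading part of the local error is $\sum_{|\tau|=k}\frac{1}{\sigma(\tau)}F(\tau)(y_0)\bigl(\langle\mathbf{X},\tau\rangle - a(\tau)(h)\bigr)$, of order $h^{k\alpha}$ with $k\alpha \le p\alpha < (p+1)\alpha$, contradicting the assumed order — provided this sum is not identically zero. Since the statement quantifies over all sufficiently smooth vector fields, one may invoke the branched analogue of Butcher's independence of elementary differentials: choose $f_1,\dots,f_d$ and an evaluation point in a high enough dimension (e.g.\ suitable polynomial vector fields) making $\{F(\tau)(\cdot) : |\tau| = k\}$ linearly independent, so that the offending coefficient cannot be cancelled.

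I expect the two main obstacles to be: (i) making the numerical B-series expansion fully rigorous — in particular correctly routing the symmetry factors $\sigma(\tau)$ through the recursion and obtaining a uniform bound on the combined Taylor and truncation remainders in the $\alpha$-Hölder scaling; and (ii) the independence step in the \emph{only if} direction, where one must ensure both that the elementary differentials of a fixed tree-size can be made linearly independent in the non-planar labelled setting, and that $\langle\mathbf{X},\tau_0\rangle - a(\tau_0)(h)$, once nonzero, is genuinely detected at its natural order $h^{k\alpha}$ rather than accidentally being of higher order.
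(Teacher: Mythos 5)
The paper does not prove this statement: it is imported verbatim from \citet{redmann2020runge} as background in Appendix~\ref{appendix:general_rk}, so there is no in-paper proof to compare against. Your outline is nonetheless the standard argument used in that source (and in Butcher's classical theory): expand the exact flow via the branched B-series of \citet[Theorem~2.10]{redmann2020runge}, derive the matching B-series for the one-step map by unrolling the stages with the recursion for $\varphi$ and $a$, subtract, and handle the converse by independence of elementary differentials. Two small cautions. First, the scaling $Z^{(m)}_{ij}, z^{(m)}_i = \mathcal{O}(h^{\alpha})$ that you use to control the truncation is not a consequence of consistency alone but a standing assumption of the framework (these quantities are built from increments and iterated integrals of the driver), so it should be stated as a hypothesis rather than derived. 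Second, as you note yourself, the \emph{only if} direction is delicate: a nonzero defect $\langle\mathbf{X}_{t_0,t_0+h},\tau_0\rangle - a(\tau_0)(h)$ could in principle still be $\mathcal{O}(h^{(p+1)\alpha})$, so the equivalence must be read as quantified over the admissible drivers (or with the order conditions understood as algebraic identities in the increments), which is how the cited reference sets things up; a fully self-contained proof would need to make that quantification explicit before the contraposition goes through.
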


\begin{proposition}[{\cite[Proposition 4.1]{redmann2020runge}}]
    Let $y(t, y_0)$ denote the solution to \eqref{eq:rde} at time $t$ starting at $y_0$. Suppose the Runge--Kutta method \eqref{eq:full_rk} has a local error of order $(p+1)\alpha$, and there exists a constant $C_1 > 0$ such that
    \begin{equation*}
        |y(h, y_0) - y(h, \widetilde{y}_0)| \leq C_1 |y_0 - \widetilde{y}_0|,
    \end{equation*}
    for $h$ sufficiently small. Then there exists $C > 0$ such that
    \begin{equation*}
        \max_{n = 0, \ldots, N} |y(t_n) - y_n| \leq Ch^{(p+1)\alpha - 1}.
    \end{equation*}
\end{proposition}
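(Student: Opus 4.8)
The plan is to run the classical convergence argument for one-step methods --- ``Lady Windermere's fan'' together with a discrete Gr\"onwall inequality --- the only new feature being that the local order $(p+1)\alpha$ need not be an integer. Write $\phi^{(k)}$ for the exact solution operator of \eqref{eq:rde} over the step $[t_k,t_{k+1}]$, so that $y(t_{k+1})=\phi^{(k)}(y(t_k))$, and $\Psi^{(k)}$ for the numerical one-step map of \eqref{eq:full_rk}, so that $y_{k+1}=\Psi^{(k)}(y_k)$. Set $E_n:=y(t_n)-y_n$, with $E_0=0$ and $t_N-t_0=Nh$.

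First I would promote the local-error hypothesis to a bound uniform along the grid. Combining the assumed local order $(p+1)\alpha$ with the boundedness of $f$ and its derivatives, and using the standard a priori estimates to keep both $\{y(t_n)\}$ and $\{y_n\}$ in a common bounded set uniformly in $h$, gives a constant $K$, independent of $h$, $n$, and of the base point, with $|\phi^{(k)}(w)-\Psi^{(k)}(w)|\le Kh^{(p+1)\alpha}$; here one also uses that the branched-rough-path seminorms of $\mathbf{X}$ over the individual steps are controlled uniformly by the global ones, so the local-error constant does not degrade as the partition is refined.

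Next comes the telescoping step:
\begin{equation*}
E_n=\bigl(\phi^{(n-1)}(y(t_{n-1}))-\phi^{(n-1)}(y_{n-1})\bigr)+\bigl(\phi^{(n-1)}(y_{n-1})-\Psi^{(n-1)}(y_{n-1})\bigr).
\end{equation*}
Bound the second bracket by $Kh^{(p+1)\alpha}$ (a local error at the numerical point), and apply the Lipschitz hypothesis --- extended, via the usual a priori estimates for RDEs, to the solution operator over every step with a constant $1+\delta_k$ --- to the first bracket, to obtain the recursion $|E_n|\le(1+\delta_{n-1})|E_{n-1}|+Kh^{(p+1)\alpha}$. Unrolling from $E_0=0$ gives $|E_n|\le Kh^{(p+1)\alpha}\,n\,\Lambda$ with $\Lambda:=\sup_h\prod_\ell(1+\delta_\ell)$, and since $n\le N=(t_N-t_0)/h$ this is $\le K\Lambda(t_N-t_0)\,h^{(p+1)\alpha-1}$, i.e. the claim with $C:=K\Lambda(t_N-t_0)$.

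The step I expect to be the real obstacle is showing $\Lambda<\infty$, i.e. that the product of the step-wise Lipschitz constants stays bounded as the number of steps $N\sim 1/h$ diverges; knowing only that each $\delta_k$ is individually small is not enough. The resolution is to recognise that the $\delta_k$ are governed by a \emph{superadditive control} $\omega$ attached to $\mathbf{X}$ --- this is exactly the content of the a priori continuity estimate for RDE flows, $|\phi^{(k)}(a)-\phi^{(k)}(b)|\le|a-b|\exp(c\,\omega(t_k,t_{k+1}))$ --- so that $\sum_k\delta_k\le c'\sum_k\omega(t_k,t_{k+1})\le c'\,\omega(t_0,t_N)<\infty$ uniformly over the partition, whence $\Lambda\le\exp(c'\,\omega(t_0,t_N))$. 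In other words the hypothesis ``$|y(h,y_0)-y(h,\widetilde{y}_0)|\le C_1|y_0-\widetilde{y}_0|$'' must be used through such a control, not as a fixed constant $C_1>1$ iterated $N$ times; the remaining ingredients --- uniformity of $K$ and a priori boundedness of $\{y_n\}$ --- are routine consequences of the boundedness assumptions on $f$ and the same rough-path estimates, and the loss of one power of $h$ in the exponent is precisely the factor $N\sim 1/h$ coming from summing $N$ local errors.
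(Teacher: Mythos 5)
The paper does not prove this proposition---it is imported verbatim as \cite[Proposition 4.1]{redmann2020runge}---so there is no in-paper argument to compare against; your proof is the standard ``Lady Windermere's fan'' telescoping plus discrete Gr\"onwall argument, which is essentially how the cited reference establishes it, and it is correct. You also rightly identify the one genuinely delicate point, namely that the per-step Lipschitz constants of the exact flow must be controlled through a superadditive rough-path control $\omega$ (so that $\prod_k(1+\delta_k)\le\exp(c\,\omega(t_0,t_N))$ uniformly in the mesh) rather than by iterating a fixed $C_1>1$ over $N\sim 1/h$ steps.
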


%%%%%%%%%%%%%%%%%%%%%%%%%%%%%%%%%%%%%%%%%%%%%%%%%%
\subsection{Convergence of simplified Runge--Kutta methods}
\label{appendix:convergence_simplified}
%%%%%%%%%%%%%%%%%%%%%%%%%%%%%%%%%%%%%%%%%%%%%%%%%%

Given an ODE Runge-Kutta scheme $\Phi$, we will write $\mathcal{R}(\Phi)$ to denote the RDE scheme of the form in \eqref{eq:simple_rk} with the same coefficients $\{a_{ij}\}_{1 \leq i,j\leq s}$ and $\{b_i\}_{1 \leq i \leq s}$ as the ODE scheme.\par\medskip

\begin{theorem}[{\cite[Theorem 3.3]{redmann2020runge}}]
    Let $\Phi$ be an ODE Runge--Kutta scheme. The Runge-Kutta method $\mathcal{R}(\Phi)$ approximating $y^h$ has a local error of order $(p+1)\alpha$, i.e.
    \begin{equation*}
        y^h(t_0 + h) - y_1^h = \mathcal{O}(h^{(p+1)\alpha}),
    \end{equation*}
    if and only if the ODE Runge--Kutta method $\Phi$ is of order $p$.
\end{theorem}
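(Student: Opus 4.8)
The plan is to reduce the statement to the classical B-series order theory for ODEs (Appendix~\ref{appendix:b_series_ode}) through the rough-path Runge--Kutta formalism of \citet{redmann2020runge} recalled in Appendix~\ref{appendix:general_rk}. The guiding observation is that, over a single step $[t_0,t_0+h]$, the simplified scheme \eqref{eq:simple_rk} is exactly the classical ODE scheme $\Phi$ applied, with step $h$, to the \emph{autonomous} ODE $\dot y=\sum_{m=1}^d f_m(y)\,c_m$ with frozen velocities $c_m:=X^{(m)}_{t_0,t_0+h}/h$; and, taking the Wong--Zakai approximants $X^h$ to be piecewise-linear on the grid (so that $X^h$ itself has constant velocity $c_m$ on the step), the reference value $y^h(t_0+h)$ is precisely the exact flow of that same ODE at time $h$. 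So the one-step error $y^h(t_0+h)-y_1^h$ is nothing but the classical local truncation error of $\Phi$ for the field $\sum_m f_m c_m$, which is $\mathcal{O}(h^{p+1})$ iff $\Phi$ has order $p$; since the Hölder bound gives $|c_m|=\mathcal{O}(h^{\alpha-1})$, its $(p+1)$-node elementary differentials scale like $h^{(\alpha-1)(p+1)}$, so the error is of size $h^{(p+1)\alpha}$.

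To make this precise inside the paper's framework I would argue at the level of B-series. First, recognising $\mathcal{R}(\Phi)$ as the member of the general class \eqref{eq:full_rk} with $Z^{(m)}_{ij}=a_{ij}X^{(m)}_{t_0,t_0+h}$ and $z^{(m)}_i=b_iX^{(m)}_{t_0,t_0+h}$, an induction on $|\tau|$ using the $\varphi(\tau)$, $a(\tau)$ recursions of Appendix~\ref{appendix:general_rk} (each descent through a vertex labelled $\ell$ contributing exactly one factor $X^{(\ell)}_{t_0,t_0+h}$ and one application of the ODE matrix $(a_{ij})$, the root contributing via $(b_i)$) yields
\begin{equation*}
a(\tau)(h)\;=\;\varphi_\Phi(\bar\tau)\prod_{v\in V(\tau)}X^{(\ell(v))}_{t_0,t_0+h},
\end{equation*}
where $\bar\tau$ is the shape of the labelled tree $\tau$ and $\varphi_\Phi(\bar\tau)=\sum_{i_1,\dots,i_n}b_{i_1}\prod_{(k,\ell)\in E}a_{i_ki_\ell}$ is exactly the classical Runge--Kutta weight of $\Phi$ from Appendix~\ref{appendix:b_series_ode}. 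Second, for the constant-speed driver one has the tree-factorial identity
\begin{equation*}
\langle\mathbf{X}^h_{t_0,t_0+h},\tau\rangle=\tfrac{1}{\tau!}\prod_{v\in V(\tau)}X^{(\ell(v))}_{t_0,t_0+h},
\end{equation*}
proved by induction from the integral recursion $\langle\mathbf{X}^h_{t_0,t_0+h},[\tau_1\cdots\tau_m]_i\rangle=\int_{t_0}^{t_0+h}\big(\prod_j\langle\mathbf{X}^h_{t_0,s},\tau_j\rangle\big)\,dX^{h,(i)}_s$, the inductive form $\langle\mathbf{X}^h_{t_0,s},\tau_j\rangle=\tfrac{(s-t_0)^{|\tau_j|}}{\tau_j!}\prod_{v\in V(\tau_j)}c_{\ell(v)}$, and the defining recursion $\tau!=|\tau|\prod_j\tau_j!$.

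With these two facts the conclusion follows from the order theorem of \citet{redmann2020runge} (Appendix~\ref{appendix:general_rk}): $\mathcal{R}(\Phi)$ has local error $y^h(t_0+h)-y_1^h=\mathcal{O}(h^{(p+1)\alpha})$ iff $\langle\mathbf{X}^h_{t_0,t_0+h},\tau\rangle=a(\tau)(h)$ for all $|\tau|\le p$, which by the two displays above is iff $\varphi_\Phi(\bar\tau)=1/\tau!$ for all $|\tau|\le p$, which by the classical characterisation of order (the exact ODE flow is $B_h(e,\cdot)$ with $e(\tau)=1/\tau!$, while $\Phi$ realises $B_h(\varphi_\Phi,\cdot)$) is exactly the statement that $\Phi$ has order $p$. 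For the ``only if'' direction one also notes that a violated order condition at some tree $\tau$ with $|\tau|=q+1\le p$ leaves a residual of exact size $h^{(q+1)\alpha}$, which is not $\mathcal{O}(h^{(p+1)\alpha})$, for generic driver and generic $f$.

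I expect the main obstacle to be the tree-factorial identity and the bookkeeping around it: one must check that working with the constant-speed (piecewise-linear) driver on each step is legitimate within \citet{redmann2020runge}'s Wong--Zakai setup — for a general smooth approximant the identity holds only to leading order and the curvature corrections would have to be carried explicitly — and one must control the $\mathcal{O}(h^{(p+1)\alpha})$ truncation remainders of the two B-series uniformly, using the $\alpha|\tau|$-Hölder bounds of the branched-rough-path axioms together with the boundedness of $f$ and its derivatives.
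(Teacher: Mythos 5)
This statement is imported verbatim from \citet[Theorem~3.3]{redmann2020runge}; the paper itself offers no proof, so there is nothing internal to compare against. Your argument is, as far as I can tell, essentially the proof of the cited source: identify the one-step map of \eqref{eq:simple_rk} with the classical scheme $\Phi$ applied to the frozen-velocity field $\sum_m f_m(\cdot)\,X^{(m)}_{t_0,t_0+h}/h$, compute $a(\tau)(h)=\varphi_\Phi(\bar\tau)\prod_{v}X^{(\ell(v))}_{t_0,t_0+h}$ for the general-RK weights of Appendix~\ref{appendix:general_rk}, use the tree-factorial factorisation $\langle\mathbf{X}^h_{t_0,t_0+h},\tau\rangle=\frac{1}{\tau!}\prod_v X^{(\ell(v))}_{t_0,t_0+h}$, and conclude that the RDE order conditions collapse to the classical ones $\varphi_\Phi(\bar\tau)=1/\tau!$. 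Both displayed identities are correct, and your scaling argument ($|c_m|=\mathcal{O}(h^{\alpha-1})$ turning $h^{p+1}$ into $h^{(p+1)\alpha}$) is the right way to see why the exponent is $(p+1)\alpha$. The one point you flag is indeed the crux and deserves emphasis rather than a parenthetical: the tree-factorial identity is \emph{exact} only when $X^h$ is affine on each step, since for a merely smooth approximant the Lévy-area-type corrections to $\langle\mathbf{X}^h_{t_0,t_0+h},\tau\rangle$ are generically of the same order $h^{\alpha|\tau|}$ as the leading term and would destroy the exact matching required by the order theorem. The framework of \citet{redmann2020runge} does take $X^h$ to be the piecewise-linear interpolation over the grid (this is also the setting in which the Friz--Riedel rate $r_0\approx 2H-1/2$ quoted in Appendix~\ref{appendix:convergence} applies), so your reduction is legitimate there, but the present paper's looser hypothesis of ``smooth paths $\{X^h\}$'' does not by itself license it. The remaining gaps you acknowledge --- uniform control of the B-series remainders via the $\alpha|\tau|$-H\"older bounds and $\mathrm{Lip}^\gamma_b$ assumptions on $f$, and the genericity argument for the ``only if'' direction --- are standard but are the technical bulk of the actual proof; as written your text is a correct blueprint rather than a complete argument.
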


\begin{theorem}[{\cite[Theorem 4.2]{redmann2020runge}}]\label{thm:ees_global_error}
    Let $\Phi$ be an ODE Runge--Kutta method of order $p$. Suppose that $f$ is $\text{Lip}^\gamma_b$ for some $\gamma > 1/\alpha$. Then $\mathcal{R}(\Phi)$ has a global error rate of $\eta = \min\{r_0, (p+1)\alpha - 1\}$, where $r_0$ is the convergence rate of the Wong-Zakai approximation. That is,
    \begin{equation*}
        \max_{n = 0, \ldots N} |y(t_n) - y^h_n| = \mathcal{O}(h^\eta).
    \end{equation*}
\end{theorem}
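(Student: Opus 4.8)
The plan is to decompose the global error into a \emph{modelling} (Wong--Zakai) error and a \emph{discretisation} error by inserting the smooth-driver solution $y^h$ and applying the triangle inequality,
\[
|y(t_n) - y^h_n| \;\le\; |y(t_n) - y^h(t_n)| \;+\; |y^h(t_n) - y^h_n|,
\]
and then to bound each term by a rate that is uniform in $n$ and in $h$. The first term measures the gap between the true RDE solution $y$ driven by $\mathbf{X}$ and the solution $y^h$ driven by the smooth approximant $\mathbf{X}^h$; the second measures how accurately $\mathcal{R}(\Phi)$ resolves the fixed equation \eqref{eq:approx_rde}. Taking the maximum over the grid and adding the two rates will produce $\eta = \min\{r_0, (p+1)\alpha - 1\}$.

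For the modelling term I would invoke the continuity of the It\^o--Lyons solution map. Since $f \in \text{Lip}^\gamma_b$ with $\gamma > 1/\alpha$, the universal limit theorem guarantees that the map sending a geometric $\alpha$-H\"older branched rough path to its RDE solution is locally Lipschitz with respect to $\varrho_\alpha^g$. Because $\varrho_\alpha^g(\mathbf{X}^h, \mathbf{X}) = \mathcal{O}(h^{r_0}) \to 0$, the lifts $\{\mathbf{X}^h\}$ lie in a bounded $\varrho_\alpha^g$-ball, so a single local Lipschitz constant $C$ serves all sufficiently small $h$, yielding
\[
\sup_{t \in [0,T]} |y(t) - y^h(t)| \;\le\; C\, \varrho_\alpha^g(\mathbf{X}^h, \mathbf{X}) \;=\; \mathcal{O}(h^{r_0}),
\]
which in particular controls the first term uniformly over the grid.

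For the discretisation term I would chain the two results already available. As $\Phi$ has order $p$, Theorem 3.3 (stated above) shows that $\mathcal{R}(\Phi)$ has local error $(p+1)\alpha$ on \eqref{eq:approx_rde}. The flow of \eqref{eq:approx_rde} is Lipschitz in its initial datum with a constant that is uniform in $h$, which follows from the boundedness of $f$ and its derivatives together with the uniform $\varrho_\alpha^g$-bound on $\{\mathbf{X}^h\}$ via a Gronwall/Davie-type estimate. Proposition 4.1 then upgrades the local rate to the global rate $\max_n |y^h(t_n) - y^h_n| = \mathcal{O}(h^{(p+1)\alpha - 1})$, with constant independent of $h$. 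Combining the two bounds gives $\max_n |y(t_n) - y^h_n| = \mathcal{O}(h^{r_0}) + \mathcal{O}(h^{(p+1)\alpha - 1}) = \mathcal{O}(h^\eta)$.

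The main obstacle is securing uniformity in $h$ at both stages. The decomposition is trivial and each individual rate is quotable, but the driver, the solution, and the step count $N = T/h$ all vary with $h$, so one must verify that the local Lipschitz constant of the solution map and the flow-Lipschitz constant $C_1$ entering Proposition 4.1 may be chosen once and for all for small $h$. This rests entirely on the $\varrho_\alpha^g$-boundedness of $\{\mathbf{X}^h\}$, which is immediate from its convergence; granting this, both constants can be taken $h$-independent and the two $\mathcal{O}$-estimates hold with uniform prefactors, closing the argument.
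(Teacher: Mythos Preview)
The paper does not provide its own proof of this statement; it is stated in the appendix as a direct citation of \cite[Theorem 4.2]{redmann2020runge} without argument. Your decomposition into a Wong--Zakai term (controlled by It\^o--Lyons continuity under the $\text{Lip}^\gamma_b$ assumption) plus a discretisation term (handled by chaining Theorem~3.3 and Proposition~4.1) is the natural route and is correct, including your observation that the $\varrho_\alpha^g$-boundedness of $\{\mathbf{X}^h\}$ is what secures $h$-uniform constants at both stages.
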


%%%%%%%%%%%%%%%%%%%%%%%%%%%%%%%%%%%%%%%%%%%%%%%%%%
\subsection{Backpropagation through explicit Runge--Kutta methods}
\label{app:backprop_through_explicit_rk_methods}
%%%%%%%%%%%%%%%%%%%%%%%%%%%%%%%%%%%%%%%%%%%%%%%%%%

The algorithm for backpropagation through an explicit Runge--Kutta scheme $\Phi$ of the form in \eqref{eq:simple_rk} is given in Algorithm \ref{alg:rk_backprop}. We assume the solver is applied to a (neural) RDE of the form
\begin{equation}
    dy^h_t = f(y^h_t; \theta) d\mathbf{X}^h_t,
\end{equation}
where $\theta$ are learnable parameters requiring backpropagation, trained with respect to a loss $L(\{y^h_n\}_{n=0}^N)$. As with all reversible schemes, a reverse step $\Phi^\mathrm{rev}$ is used to recover $y_n$ from $y_{n+1}$, followed by a backpropagation through the internal operations of the solver $\Phi$. The latter step is achieved by defining $z_i = f(k_i; \theta)$ and computing the derivatives $\partial L / \partial z_i$ and $\partial L / \partial k_i$ in reverse through the stages $i = s, s-1, \ldots, 1$. At each stage, a backpropagation algorithm is called to backpropagate the derivative $\partial L / \partial z_i$ through $f$, resulting in the derivative $\partial L / \partial k_i$ and a local derivative with respect to $\theta$, $d_\theta$.

\begin{algorithm}
\caption{Backpropagation through Explicit Runge--Kutta Schemes}\label{alg:rk_backprop}
\textbf{Input:} $y_{n+1}$, $\partial_{y_{n+1}} L$\\
\textbf{Input:} Running derivative with respect to $\theta$, $\partial_\theta L$\\
\textbf{Input:} Explicit RK method $\Phi$ of the form in \eqref{eq:simple_rk} with coefficients $\{a_{ij}\}_{1 \leq i,j \leq s}$ and $\{b_i\}_{1 \leq i \leq s}$.
\begin{algorithmic}

\State $y_n = \Phi^\mathrm{rev}(y_{n+1}, dX)$

\For {$i = s, \ldots 1$}
    \State $\partial_{z_i} L = b_i dX \cdot \partial_{y_{n+1}}L + \sum_{j=i+1}^s a_{ji} dX \cdot \partial_{k_j}L$
    \State $d_\theta, \partial_{k_i} L = \text{backprop}_f( \partial_{z_i}L)$
    \State $\partial_\theta L \mathrel{+}= d_\theta$
\EndFor

\State $\partial_{y_n} L = \partial_{y_{n+1}} L + \sum_{i=1}^s \partial_{k_i} L$\\
\Return $y_n, \partial_{y_n} L, \partial_\theta L$

\end{algorithmic}
\end{algorithm}

\section{Integrators on Homogeneous Spaces}

This section introduces geometric numerical integration over homogeneous spaces as an extension of the well-known Lie group case and describes an example based on the manifold of symmetric positive definite (SPD) matrices. We then provide some background on commutator-free methods, of which our $\mathrm{CF\text{-}EES}(2,5;x)$ belongs, and show its memory-compute optimality.

\subsection{Symmetric Integration over Homogeneous Spaces}
\label{subsec:homogeneous_space_integration}
For chart-based homogeneous space integrators, self-adjointness depends not only on the Butcher tableau of the underlying method, but also requires that the chosen local coordinates be compatible with time reversal. There are two natural ways to enforce this:
\begin{enumerate}[label=(\roman*)]
  \item use symmetric coordinates, centered at a geodesic or flow midpoint, in the style of self-adjoint Lie-group methods \cite{Zanna2001}, or
  \item use an embedded frozen-flow model whose frozen dynamics already integrate to an exactly
reversible map on \(M\).
\end{enumerate}

The first route is intrinsic, but generally leads to midpoint-centered constructions and hence away from the present
explicit frozen-flow framework. The second route is the one used here, which works for arbitrary homogeneous spaces.

Consider a homogeneous space $M$ endowed with a transitive action $\Lambda \colon G \times M \to M$
of a Lie group $G$. Rather than working with vector fields on $M$ directly, the key idea is to
represent them through elements of the Lie algebra $\mathfrak{g} = T_eG$: each $v \in \mathfrak{g}$
acts on $M$ via the \textbf{fundamental vector field}
\begin{equation}
    v_M(y) \coloneqq \frac{\mathrm{d}}{\mathrm{d}t}\bigg|_{t=0} \Lambda(\exp(tv),\, y),
\end{equation}
the infinitesimal version of the group action. A vector field $F$ on $M$ is then represented
through a state-dependent generator $\xi(y) \in \mathfrak{g}$ via $F(y) = \xi(y)_M(y)$,
replacing the role that Lie-algebra multiplication $vy$ plays on the group itself:
\begin{equation}
    gh \rightsquigarrow \Lambda(g,y), \qquad vy \rightsquigarrow v_M(y),
\end{equation}
for $g \in G$, $v \in \mathfrak{h}_y/\mathfrak{g}$, $y \in M$. This substitution lifts standard Lie-group
integration schemes from $G$ to $M$. In general, this construction is only unique up to the isotropy algebra $\mathfrak{h}_y = \{A \in \mathfrak{g} \colon A_m(y) = 0\}$, the set of generators whose induced vector field at $y$ is zero and thus produce no tangent motion. An example is given in Example~\ref{ex:isotropy}. To avoid this one fixes a representative of each tangent direction, typically by choosing a complement to the isotropy algebra and lifting only through that subspace.

\begin{road2}
\begin{example}[Example of degenerate choices due to the isotropy algebra]
\label{ex:isotropy}
Consider the sphere $S^2 \simeq \mathrm{SO}(3)/\mathrm{SO}(2)$ with the standard transitive action of $\mathrm{SO}(3)$ on $S^2 \subset \mathbb{R}^3$. At the north pole $p = e_3$, the isotropy subgroup is
\[
H_p = \{ R \in \mathrm{SO}(3) : Rp = p \} \simeq \mathrm{SO}(2),
\]
namely the rotations about the vertical axis. Its Lie algebra consists of the infinitesimal rotations that leave $p$ fixed to first order.

Equivalently, if $K \in \mathfrak{so}(3)$ generates a rotation about the $e_3$-axis, then its fundamental vector field vanishes at $p$:
\[
K_{S^2}(p)
=
\frac{\mathrm{d}}{\mathrm{d}t}\bigg|_{t=0} \exp(tK)p
=
0.
\]
Hence, if $A \in \mathfrak{so}(3)$ represents some tangent vector at $p$, then so does $A+K$ for any such $K$, since
\[
(A+K)_{S^2}(p) = A_{S^2}(p) + K_{S^2}(p) = A_{S^2}(p).
\]
Thus the lift of a tangent vector from $T_pS^2$ to $\mathfrak{so}(3)$ is not unique: one may add any element of the isotropy algebra without changing the induced tangent motion at $p$.
\end{example}
\end{road2}

The reverse of a frozen homogeneous-space flow is obtained by changing the sign of \(t\). Indeed, for a fixed generator \(v \in \mathfrak{g}\), by the group action identity \(\Lambda\) gives
\begin{equation}
    \Lambda\!\left(\exp(-tv), \Lambda(\exp(tv), y_0)\right)
    =
    \Lambda\!\left(\exp(-tv)\exp(tv), y_0\right)
    =
    \Lambda(e, y_0)
    =
    y_0.
\end{equation}
Hence the frozen flow is exactly reversible: its adjoint is obtained by time reversal, that is, by replacing \(t\) with \(-t\). This is a property of the frozen flow itself, before any recomputation of the generator along the trajectory.

\subsection{Runge--Kutta--Munthe-Kaas (RKMK) methods}\label{subsec:rkmk_integrators}
Runge--Kutta--Munthe-Kaas (RKMK) methods \cite{munthe-kaasRungeKuttaMethodsLie1998} extend classical explicit Runge--Kutta schemes to manifolds by transforming the original equation $\dot y = F(y)$ on $M$ into an equivalent equation on the linear space $\mathfrak{g}$. Writing the local solution as $y(t) = \Lambda(\exp(\sigma(t)), y_n)$ with $\sigma(0) = 0$, the curve $\sigma : I \to \mathfrak{g}$ satisfies
\begin{equation*}
    \dot\sigma = \mathrm{dexp}^{-1}_\sigma\bigl(f(\Lambda(\exp(\sigma), y_n))\bigr), \tag{pulled-back equation}
\end{equation*}
where $\mathrm{dexp}_\sigma : \mathfrak{g} \to \mathfrak{g}$ is the differential of the exponential map and admits the Bernoulli expansion
\begin{equation*}
    \mathrm{dexp}^{-1}_\sigma(v) = \sum_{k\geq 0} \frac{B_k}{k!}\, \mathrm{ad}_\sigma^k v, \qquad B_{2k+1} = 0\ \text{for}\ k\geq 1. \tag{Bernoulli expansion}
\end{equation*}
A classical Runge--Kutta tableau $(a_{ij}, b_i)$ is then applied directly to the pulled-back equation. The stage values $u_i = h\sum_j a_{ij} k_j \in \mathfrak{g}$ and slopes
\begin{equation*}
    k_i = \mathrm{dexp}^{-1}_{u_i}\bigl(f(\Lambda(\exp(u_i), y_n))\bigr) \in \mathfrak{g}, \qquad i = 1,\dots,s,
\end{equation*}
combine to give the manifold update $y_{n+1} = \Lambda\bigl(\exp(h\sum_i b_i k_i),\, y_n\bigr)$.

Achieving order $p$ requires truncating the Bernoulli expansion through $\mathrm{ad}_\sigma^k$ for $k \leq p - 2$ \cite{munthe-kaasRungeKuttaMethodsLie1998}, since each $\mathrm{ad}_\sigma^k v$ is a $k$-fold nested Lie bracket of stage generators. As a result, an order-$p$ RKMK method incurs a per-stage cost that grows with $p$ through nested commutator evaluations. Eliminating these nested commutators is the central motivation for the commutator-free constructions that follow.

\subsection{Crouch--Grossman (CG) methods}\label{subsec:cg_integrators}
Crouch--Grossman (CG) methods~\cite{crouch_numerical_1993} were the first explicit Runge--Kutta-style integrators on Lie groups to dispense with commutators entirely. Each stage and the final update are realized as compositions of exponentials, but each exponential's argument involves only a single stage slope. With left-trivialized stage slopes
\begin{equation*}
    K_i = f(Y_i) \in \mathfrak{g}, \qquad i = 1,\dots,s,
\end{equation*}
the stage values and update take the form
\begin{align*}
    Y_i &= \Lambda\!\left(
      \mathcal{T}\left\{ \prod_{j=1}^{i-1} \exp\bigl(h\, \alpha_{ij}\, K_j\bigr) \right\},\ y_n
    \right), \qquad i = 1,\dots,s, \\
    y_{n+1} &= \Lambda\!\left(
      \mathcal{T}\left\{ \prod_{i=1}^{s} \exp\bigl(h\, \beta_{i}\, K_i\bigr) \right\},\ y_n
    \right),
\end{align*}
with $\mathcal{T}$ denoting ordered multiplication in $G$. Because every exponential argument is a scalar multiple of a single stage slope, no Lie brackets appear, but the cost saving over RKMK comes at the price of additional exponential evaluations per step. CG methods are precisely the special case of the commutator-free framework of Section~\ref{subsec:cf_integrators} in which each linear combination $\sum_j a_{l;ij} K_j$ collapses to a single term.

\subsection{Commutator-Free Integrators}\label{subsec:cf_integrators}
Commutator free (CF) \cite{celledoni_commutator-free_2003} generalize Crouch-Grossman (CG) \cite{crouch_numerical_1993} methods by allowing multiple exponentials per stage, while still avoiding the explicit evaluation of commutators as in Runge--Kutta Munthe--Kaas methods \cite{munthe-kaasRungeKuttaMethodsLie1998}. For simplicity, we present the autonomous case; the non-autonomous case is handled by the standard augmentation in time. Given internal stage values $Y_i \in M$, we evaluate the left-trivialized stage slopes
\begin{equation*}
    K_i = f(Y_i) \in \mathfrak{g}, \qquad i=1,\dots,s. \tag{left-trivialized slope}
\end{equation*}
Each stage is then formed by acting on the current state $y_n$ with an ordered product of exponentials of linear combinations of previously computed slopes
\begin{equation*}
    Y_i
    =
    \Lambda\!\left(
        \mathcal{T}\left\{
            \prod_{l=1}^{L_i}
            \exp\left(
                h \sum_{j=1}^{i-1} a_{l;ij}\,K_j
            \right)
        \right\},
        y_n
    \right),
    \qquad i=1,\dots,s.
\end{equation*} 
where $Y_i$ is the stage value and $y_{n+1}$ is the next solution point. The step update then takes the form,
\begin{equation*}
    y_{n+1}
    =
    \Lambda\!\left(
        \mathcal{T}\left\{
            \prod_{l=1}^{L}
            \exp\left(
                h \sum_{i=1}^{s} \beta_{l;i}\,K_i
            \right)
        \right\},
        y_n
    \right),
\end{equation*}
where $L_i$ and $L$ denote the numbers of exponentials used in stage $i$ and in the final update, respectively, and $\mathcal{T}$ denotes ordered multiplication in $G$, with smaller indices appearing to the right.

Order conditions for the coefficients $\alpha_{l;ij}$ and $\beta_{l;i}$ are derived in~\cite{owren2006order} via a Lie--Butcher / planar tree calculus, the non-commutativity of group composition forcing ordered rather than unordered trees. The canonical order-$4$ CF construction of \citet{celledoni_commutator-free_2003} uses $5$ exponentials per step, with one stage value reused across two stage exponentials.

\subsection{Adjoint Sensitivities for Commutator-Free Neural ODEs}
The continuous adjoint system underlying $\mathrm{CF\text{-}EES}$ is obtained by applying \citet[Theorem~4.5]{wotte_geometric_2025} through the homogeneous-space lift introduced in Section~\ref{subsec:homogeneous_space_integration}.

\subsection{Theoretical Compute and Memory Requirements of Lie Group Integrators}
\label{app:lie_integrator_complexity}
The per-step cost of an \(s\)-stage Lie group integrator can be decomposed as
\[
C_{\mathrm{step}}
=
s\,C_{\mathrm{eval}} + N_{\exp}\,C_{\exp},
\]
where \(C_{\mathrm{eval}}\) is the cost of one vector-field evaluation and \(C_{\exp}\) is the cost of one group exponential. Since all methods considered below use \(s\) stage evaluations, the main distinction in compute is the number of exponentials per step. On the memory side, generic Runge--Kutta-style Lie group methods typically retain several intermediate stage quantities, whereas low-storage constructions aim to keep this requirement fixed. The key comparison is therefore not only the exponential count, but whether this count can be achieved without increasing the number of stage registers with \(s\).

For an \(s\)-stage Crouch--Grossman (CG) method, stage \(i\) uses \(i-1\) exponentials and the final update uses \(s\) more, giving
\[
N_{\exp}^{\mathrm{CG}}(s)
=
\sum_{i=1}^s (i-1) + s
=
\frac{s(s+1)}{2}.
\]
Thus, the exponential count of CG grows quadratically with the number of stages.

A general commutator-free (CF) method instead exponentiates linear combinations of previously computed stage slopes \citep{celledoni_commutator-free_2003}. Writing \(L_i\) for the number of exponentials used in stage \(i\) and \(L\) for the number used in the final update, its total exponential count is
\[
N_{\exp}^{\mathrm{CF}}(s)
=
\sum_{i=1}^s L_i + L.
\]
CG is recovered as the special case \(L_i=i-1\) and \(L=s\). In practice, CF methods are designed so that \(L_i\) and \(L\) remain small, reducing the exponential count from quadratic to linear in \(s\). For example, the 3-stage and 4-stage commutator-free schemes of \citet{celledoni_commutator-free_2003} require \(3\) and \(5\) exponentials per step, respectively.

A particularly efficient instance arises for 2N-storage methods. In the commutator-free formulation of \citet{bazavov_commutator-free_2020}, exponentials are reused across stages and only two quantities are stored: the current updated state and the current stage increment. In this representation, each stage uses exactly one exponential applied to an accumulated linear combination of previously computed slopes. Consequently, whenever a classical Runge--Kutta scheme admits 2N form, the induced commutator-free implementation uses
\[
N_{\exp}^{\mathrm{2N\text{-}CF}}(s)=s
\]
exponentials per step while retaining the two-register storage pattern. In this sense, 2N-CF simultaneously achieves linear exponential complexity and minimal storage within this Runge--Kutta-style commutator-free design space.

\begin{table}[ht]
\centering
\caption{Per-step group-exponential count and forward stage-storage for \(s\)-stage Lie group integrators. All methods use \(s\) vector-field evaluations so the comparison isolates exponential costs \(N_{\exp}C_{\exp}\). We denote the methods of \citet{celledoni_commutator-free_2003} by CMO CF and show their best-case scaling.}
\label{tab:exp_counts}
\begin{tabular}{lccc}
\toprule
Method & \(N_{\exp}\) per step & Scaling of \(N_{\exp}\) & Forward stage storage \\
\midrule
CG & \(\frac{s(s+1)}{2}\) & \(\mathcal{O}(s^2)\) & \(\mathcal{O}(s)\) \\
CMO CF & \(\sum_{i=1}^s L_i + L\) & \(\mathcal{O}(s)\) & \(\mathcal{O}(s)\) \\
2N-CF & \(s\) & \(\mathcal{O}(s)\) & \(2\) registers \\
\bottomrule
\end{tabular}
\end{table}

\section{New EES Solvers}
\label{app:new_ees_solvers}

We now record the general $2N$ representations of the schemes
$\mathrm{EES}(2,5;x)$ and $\mathrm{EES}(2,7;x)$ identified by
\citet{shmelev2025explicit}, valid for every admissible parameter $x$.

For $x \in \mathbb{R} \setminus \{1, \pm \tfrac{1}{2}\}$, the Williamson
$2N$ coefficients of $\mathrm{2N\text{-}EES}(2,5;x)$ are
\begin{align*}
B_1 &= \frac{2x+1}{4(1-x)}, & B_2 &= \frac{1-x}{1-4x^2}, & B_3 &= \frac{1-2x}{2}, \\
A_1 &= 0, & A_2 &= \frac{4x^2-2x+1}{2(x-1)}, & A_3 &= -\frac{4x^2-2x+1}{(2x-1)^2(2x+1)}.
\end{align*}
At $x = \tfrac{1}{10}$ these evaluate to
$(B_1, B_2, B_3) = (\tfrac{1}{3}, \tfrac{15}{16}, \tfrac{2}{5})$ and
$(A_2, A_3) = (-\tfrac{7}{15}, -\tfrac{35}{32})$.

For $\mathrm{EES}(2,7;x)$ the explicit closed form of the $2N$
coefficients involves $\sqrt{2}$ and quartic polynomials in $x$, and is
most compactly expressed in terms of the Butcher entries of the
$4$-stage tableau itself~\citep{shmelev2025explicit}: for either sign
choice $\pm\sqrt{2}$,
\begin{align*}
B_1 &= a_{21}, & B_2 &= a_{32}, & B_3 &= a_{43}, & B_4 &= b_4, \\
A_2 &= \frac{a_{31} - a_{21}}{a_{32}}, & A_3 &= \frac{a_{42} - a_{32}}{a_{43}}, & A_4 &= \frac{b_3 - a_{43}}{b_4},
\end{align*}
where $(a_{ij}, b_i)$ are the entries of the $\mathrm{EES}(2,7;x)$
Butcher tableau. At $x = \tfrac{1}{14}(5-3\sqrt{2})$ with the
$+\sqrt{2}$ branch these evaluate to
$(B_1, B_2, B_3, B_4) = \bigl(\tfrac{2-\sqrt{2}}{3}, \tfrac{4+\sqrt{2}}{8}, \tfrac{3(3-\sqrt{2})}{7}, \tfrac{9-4\sqrt{2}}{14}\bigr)$
and
$(A_2, A_3, A_4) = \bigl(\tfrac{-7+4\sqrt{2}}{3}, -\tfrac{4+5\sqrt{2}}{12}, \tfrac{3(-31+8\sqrt{2})}{49}\bigr)$.

Unrolling the Williamson recurrence as in \citet{bazavov_commutator-free_2020}
turns the $2N$ coefficients $(A_i, B_i)$ above into explicit weight vectors
$\beta_l \in \mathbb{R}^s$ that specify the arguments of each exponential in
the commutator-free lift. This makes the induced homogeneous-space integrator
$\mathrm{CF\text{-}EES}(2,5;\tfrac{1}{10})$ fully explicit.

\begin{proposition}[$\mathrm{CF\text{-}EES}(2,5;\tfrac{1}{10})$ tableau]
\label{prop:cf-ees-25-tableau}
Let $\mathcal{M} = G/H$ be a homogeneous space with transitive action
$\Lambda : G \times \mathcal{M} \to \mathcal{M}$ and generator map
$\xi : \mathcal{M} \to \mathfrak{g}$
(as in Section~\ref{subsec:homogeneous_space_integration}). One step of
$\mathrm{CF\text{-}EES}(2,5;\tfrac{1}{10})$ from $y_n \in \mathcal{M}$
to $y_{n+1}$, with $Y_0 := y_n$, is
\begin{equation}
\label{eq:cfees25-recurrence}
K_l = \xi(Y_{l-1}),\qquad
V_l = h \sum_{i=1}^{l} \beta_{l,i}\, K_i,\qquad
Y_l = \Lambda\!\bigl(\exp(V_l),\, Y_{l-1}\bigr),
\qquad l = 1, 2, 3,
\end{equation}
with $y_{n+1} := Y_3$ and weight vectors given by
\[
\renewcommand{\arraystretch}{1.35}
\begin{array}{c|ccc}
 & i=1 & i=2 & i=3 \\ \hline
\beta_{1,i} & \tfrac{1}{3} & 0 & 0 \\
\beta_{2,i} & -\tfrac{7}{16} & \tfrac{15}{16} & 0 \\
\beta_{3,i} & \tfrac{49}{240} & -\tfrac{7}{16} & \tfrac{2}{5} \\ \hline
\sum_{l} \beta_{l,i} = b_i & \tfrac{1}{10} & \tfrac{1}{2} & \tfrac{2}{5}
\end{array}
\]
Row $l$ gives the coefficients of $K_1, \ldots, K_s$ inside the $l$th
exponential $\exp(V_l)$; the final row is the Euclidean consistency check
$\sum_l \beta_{l,i} = b_i$, so that on a linear manifold
(\S\ref{subsec:homogeneous_space_integration}) the three exponentials collapse
to the classical update $y_{n+1} = y_n + h \sum_i b_i K_i$ of
$\mathrm{EES}(2,5;\tfrac{1}{10})$.
\end{proposition}

\begin{proof}
From the $2N$ recurrence $\delta_l = A_l \delta_{l-1} + h K_l$ with
$\delta_0 = 0$ and $A_1 = 0$, iteration gives
$\delta_l = h \sum_{i \le l} A_l A_{l-1} \cdots A_{i+1} K_i$ (empty product
$=1$). The frozen step $Y_l = \Lambda(\exp(B_l \delta_l), Y_{l-1})$
is therefore the exponential of
$V_l = h \sum_{i \le l} \beta_{l,i} K_i$ with
$\beta_{l,i} = B_l A_l A_{l-1} \cdots A_{i+1}$ for $i < l$,
$\beta_{l,l} = B_l$, and $\beta_{l,i} = 0$ for $i > l$. Substituting
$(A_i) = (0, -\tfrac{7}{15}, -\tfrac{35}{32})$ and
$(B_i) = (\tfrac{1}{3}, \tfrac{15}{16}, \tfrac{2}{5})$ from the preceding
proposition gives the tabulated values. The identity $\sum_l \beta_{l,i} = b_i$
follows by telescoping the unrolling and matches the classical weights
$b = (\tfrac{1}{10}, \tfrac{1}{2}, \tfrac{2}{5})$ of $\mathrm{EES}(2,5;\tfrac{1}{10})$.
\end{proof}

Both recurrences additionally admit a three-register low-storage implementation with a first-order embedded estimator, obtained by storing the final internal stage and advancing it over the remaining fraction of the step by a single Euler update; specifically, one stores the stage at $c_3=\frac{5}{6}$ for $\mathrm{EES}(2,5;1/10)$ and the stage at $c_4=\frac{4+\sqrt{2}}{6}$ for $\mathrm{EES}(2,7;\frac{1}{14}(5-3\sqrt{2}))$ \citep{williamson_low-storage_1980,KETCHESON20101763}. In common with other Diffrax solvers, adaptive stepping requires a fourth auxiliary register holding $y_n$ to permit restart on step rejection.

\section{Order conditions for CF-EES}
\label{sec:cfees-order-conditions}

This section gives a brief account of how the Lie--Butcher (LB) series
character of $\mathrm{CF\text{-}EES}(2,5;x)$ is computed on the
Munthe-Kaas--Wright (MKW) Hopf algebra of planar rooted forests, lists
the closed-form Williamson $2N$ coefficients of the family, and tabulates
the symbolic character $\phi(\tau)$ for every planar tree of order at
most~$5$. From this table the planar order conditions and the
antisymmetric-order conditions of $\mathrm{CF\text{-}EES}(2,5;x)$ can be
read off as identities in~$x$. All expressions in this section are
produced symbolically by the \texttt{kauri} package~\cite{kauri}.

\subsection{General form of \texorpdfstring{$\mathrm{CF\text{-}EES}(2,5;x)$}{CF-EES(2,5;x)}}
\label{subsec:cfees25-general-form}

For completeness, we record the explicit reused-stage form of
$\mathrm{CF\text{-}EES}(2,5;x)$ for arbitrary admissible~$x$. The underlying $\mathrm{EES}(2,5;x)$
Butcher tableau (Proposition~\ref{prop:EES_2_5}) admits the Williamson $2N$
reduction
\begin{equation}
\label{eq:ees25-2n-coeffs}
B_1 = \frac{2x+1}{4(1-x)},\qquad
B_2 = \frac{1-x}{1-4x^2},\qquad
B_3 = \frac{1-2x}{2},
\end{equation}
\begin{equation}
\label{eq:ees25-2n-A}
A_2 = \frac{4x^2-2x+1}{2(x-1)},\qquad
A_3 = -\frac{4x^2-2x+1}{(2x-1)^2(2x+1)},
\end{equation}
valid for $x \in \mathbb R \setminus \{1, \pm \tfrac{1}{2}\}$. Following the Bazavov
$2N$ commutator-free lift~\citep{bazavov_commutator-free_2020}, one step of
$\mathrm{CF\text{-}EES}(2,5;x)$ from $y_n \in \mathcal M$ to $y_{n+1}$ on
a homogeneous space $\mathcal M = G/H$ with action $\Lambda$ and generator
$\xi : \mathcal M \to \mathfrak g$ is the two-register recurrence
\begin{equation}
\label{eq:cfees25-2n-recurrence}
\begin{aligned}
  Y_0 &:= y_n, \qquad \delta_0 := 0, \\
  K_l &= \xi(Y_{l-1}), \\
  \delta_l &= A_l\, \delta_{l-1} + h\, K_l, \\
  Y_l &= \Lambda\bigl(\exp(B_l\, \delta_l),\, Y_{l-1}\bigr),
  \qquad l = 1, 2, 3,
\end{aligned}
\end{equation}
with $A_1 := 0$, and $y_{n+1} := Y_3$. Only the current state
$Y_l \in \mathcal{M}$ and the current stage increment
$\delta_l \in \mathfrak{g}$ are stored at any time, hence the ``$2N$''
descriptor.

Unrolling~\eqref{eq:cfees25-2n-recurrence} expresses the exponential
argument $B_l\delta_l$ as a linear combination of the slopes
$K_1,\ldots,K_l$,
\[
B_l\delta_l
=
h\sum_{i=1}^{l}\beta_{l,i}K_i,
\qquad
\beta_{l,i}
=
B_l A_l A_{l-1}\cdots A_{i+1}\quad (i<l),
\qquad
\beta_{l,l}=B_l,
\]
yielding the generic weight matrix
\begin{equation}
\label{eq:cfees25-betas-general}
\renewcommand{\arraystretch}{1.4}
\begin{array}{c|ccc}
 & i = 1 & i = 2 & i = 3 \\ \hline
\beta_{1,i} & B_1 & 0 & 0 \\
\beta_{2,i} & B_2 A_2 & B_2 & 0 \\
\beta_{3,i} & B_3 A_3 A_2 & B_3 A_3 & B_3 \\ \hline
b_i = \sum_l \beta_{l,i} & x & \tfrac{1}{2} & \tfrac{1}{2} - x
\end{array}
\end{equation}
The bottom row recovers the Butcher weights of $\mathrm{EES}(2,5;x)$, so
that on a flat manifold ($\Lambda(\exp(v), y) = y + v$, $\xi(y) = f(y)$)
the three exponentials collapse to the classical update
$y_{n+1} = y_n + h \sum_i b_i K_i$. At $x = \tfrac{1}{10}$,
\eqref{eq:ees25-2n-coeffs}--\eqref{eq:cfees25-betas-general} reduce to the
numerical coefficients of Proposition~\ref{prop:cf-ees-25-tableau}.

It is the reused-stage feature of~\eqref{eq:cfees25-2n-recurrence}, in
which the same accumulated increment $\delta_l$ supplies the argument of
every exponential and the slope $K_l = \xi(Y_{l-1})$ is evaluated at the
previously \emph{computed} stage value $Y_{l-1}$ rather than at $y_n$,
that introduces the substitution-and-pseudo-stage subtlety in the
LB-series analysis below.

\subsection{LB character of CF-EES via Owren's pseudo-stage construction}
\label{subsec:cfees-character-construction}

Each of the three exponentials $\exp(V_l)$ ($l=1,2,3$) in one step of
$\mathrm{CF\text{-}EES}(2,5;x)$ has an associated single-exponential
character $\phi_{l,\mathrm{exp}} : \mathcal H_{\mathrm{MKW}} \to \mathbb R$,
defined on a planar tree by the shuffle-symmetric formula
$\phi_{l,\mathrm{exp}}([\tau_1,\ldots,\tau_k])
= \tfrac{1}{k!}\prod_i \mathcal V_l(\tau_i)$, with
$\mathcal V_l(\tau) = \sum_{i=1}^{s} \beta_{l,i}\, g_i(\tau)$ encoding the
fact that the argument of the $l$-th exponential is a linear combination
of the slopes $K_i$, each of which has its own row character $g_i$
described below.

Because $\mathrm{CF\text{-}EES}(2,5;x)$ is a reused-stage method, the
slope $K_l = \xi(Y_{l-1})$ is evaluated at the previously computed stage
value $Y_{l-1}$, so the LB character of $K_l$ is itself a Lie--Butcher
series that must be propagated through the construction. Following
\citet[Theorem~2.5]{owren2006order}, the row character $g_l$ at stage~$l$
satisfies the substitution recursion
\begin{equation}
\label{eq:cfees-row-recursion}
  g_l\bigl([\tau_1,\ldots,\tau_k]\bigr)
  = \sum_{j=0}^{k}
      g_l\bigl(B_+(\tau_1,\ldots,\tau_j)\bigr)
      \cdot \phi_{l,\mathrm{exp}}\bigl(B_+(\tau_{j+1},\ldots,\tau_k)\bigr),
\end{equation}
with base case $g_l(\bullet) = 0$ for $l = 1$ and $g_l(\bullet) = 1$ for
$l \geq 2$ (so that the first stage is the identity and each later stage
contributes one additional exponential). The output of the method is the
result of applying the final exponential to the previous stage value, and
its LB character is obtained by treating this final update as a
\emph{pseudo-stage}: on a planar tree $\tau$, one grafts $\tau$ to a new
root via $B_+$ and evaluates the final row character $g_3$ at the
result,
\begin{equation}
\label{eq:cfees-pseudostage}
  \phi(\tau) := g_3\bigl(B_+(\tau)\bigr).
\end{equation}

\subsection{The LB character of \texorpdfstring{$\mathrm{CF\text{-}EES}(2,5;x)$ on planar trees of order $\leq 5$}{CF-EES(2,5;x) on planar trees of order <= 5}}
\label{subsec:alpha-cf-ees25-table}

Substituting~\eqref{eq:ees25-2n-coeffs}--\eqref{eq:ees25-2n-A} into the
pseudo-stage construction~\eqref{eq:cfees-pseudostage} gives, for every
planar rooted tree $\tau$ of order at most~$5$, an explicit rational
function $\phi(\tau) \in \mathbb Q[x]$. Table~\ref{tab:alpha-cf-ees25-x}
lists every value, organised by tree order.
The first  three rows ($|\tau| \leq 2$) match $1/\tau!$ identically in~$x$, confirming the classical order condition; deeper rows give the
$x$-dependent values that encode the antisymmetric-order cancellations proven below.

\begin{table}[ht]
\centering
\caption{LB character $\phi(\tau)$ of $\mathrm{CF\text{-}EES}(2,5;x)$ on
every planar rooted tree $\tau$ with $|\tau| \leq 5$, computed from the
pseudo-stage construction~\eqref{eq:cfees-pseudostage} with the symbolic
Williamson coefficients
\eqref{eq:ees25-2n-coeffs}--\eqref{eq:ees25-2n-A}.}
\label{tab:alpha-cf-ees25-x}
\renewcommand{\arraystretch}{2.2}
\setlength{\tabcolsep}{2.5pt}
\begin{tabular}{c@{\hspace{0.5em}}l@{\hspace{2.0em}}
                c@{\hspace{0.5em}}l@{\hspace{2.0em}}
                c@{\hspace{0.5em}}l}
\toprule
$\tau$ & $\phi(\tau)$ &
$\tau$ & $\phi(\tau)$ &
$\tau$ & $\phi(\tau)$ \\
\midrule
\multicolumn{6}{l}{\textit{Orders 0--2 (matching $1/\tau!$ identically in $x$):}}\\
\TEmpty & $1$ &
\TLeaf & $1$ &
\TChainTwo & $\tfrac{1}{2}$ \\[0.2em]
\midrule
\multicolumn{6}{l}{\textit{Order 3:}}\\
\TTwoLeaves & $\tfrac{2x-5}{32(x-1)}$ &
\TChainThree & $\tfrac{1}{8}$ &
& \\[0.2em]
\midrule
\multicolumn{6}{l}{\textit{Order 4:}}\\
\TThreeLeaves & $-\tfrac{2x+7}{192(x-1)}$ &
\TLeafChainTwo & $-\tfrac{x+2}{32(x-1)}$ &
\TChainTwoLeaf & $\tfrac{1}{32}$ \\
\TNestedTwoLeaves & $-\tfrac{2x+1}{64(x-1)}$ &
\TChainFour & $0$ &
& \\[0.2em]
\midrule
\multicolumn{6}{l}{\textit{Order 5:}}\\
\TFourLeaves & $\tfrac{8x^3+24x^2+36x-41}{6144(x-1)^3}$ &
\TLeafLeafChainTwo & $\tfrac{4x^2+10x+13}{768(x-1)^2}$ &
\TLeafChainTwoLeaf & $-\tfrac{4x+5}{384(x-1)}$ \\
\TLeafNestedTwoLeaves & $\tfrac{(2x+1)(x+2)}{256(x-1)^2}$ &
\TLeafChainThree & $0$ &
\TChainTwoLeafLeaf & $\tfrac{1}{192}$ \\
\TChainTwoChainTwo & $-\tfrac{1}{64(2x-1)}$ &
\TNestedTwoLeavesLeaf & $-\tfrac{2x+1}{256(x-1)}$ &
\TNestedThreeLeaves & $\tfrac{(2x+1)^2}{768(x-1)^2}$ \\
\TNestedLeafChainTwo & $0$ &
\TChainThreeLeaf & $0$ &
\TNestedChainTwoLeaf & $0$ \\
\TDoubleNestedTwoLeaves & $0$ &
\TChainFive & $0$ &
& \\
\bottomrule
\end{tabular}
\end{table}

\subsection{Order theorem}
\label{subsec:cfees25-order-theorem}

\begin{theorem}\label{thm:cf-ees25-orders}
For every $x \in \mathbb R \setminus \{1, \pm \tfrac{1}{2}\}$, the LB
character $\phi$ of $\mathrm{CF\text{-}EES}(2,5;x)$ defined
by~\eqref{eq:cfees-pseudostage} satisfies, on $\mathcal H_{\mathrm{MKW}}$:
\begin{enumerate}
  \item \textbf{Planar order $2$:} $\phi(\tau) = 1/\tau!$ for every
  planar $\tau$ with $|\tau| \leq 2$.
  \item \textbf{Antisymmetric order $5$:} the symmetric defect
  $D := (\mathrm{sign}\cdot\phi) \star_{\mathrm{MKW}} \phi$, where
  $\mathrm{sign}(\tau) = (-1)^{|\tau|}$, satisfies $D(\tau) = \varepsilon(\tau)$
  for every planar $\tau$ with $|\tau| \leq 5$.
\end{enumerate}
\end{theorem}

The same construction applies to $\mathrm{CF\text{-}EES}(2,7;x)$. The
explicit symbolic table for that family contains $1+1+2+5+14+42+132 = 197$
entries with $\sqrt 2$ throughout the rational expressions and is too
large to reproduce here, but the analogous statement holds:

\begin{theorem}\label{thm:cf-ees27-orders}
For every $x$ in the admissible domain of $\mathrm{EES}(2,7;x)$ and either
sign choice $\pm\sqrt{2}$ in its tableau, $\mathrm{CF\text{-}EES}(2,7;x)$
has planar order $2$ and antisymmetric order $7$ on $\mathcal H_{\mathrm{MKW}}$.
\end{theorem}

\section{RDEs on Homogeneous spaces}
\label{sec:cfees-rde-extension}

The order conditions verified in
Section~\ref{sec:cfees-order-conditions} were stated as algebraic
identities on the LB character of $\mathrm{CF\text{-}EES}(2,5;x)$. We sketch here why these identities yield the corresponding local error
rates when $\mathrm{CF\text{-}EES}(2,5;x)$ is applied to a rough
differential equation
\begin{equation}
\label{eq:cfees-rde}
  dy_t = \sum_{i=1}^{d}\, \xi_i(y_t)_{\mathcal M}\, dX^i_t,
  \qquad y_s = y \in \mathcal M = G/H,
\end{equation}
driven by an $\alpha$-H\"older planarly branched rough path $\mathbb X$
above a control $X \in C^\alpha([0,T]; \mathbb R^d)$ in the sense
of~\citet{curry2020planarly}. The argument splits cleanly into two
parts: the forward (classical) order condition requires a comparison
with the exact rough-path expansion of \citet{curry2020planarly}, while
the antisymmetric-order condition is purely algebraic and inherits
unchanged from Section~\ref{sec:cfees-order-conditions}.

\subsection{Forward order}
\label{app:cfees_fwd_order}

By \citet[Theorem~8.1]{curry2020planarly}, the exact solution of
\eqref{eq:cfees-rde} admits the LB-series representation
\begin{equation}
\label{eq:cfees-rde-exact}
  \mathbb Y_{st}
  = \sum_{\tau \in \mathcal F^{\mathrm{pl}}_A}
    \langle\, \mathbb X_{st} \circ \mathfrak{a}^{\ll},\, \tau\,\rangle\, \tau,
\end{equation}
where the sum is over $A$-decorated planar rooted forests and
$\mathfrak{a}^{\ll}: \mathcal H_{\mathrm{MKW}} \to \mathcal H_{\sshu}$ is
the planar arborification, the Hopf-algebra morphism that maps each
planar tree to the formal sum of its linear extensions. The exact LB
character of the RDE on planar trees is therefore
\begin{equation*}
  e_\mathbb X(\tau) := \langle\, \mathbb X_{st} \circ \mathfrak{a}^{\ll},\, \tau\,\rangle,
\end{equation*}
which is multiplicative in the shuffle sense and depends on the driving rough path $\mathbb X$.

Applying $\mathrm{CF\text{-}EES}(2,5;x)$ to~\eqref{eq:cfees-rde} via the
simplified Redmann--Riedel-style embedding (each $h K_i$ replaced by
$\sum_l X^l_{0,h}\, K^l_i$) gives a one-step LB character of the form
\begin{equation*}
  \phi_\mathbb X(\tau) = \phi(\tau) \cdot \langle X_{0,h},\, \tau\,\rangle,
  \qquad
  \langle X_{0,h},\, \tau\,\rangle := \prod_{v \in V(\tau)} X^{\ell(v)}_{0,h},
\end{equation*}
where $\phi(\tau)$ is the path-independent character of
Section~\ref{subsec:alpha-cf-ees25-table} and $\ell(v)$ is the label of
vertex $v$. For a \emph{geometric} planarly branched rough path, the
planar arborification factorises as
$\langle\, \mathbb X_{st} \circ \mathfrak{a}^{\ll},\, \tau\,\rangle
= \langle X_{0,h}, \tau\rangle / \tau!$, so the local
order condition $\phi_\mathbb X(\tau) = e_\mathbb X(\tau)$ on every
planar $\tau$ with $|\tau| \leq p$ is equivalent to the algebraic
identity
\begin{equation*}
  \phi(\tau) = 1/\tau!, \qquad |\tau| \leq p.
\end{equation*}
By Theorem~\ref{thm:cf-ees25-orders}\,(i) this holds for $|\tau| \leq 2$,
identically in $x$, so $\mathrm{CF\text{-}EES}(2,5;x)$ has local error
of order $(2+1)\alpha = 3\alpha$ when applied to~\eqref{eq:cfees-rde};
the same argument with Theorem~\ref{thm:cf-ees27-orders} gives local
order $3\alpha$ for $\mathrm{CF\text{-}EES}(2,7;x)$.

\subsection{Backward order}
\label{app:cfees_bwd_order}
The symmetric defect
$D := (\mathrm{sign}\cdot\phi) \star_{\mathrm{MKW}} \phi$ is the LB
character of the composition $\Phi^{\mathbb X^{\mathrm{rev}}} \circ
\Phi^{\mathbb X}$, that is, one step of $\mathrm{CF\text{-}EES}(2,5;x)$
under~$\mathbb X$ followed by one step under the time-reversed driver
$\mathbb X^{\mathrm{rev}}$. $D(\tau)$ can be computed entirely
from the Butcher tableau of $\mathrm{EES}(2,5;x)$ and the MKW
coproduct, without depending on the driving rough path. The path-applied character of the
composition factorises in the same way as the forward character,
\begin{equation*}
  D_\mathbb X(\tau)
  = D(\tau) \cdot \langle X_{0,h},\, \tau\,\rangle,
\end{equation*}
so that vanishing of $D(\tau)$ at the algebra level implies vanishing
of $D_\mathbb X(\tau)$ for any rough path $\mathbb X$ and any tree of
the same order. Theorem~\ref{thm:cf-ees25-orders}\,(ii) gives
$D(\tau) = \varepsilon(\tau)$ on every planar $\tau$ with $|\tau| \leq 5$,
identically in $x$. Hence
$\Phi^{\mathbb X^{\mathrm{rev}}} \circ \Phi^{\mathbb X}$ recovers the
identity to local order $(5+1)\alpha = 6\alpha$, for any geometric
planarly branched rough path. The same argument with
Theorem~\ref{thm:cf-ees27-orders} gives local order $8\alpha$ for
$\mathrm{CF\text{-}EES}(2,7;x)$.

%%%%%%%%%%%%%%%%%%%%%%%%%%%%%%%%%%%%%%%%%%%%%%%%%%
\subsection{Backpropagation through homogeneous-space 2N commutator-free methods}
\label{app:backprop_through_homogeneous_cf_methods}
%%%%%%%%%%%%%%%%%%%%%%%%%%%%%%%%%%%%%%%%%%%%%%%%%%
The algorithm for backpropagation through a homogeneous-space
commutator-free scheme is given in Algorithm~\ref{alg:homogeneous_cf_backprop}.
We assume that the solver is at least approximately reversible, so the backward pass
recovers the preceding stage states and the low-storage Lie-algebra
register by applying the reverse recurrence. Unlike the Euclidean case,
the adjoints with respect to stage states are covectors
$\lambda_{Y_l} \in T^*_{Y_l}M$, and each reverse stage applies the
pullback of the homogeneous-space action. Thus the backward sweep may be
viewed as a discrete evolution on the cotangent bundle $T^*M$.

For
\[
    \Psi_l(Y,\delta)
    =
    \Lambda\!\left(\exp(B_l\delta),Y\right),
\]
we write $D_Y\Psi_l^*$ and $D_\delta\Psi_l^*$ for the adjoints of the
differentials with respect to the state and Lie-algebra arguments.

\begin{algorithm}[H]
\caption{Backpropagation through Homogeneous-Space $2N$ Commutator-Free Schemes}
\label{alg:homogeneous_cf_backprop}
\textbf{Input:} $y_{n+1}$, $\lambda_{Y_s} = \partial_{y_{n+1}}L$\\
\textbf{Input:} Running derivative with respect to $\theta$, $\partial_\theta L$\\
\textbf{Input:} Homogeneous-space commutator-free method $\Phi$ with $\{A_l\}_{l=1}^{s-1}$ and $\{B_l\}_{l=1}^s$.
\begin{algorithmic}

\State $Y_s = y_{n+1}$, $\lambda_\delta = 0$

\For {$l = s, \ldots, 1$}
    \State Recover $Y_{l-1}$, $\delta_l$, and $K_l$ by the algebraic reverse recurrence $\Phi^\mathrm{rev}$
    \State $\lambda_{Y_{l-1}} = D_Y\Psi_l(Y_{l-1},\delta_l)^* \lambda_{Y_l}$
    \State $\lambda_{\delta_l} \mathrel{+}= D_\delta\Psi_l(Y_{l-1},\delta_l)^* \lambda_{Y_l}$
    \State $\lambda_{K_l} = \lambda_{\delta_l}$
    \State $d_\theta, \eta_l = \mathrm{backprop}_{\xi}(dX,\lambda_{K_l})$
    \State $\lambda_{Y_{l-1}} \mathrel{+}= \eta_l$
    \State $\partial_\theta L \mathrel{+}= d_\theta$
    \State $\lambda_{\delta_{l-1}} = A_l\lambda_{\delta_l}$
\EndFor

\State $\partial_{y_n}L = \lambda_{Y_0}$\\
\Return $y_n, \partial_{y_n}L, \partial_\theta L$

\end{algorithmic}
\end{algorithm}

\section{Convergence experiments}
\label{app:convergence_experiments}

We verify the global error rates given in Theorem~\ref{thm:ees_global_error} experimentally by reproducing the example given in ~\citep{redmann2020runge, deya2012milstein} for $\mathrm{EES}(2,5; 1/10)$ and $\mathrm{EES}(2,7; (5 - 3\sqrt{2}) / 14)$. We take the RDE
\begin{equation*}
    dy_t = \cos(y_t) d\mathbf{X}^{(1)}_t + \sin(y_t)d\mathbf{X}^{(2)}_t, \quad y_0 = 1
\end{equation*}
for $t \in [0,1]$, where $\mathbf{X}$ is the geometric lift of a 2-dimensional fractional Brownian motion (fBm) with Hurst index $H$. We compute the average of the maximal discretisation error over $M = 10$ realisations of the RDE,
\begin{equation*}
    \mathcal{E}(h) := \frac{1}{M}\sum_{i=1}^M \max_{n=0,\ldots, N} |y_i(t_n) - y_{i,n}|,
\end{equation*}
where $y_i(t)$ denotes the solution to the $i^{th}$ realisation of the RDE and $y_{i,n}$ denotes the discretisation of the $i^{th}$ solution using an $\mathrm{EES}$ scheme. Additionally, we evaluate the average error when recovering the initial condition,
\begin{equation*}
    \backvec{\mathcal{E}}(h) := \frac{1}{M}\sum_{i=1}^M |y_0 - \backvec{y}_{i,n}|.
\end{equation*}
From ~\citep{friz2014convergence}, the rate $r_0$ can be chosen arbitrarily close to $2H - 1/2$ for a fractional Brownian motion with Hurst parameter $H$. It follows that we expect $\eta_1 = 2H - 1/2$ in Theorem \ref{thm:ees_global_error} for both $\mathrm{EES}(2,5)$ and $\mathrm{EES}(2,7)$, and $\eta_2 = 6H-1$ for $\mathrm{EES}(2,5)$ and $\eta_2 = 8H-1$ for $\mathrm{EES}(2,7)$. We show the rates for $H \in (0.4, 0.5, 0.6)$ in Figure~\ref{fig:ees_convergence_12}.

The same convergence orders are observed for $\mathrm{CF\text{-}EES}(2,5)$ and
$\mathrm{CF\text{-}EES}(2,7)$; see Figure~\ref{fig:cfees_convergence_12}.
We consider the $\mathrm{SO}(3)$ rough differential equation
\[
dX_t = \sum_{a=1}^{2} X_t\,\xi_a(X_t)\,d\mathbf{X}^a_t,
\qquad X_0 = I,
\]
driven by a two-dimensional fractional Brownian motion $\mathbf{X}$ with Hurst parameter
$H \in \{0.4, 0.5, 0.6\}$. Writing $X=(X_{ij})_{i,j=1}^3$, the coefficient maps
$\xi_a:\mathrm{SO}(3)\to\mathfrak{so}(3)$ are affine in the entries of $X$:
\begin{equation*}
\begin{aligned}
\xi_1(X) &=
\begin{pmatrix}
0 & -0.1 - 0.3X_{31} & \phantom{-}0.25 + 0.2X_{23} \\
0.1 + 0.3X_{31} & 0 & -0.9 - 0.2X_{11} \\
-0.25 - 0.2X_{23} & \phantom{-}0.9 + 0.2X_{11} & 0
\end{pmatrix}, \\
\xi_2(X) &=
\begin{pmatrix}
0 & -0.8 - 0.15X_{33} & -0.35 + 0.2X_{22} \\
0.8 + 0.15X_{33} & 0 & -0.15 - 0.25X_{12} \\
0.35 - 0.2X_{22} & \phantom{-}0.15 + 0.25X_{12} & 0
\end{pmatrix}.
\end{aligned}
\end{equation*}

\begin{figure}[ht]
\centering
\includegraphics[
    width=\textwidth,
]{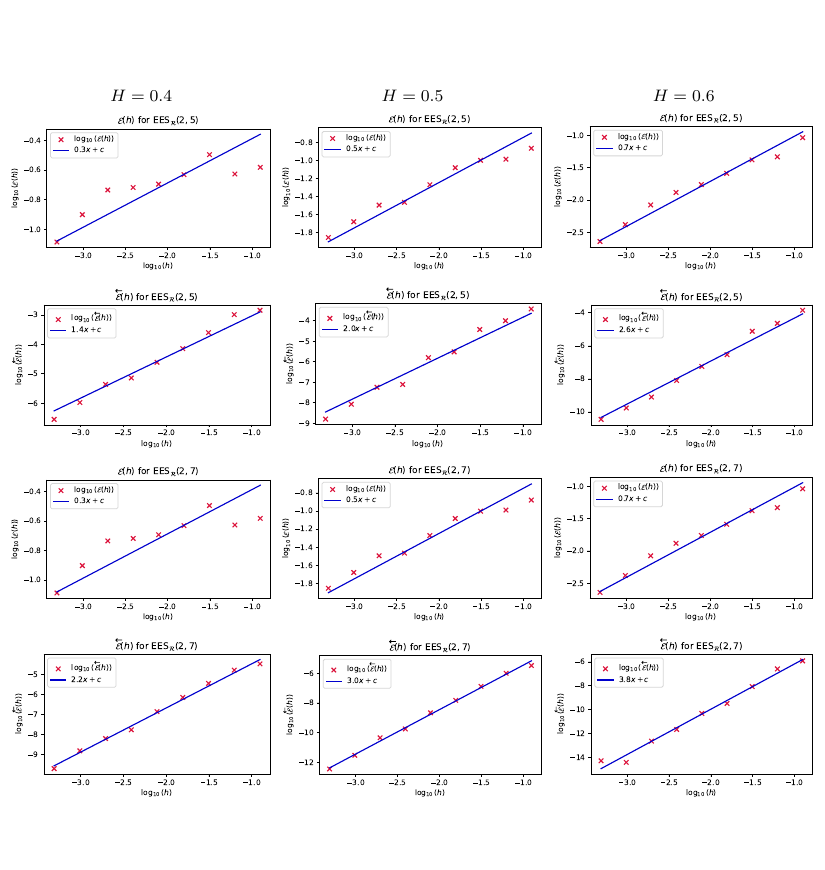}
\vspace{-3em}
\caption{\textit{Convergence rates for $\mathrm{EES}$ with $H \in (0.4, 0.5, 0.6)$}}
\label{fig:ees_convergence_12}
\end{figure}
\vspace{-2em}

\begin{figure}[ht]
\centering
\includegraphics[
    width=\textwidth,
]{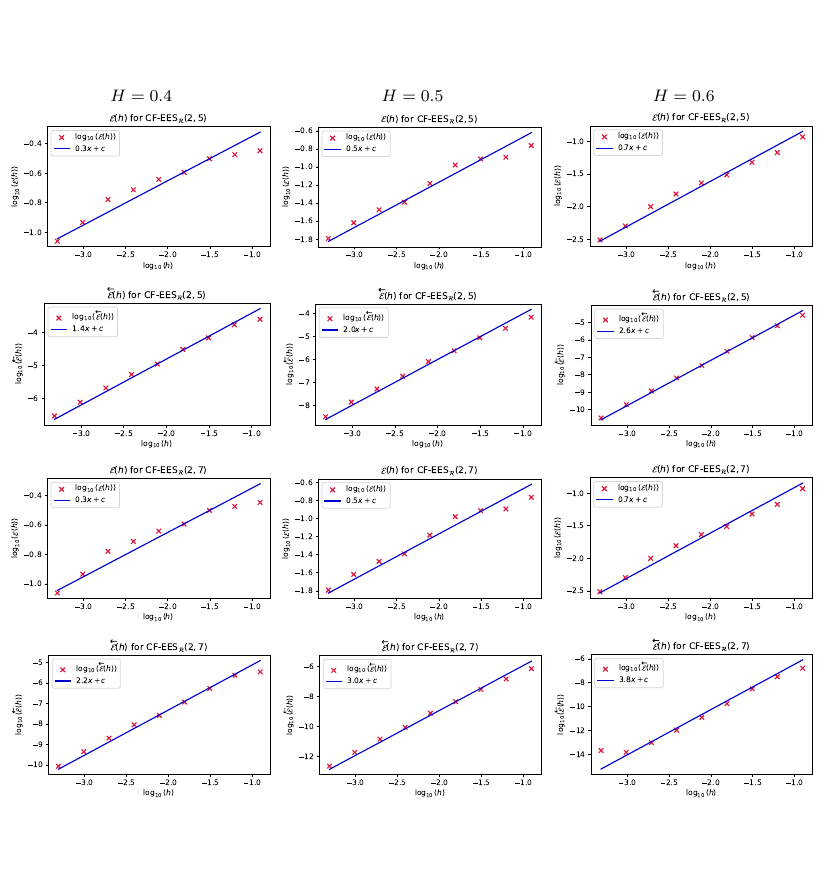}
\vspace{-3em}
\caption{\textit{Convergence rates for $\mathrm{CF\text{-}EES}$ with $H \in (0.4, 0.5, 0.6)$}}
\label{fig:cfees_convergence_12}
\end{figure}

\begin{figure}[ht]
\centering
\includegraphics[width=0.5\textwidth]{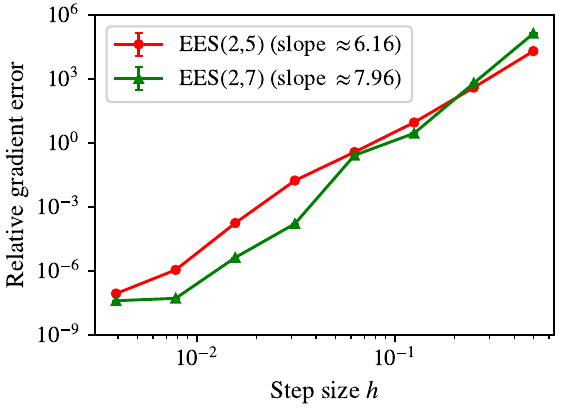}
\caption{\textit{The higher order of $\mathrm{EES(2,7)}$ is nullified by instability due to non-smooth NSDE vector fields at practical step sizes.}}
\label{fig:ees25_vs_27_grad_error}
\end{figure}

\clearpage

\section{Additional Experiments}
\label{app:additional_experiments}
\subsection{High-dimensional GBM with stiff drift}

Consider learning the dynamics of a high-dimensional geometric Brownian motion (GBM)
\begin{equation*}
    dy_t = A y_t dt + \sigma y_t dW_t, \quad y_0 \in \mathbb{R}^d,
\end{equation*}
where $A \in \mathbb{R}^{d \times d}$ and $\sigma \in \mathbb{R}$. We introduce a stiff drift component by choosing $A = QDQ^T$, where $D = \text{diag}(\lambda_0, \lambda_1, \ldots, \lambda_{d-1})$, $\lambda_i = -20(1 + \frac{i}{d})$, and $Q$ is a randomly generated orthogonal matrix, and take $d = 25$ and $\sigma = 0.1$. We choose to learn the dynamics using a Neural SDE of the form
\begin{equation*}
    dz_t = g(z_t; \theta_g) dt + f(z_t; \theta_f) \circ dW_t, \quad z_0 = h(\mathbf{x}, \theta_h) \in \mathbb{R}^{d_z},
\end{equation*}
where $f,g$ are neural networks with the same architecture as in the OU experiment of Section~\ref{sec:experiments}. We integrate the SDEs over $t \in [0,1]$ for 1,000 epochs, sampling 10,000 realisations of the dynamics at every epoch. The Adam optimiser is used with a fixed learning rate of $2\times 10^{-1}$.\par\smallskip

As in the previous example, Table~\ref{table:gbm} and Figure~\ref{fig:gbm} show the results of training using various reversible methods, with the step size chosen such that the number of evaluations of $f,g$ is fixed. We see that the instability caused by the stiff drift results in diverging MSE for all solvers except $\mathrm{EES}(2,5)$, which manages to retain moderate stability for the entire 1,000 epochs of training.

\begin{table}[ht]
\centering
    \caption{Metrics for stiff GBM dynamics. The step size is chosen such that the total number of evaluations of $f,g$ per integration is fixed.}
    \label{table:gbm}
    \renewcommand{\arraystretch}{1.1}
    \begin{tabular}{lcccc}
        \toprule
        Method & \# Eval. / Step & Step Size & Terminal MSE & Runtime (s) \\
        \midrule
        Reversible Heun & 1 & $1/60$ & -- & 1283.6 \\
        MCF Euler & 2 & $1/30$ & -- & 1119.9 \\
        MCF Midpoint & 4 & $1/15$ & -- & 1270.1 \\
        $\mathrm{EES}(2,5)$ & 3 & $1/20$ & $1.1803 \times 10^{-4}$ & 1050.0 \\
        \bottomrule
    \end{tabular}
\end{table}

Figure \ref{fig:gbm_grad} shows the MSE of the gradient of the loss during training, where the true gradient is computed by autodifferentiation through a discretise-then-optimise solution, using the same solver and step size. Despite its near-reversibility, $\mathrm{EES}(2,5)$ achieves a lower gradient MSE compared to other solvers. This effect is likely the result of the superior stability of $\mathrm{EES}(2,5)$, in combination with the linear nature of the target SDE.

\begin{figure}[ht]
\centering
\begin{minipage}{0.45\textwidth}
  \includegraphics[width = \textwidth,trim={0mm 4mm 0mm 0mm},clip]{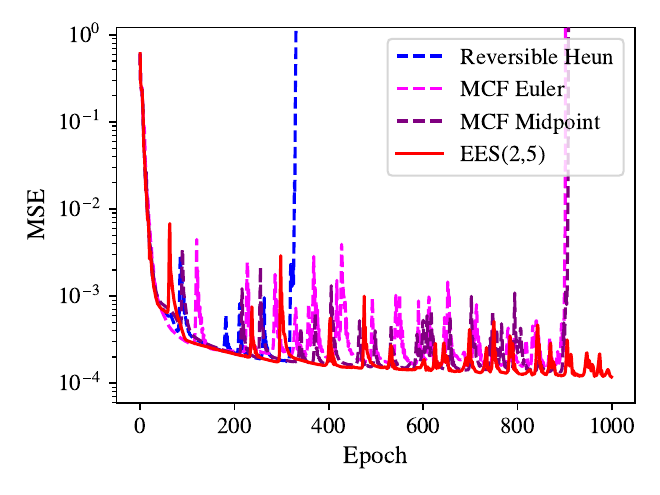}
    \captionsetup{oneside,margin={0.5cm,0cm}}
    \caption{\textit{Training MSE for GBM dynamics with a fixed number of evaluations of $f,g$.}}
    \label{fig:gbm}
\end{minipage}%
\begin{minipage}{0.45\textwidth}
  \includegraphics[width = \textwidth,trim={0mm 4mm 0mm 0mm},clip]{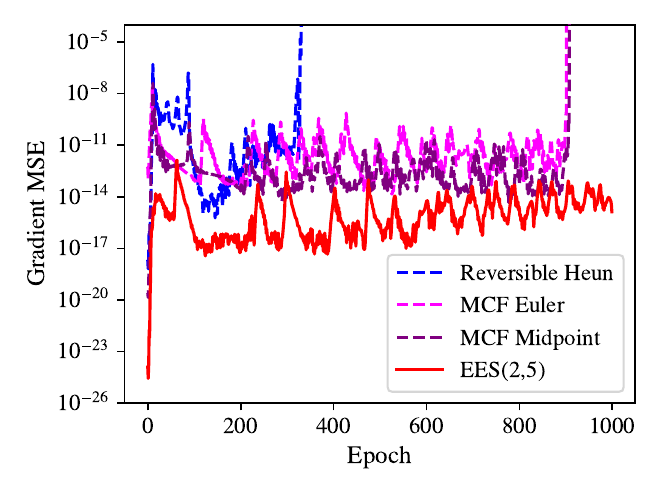}
    \captionsetup{oneside,margin={0.5cm,0cm}}
    \caption{\textit{Gradient MSE for GBM dynamics with a fixed number of evaluations of $f,g$.}}
    \label{fig:gbm_grad}
\end{minipage}
\end{figure}

\clearpage
\subsection{Stochastic Volatility}
\label{app:further_stoch_vol}

We present results for the remaining stochastic volatility models in Table~\ref{tab:further_stoch_vol}. $\mathrm{EES}(2,5)$ remains the lowest runtime integrator, while maintaining identical terminal MSE performance.

\begin{table}[ht]
\centering
\caption{Metrics for benchmark stochastic volatility models.}
\label{tab:further_stoch_vol}
{\renewcommand{\arraystretch}{1.1}
\setlength{\tabcolsep}{3pt}
\begin{tabular}{@{}llcccc@{}}
\toprule
Model & Method & \#Eval. / Step & Step Size & Terminal MSE ($\pm 2\sigma$) & Runtime (s) \\
\midrule
\multirow{4}{*}{Black--Scholes}
  & Reversible Heun & 1 & $1/504$ & $6.46 \scriptstyle{\pm 1.01}$ & 999.8 \\
  & MCF Euler & 2 & $1/252$ & $6.46 \scriptstyle{\pm 1.01}$ & 937.1 \\
  & MCF Midpoint & 4 & $1/126$ & $6.44 \scriptstyle{\pm 1.01}$ & 514.9 \\
  & $\mathrm{EES}(2,5)$ & 3 & $1/168$ & $6.44 \scriptstyle{\pm 1.01}$ & 405.6 \\
\addlinespace

\multirow{4}{*}{Classical Bergomi}
  & Reversible Heun & 1 & $1/504$ & $19.97 \scriptstyle{\pm 1.39}$ & 1001.2 \\
  & MCF Euler & 2 & $1/252$ & $19.97 \scriptstyle{\pm 1.39}$ & 922.7 \\
  & MCF Midpoint & 4 & $1/126$ & $19.98 \scriptstyle{\pm 1.39}$ & 522.3 \\
  & $\mathrm{EES}(2,5)$ & 3 & $1/168$ & $19.97 \scriptstyle{\pm 1.39}$ & 401.8 \\
\addlinespace

\multirow{4}{*}{Local stoch vol}
  & Reversible Heun & 1 & $1/504$ & $6.66 \scriptstyle{\pm 1.02}$ & 1010.5 \\
  & MCF Euler & 2 & $1/252$ & $6.66 \scriptstyle{\pm 1.02}$ & 924.7 \\
  & MCF Midpoint & 4 & $1/126$ & $6.64 \scriptstyle{\pm 1.02}$ & 521.1 \\
  & $\mathrm{EES}(2,5)$ & 3 & $1/168$ & $6.64 \scriptstyle{\pm 1.02}$ & 414.0 \\
\addlinespace

\multirow{4}{*}{Heston}
  & Reversible Heun & 1 & $1/504$ & $5.68 \scriptstyle{\pm 0.82}$ & 986.1 \\
  & MCF Euler & 2 & $1/252$ & $5.69 \scriptstyle{\pm 0.82}$ & 928.0 \\
  & MCF Midpoint & 4 & $1/126$ & $5.69 \scriptstyle{\pm 0.82}$ & 521.7 \\
  & $\mathrm{EES}(2,5)$ & 3 & $1/168$ & $5.69 \scriptstyle{\pm 0.82}$ & 409.1 \\
\addlinespace

\multirow{4}{*}{Rough Heston}
  & Reversible Heun & 1 & $1/504$ & $6.54 \scriptstyle{\pm 2.16}$ & 983.4 \\
  & MCF Euler & 2 & $1/252$ & $6.54 \scriptstyle{\pm 2.16}$ & 914.8 \\
  & MCF Midpoint & 4 & $1/126$ & $6.55 \scriptstyle{\pm 2.16}$ & 511.8 \\
  & $\mathrm{EES}(2,5)$ & 3 & $1/168$ & $6.55 \scriptstyle{\pm 2.16}$ & 402.8 \\
\addlinespace

\multirow{4}{*}{\shortstack{Quadratic\\rough Heston}}
  & Reversible Heun & 1 & $1/504$ & $5.38 \scriptstyle{\pm 1.21}$ & 977.2 \\
  & MCF Euler & 2 & $1/252$ & $5.39 \scriptstyle{\pm 1.21}$ & 866.2 \\
  & MCF Midpoint & 4 & $1/126$ & $5.39 \scriptstyle{\pm 1.21}$ & 478.8 \\
  & $\mathrm{EES}(2,5)$ & 3 & $1/168$ & $5.39 \scriptstyle{\pm 1.21}$ & 410.7 \\
\bottomrule
\end{tabular}}
\end{table}

\subsection{Molecular Dynamics}
\label{app:further_md}
\begin{wrapfigure}{R}{0.38\textwidth}
    \vspace{-1em}
    \includegraphics[width = 0.38\textwidth]{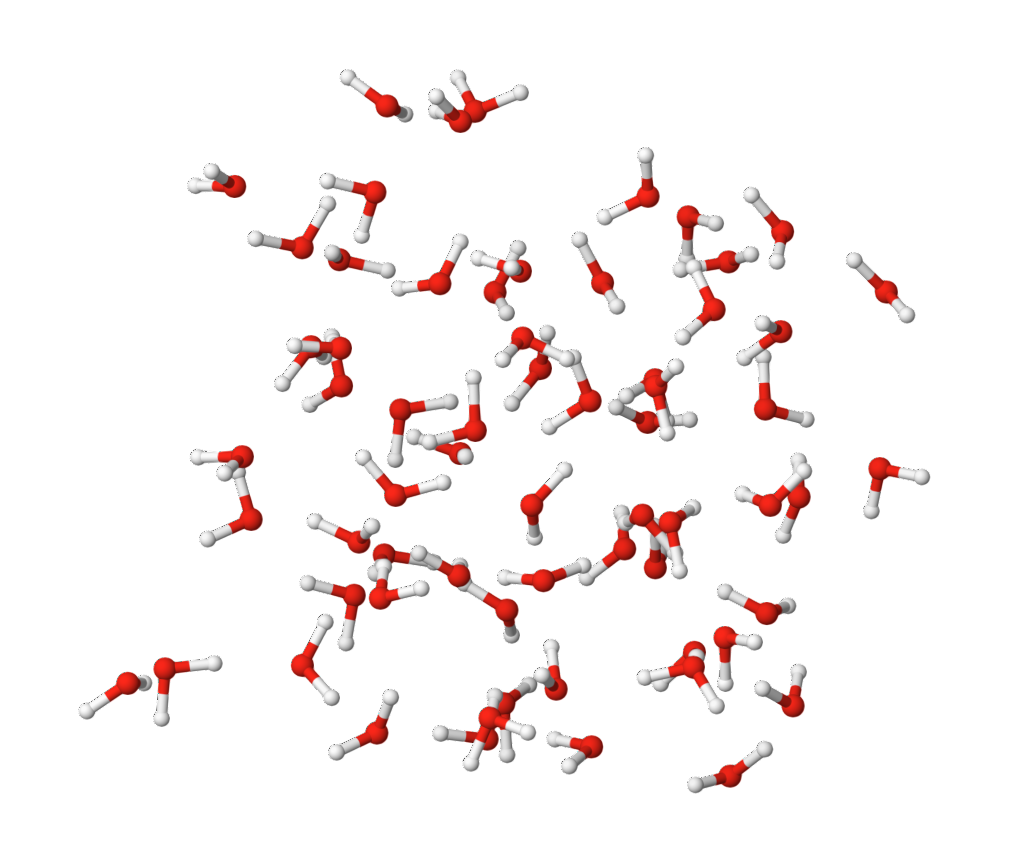}
    \caption{\textit{Initial configuration of the 64-molecule water system}}
    \label{fig:md_water_init}
\vspace{-1.5em}
\end{wrapfigure}
We investigate whether $\mathrm{EES}(2,5)$ can train a neural molecular dynamics force field from long Langevin rollouts under a fixed force-evaluation budget while retaining low-memory adjoint differentiation. This setting is motivated by force-field objectives defined on trajectory-level ensemble observables, such as vibrational spectra, rather than only on static energy-surface quantities. The central difficulty is therefore to obtain useful gradients through long stochastic trajectories without storing the full rollout.

We benchmark on a 64-molecule (192-atom) water system with a pre-trained embedded atom neural network (EANN) force field \cite{zhangEmbeddedAtomNeural2019}. The Langevin state is $y_t = (r_t, v_t) \in \mathbb{R}^{6N_{\mathrm{atom}}} = \mathbb{R}^{1152}$, where $r_t \in \mathbb{R}^{3N_{\mathrm{atom}}}$ and $v_t \in \mathbb{R}^{3N_{\mathrm{atom}}}$ denote the atomic positions and velocities at time $t$. All solvers are compared under matched total evaluations of the drift and diffusion fields. In contrast to the full IR-spectrum fitting pipeline of \citet{hanRefiningPotentialEnergy2025}, our benchmark optimizes a differentiable proxy for the spectral objective: the normalized squared dipole-velocity signal accumulated along finite-temperature Langevin trajectories.

For each batch $b$, we simulate paired trajectories from velocities $\pm v_0$ and minimize the empirical mean
\begin{equation}
    \label{eq:md_loss}
    \mathcal{L}_{\mathrm{MD}}^{(B)}(\theta)
    =
    \frac{1}{B}
    \sum_{b=1}^{B}
    \frac{1}{T\,N_{\mathrm{mol}}}
    \int_0^T
    \left\|\dot{\mu}_{\theta}^{(b)}(t)\right\|_2^2\,\mathrm{d}t .
\end{equation}
Here $\dot{\mu}_{\theta}$ is a charge-weighted molecular dipole-velocity proxy computed along the simulated trajectory, so the experiment retains the main computational bottleneck of spectral fitting: differentiating a neural force field through long stochastic rollouts of large state vectors.

\begin{table}[ht]
\centering
    \caption{Metrics for molecular dynamics. The step size is chosen so that each solver uses 252 evaluations per integration over the same rollout time. MCF midpoint became unstable.}
    \renewcommand{\arraystretch}{1.1}
    \begin{tabular}{lcccc}
        \toprule
        Method & \# Eval. / Step & Step Size & Terminal MSE ($\times 10^{-2}$) & Runtime (s) \\
        \midrule
        Reversible Heun & 1 & $1/2520$ & $44.85{\scriptstyle \pm 0.90}$ & $1083.5$ \\
        MCF Euler & 2 & $1/1260$ & $44.66{\scriptstyle \pm 0.42}$ & $984.1$ \\
        MCF Midpoint & 4 & $1/630$ & -- & $757.6$ \\
        $\mathrm{EES}(2,5)$ & 3 & $1/840$ & $45.04{\scriptstyle \pm 1.09}$ & $577.7$ \\
        \bottomrule
    \end{tabular}
\end{table}

\begin{wrapfigure}{R}{0.42\textwidth}
    \vspace{-1em}
    \includegraphics[width = 0.4\textwidth]{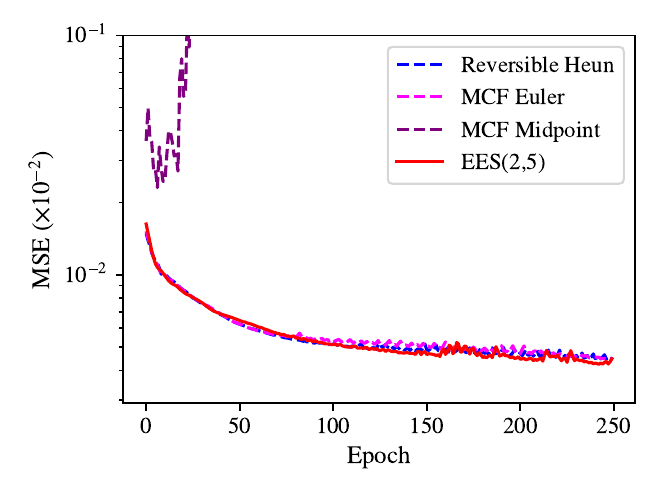}
    \captionsetup{oneside,margin={0.5cm,0cm}}
    \caption{\textit{Training proxy MSE for Langevin MD.}}
    \label{fig:md_mse}
\vspace{-3em}
\end{wrapfigure}
Under this fixed-budget comparison, $\mathrm{EES}(2,5)$ attains a terminal proxy error statistically indistinguishable from the stable baselines while reducing wall-clock time by roughly 47\% relative to Reversible Heun and 41\% relative to MCF Euler. MCF Midpoint diverged at the prescribed step size and is therefore omitted from the accuracy comparison. We note one limitation: the present subsection evaluates a differentiable Langevin proxy rather than the full experimental IR spectral loss, and full-spectrum fitting together with memory-scaling measurements should be reported separately before any stronger end-to-end IR-fitting claim is made.

\section{Experimental and Implementation Details}
\label{app:experimental_and_implementation_details}

\subsection{Hardware and Software Details}
\label{app:hardware_and_software_details}
\paragraph{Software details.} Experiments use Python 3.13 with CUDA 13.1. Models are implemented in \href{https://github.com/jax-ml/jax}{JAX} 0.10.0 \cite{bradbury_jax_2018} and \href{https://github.com/patrick-kidger/equinox}{Equinox} 0.13.7 \cite{kidger_equinox_2021}. Differential equation solves use \href{https://github.com/patrick-kidger/diffrax}{Diffrax} 0.7.1 \cite{kidger_neural_2021}, dataloading uses \href{https://github.com/luke-a-thompson/cyreal_dynamics}{Cyreal} 0.2.1 \cite{morad_cyreal_2026}, and path-signature losses use \href{https://github.com/luke-a-thompson/Stochastax}{Stochastax} 0.5.0 and \href{https://pypi.org/project/pysiglib/}{Pysiglib} 3.0.0 \cite{shmelev_pysiglib_2025} in the manifold and Euclidean cases, respectively. The implementation of the Reversible adjoint used in our experiments is sourced from Sam McCallum's \href{https://github.com/sammccallum/diffrax}{Diffrax fork} (\texttt{ReversibleAdjoint}).

The writing of this work required the creation of three software packages to support EES, $2N$, and geometric numerical integration in Diffrax and Julia. We present these packages here and the license under which they are released.
\begin{table}[ht]
    \centering
    \caption{New packages introduced to support the present work}
    \label{tab:new_packages}
    \begin{tabular}{llll}
        \toprule
        \textbf{Package} & \textbf{Ecosystem} & \textbf{Functionality} & \textbf{License} \\
        \midrule
        \href{https://anonymous.4open.science/r/2ndiffrax/README.md}{Diffrax-lowstorage}
        & Jax
        & $2N$ integrators, incl. $\mathrm{EES}$
        & \texttt{Apache-2.0} \\
        \href{https://anonymous.4open.science/r/jax_geo_int/README.md}{Georax}
        & Jax
        & Geometric integrators, incl. $\mathrm{CF\text{-}EES}$
        & \texttt{Apache-2.0} \\
        \href{https://anonymous.4open.science/r/EffectivelySymmetric-jl/}{EffectivelySymmetric.jl}
        & SciML
        & Standard, $2N$, and $\mathrm{CF\text{-}EES}$ methods
        & \texttt{Apache-2.0} \\
        \bottomrule
    \end{tabular}
\end{table}

\paragraph{Hardware details.}
All training and evaluation runs were conducted on a single workstation equipped with one NVIDIA RTX 5080 GPU with 16GB of GPU memory, an AMD Ryzen 9 9950X3D CPU, and 64GB of system memory, running Ubuntu 24.04 LTS.

\subsection{Additional Details for OU Experiments}
\label{app:ou_details}

\paragraph{Dynamics.}
We learn the Ornstein--Uhlenbeck (OU) dynamics
\begin{equation*}
    dy_t = \nu(\mu - y_t)\, dt + \sigma\, dW_t, \quad y_0 \in \mathbb{R},
\end{equation*}
under the high-volatility regime $\nu = 0.2$, $\mu = 0.1$, $\sigma = 2$.

\paragraph{Model.}
We use a Neural Langevin SDE (LSDE)~\cite{oh2024stable},
\begin{equation*}
    dz_t = g(z_t; \theta_g)\, dt + f(t; \theta_f) \circ dW_t, \quad z_0 = h(\mathbf{x}, \theta_h) \in \mathbb{R}^{d_z},
\end{equation*}
where $h$ is a learnable affine function of the input data $\mathbf{x} = \{x_n\}_{n\geq 0}$, $x_n \in \mathbb{R}^2$, sampled from the true OU dynamics, and $g, f$ are neural networks parametrised by $\theta_g, \theta_f$ respectively. We take latent dimensionality $d_z = 32$, and parametrise $f, g$ as 2-layer neural networks of width $32$ with LipSwish activations.

\paragraph{Training.}
The SDEs are integrated over $t \in [0,10]$ and the LSDE is trained for $250$ epochs using the Adam optimiser with a fixed learning rate of $10^{-3}$. At each epoch, $50{,}000$ realisations of the trained dynamics are sampled and the MSE loss is computed against the true OU dynamics.

\subsection{Additional Details for GBM Experiments}
\label{app:gbm_details}

\paragraph{Dynamics.}
We learn the dynamics of a high-dimensional geometric Brownian motion (GBM)
\begin{equation*}
    dy_t = A y_t\, dt + \sigma y_t\, dW_t, \quad y_0 \in \mathbb{R}^d,
\end{equation*}
where $A \in \mathbb{R}^{d \times d}$ and $\sigma \in \mathbb{R}$. We introduce a stiff drift component by choosing $A = QDQ^T$, where $D = \mathrm{diag}(\lambda_0, \lambda_1, \ldots, \lambda_{d-1})$ with $\lambda_i = -20\,(1 + \tfrac{i}{d})$ and $Q$ a randomly generated orthogonal matrix; we take $d = 25$ and $\sigma = 0.1$.

\paragraph{Model.}
We use a Neural Langevin SDE (LSDE) \cite{oh2024stable},
\begin{equation}
    \label{eq:neural_sde}
    dz_t = g(z_t; \theta_g)\, dt + f(z_t; \theta_f) \circ dW_t, \quad z_0 = h(\mathbf{x}, \theta_h) \in \mathbb{R}^{d_z},
\end{equation}
where $h$ is a learnable affine function of the input data $\mathbf{x} = \{x_n\}_{n\geq 0}$, $x_n \in \mathbb{R}^2$, sampled from the true dynamics, and $g, f$ are neural networks parametrised by $\theta_g, \theta_f$ respectively.

\paragraph{Training.}
We integrate the SDEs over $t \in [0,1]$ for $1{,}000$ epochs, sampling $10{,}000$ realisations of the dynamics at every epoch. The Adam optimiser is used with a fixed learning rate of $2 \times 10^{-1}$.

\subsection{Additional Details for Stochastic Volatility Experiments}
\paragraph{Model.}
We use an unconditional Euclidean neural SDE $\mathrm{d}x_\tau=f(x_\tau,\tau)\mathrm{d}\tau+g(x_\tau,\tau)\mathrm{d}W_\tau$, $x_0=\mathbf{1}$, for the stochastic-volatility benchmarks. The drift is a 4-layer MLP (width 16, LipSwish), while the diffusion is a 3-layer MLP (width 16, LipSwish, softplus output scaled by $0.2$). Both take $(\tau,x_\tau)\in\mathbb{R}^{1+d}$ as input, and $g$ is interpreted as the diagonal matrix $\mathrm{diag}(\sigma(x_\tau,\tau))$, giving learned state- and time-dependent coordinatewise volatility.

\paragraph{Dataset.}
We embed all stochastic volatility models in the RDE framework of \citet{bonesini_rough_2024}, which spans classical Black–Scholes \cite{Black1973} to contemporary rough volatility models \cite{gatheral_volatility_2014}. Numerically, we form the lead--lag path on a uniform noise grid of \(N_{\mathrm{noise}} = 128\) intervals and integrate the Wong--Zakai ODE on a finer uniform grid of \(N_{\mathrm{RDE}} = 768\) fixed steps with linear interpolation using Tsitouras' 5(4) Runge--Kutta method \cite{tsitouras_rungekutta_2011}, recording states at the noise times. The underlying Riemann-Liouville volatility process is simulated with the hybrid method of \cite{bennedsen_hybrid_2017}. Parameter selections are shown in Table~\ref{tab:rough-volatility-parameters}. We adopt the rBergomi parameter values of \citet{callum_rough_2023}, namely \(v_0 = 0.04\), \(\nu = 1.991\), \(H=0.25\), and \(\rho=-0.848\). For SPX options on 30/05/2022, this calibration gives a model-implied volatility surface with mean-squared error \(3.73\times 10^{-5}\) relative to market implied volatilities. The remaining coefficients are fixed to standard benchmark values in plausible financial ranges. Since we use the model in its standard form, we refer to \cite[Equation~1.2]{bonesini_rough_2024} for the full specification.

\begin{table}[ht]
\centering
\caption{Parameters for the rough volatility model configurations. Missing entries indicate parameters not used by the model.}
\label{tab:rough-volatility-parameters}
\begin{tabular}{@{}lccccccc@{}}
\toprule
Model & $S_0$ & $v_0$ & $\rho$ & $\nu$ & $H$ & $\lambda$ & $\bar{v}$ \\
\midrule
Black--Scholes & 1.0 & 0.04 & -- & -- & -- & -- & -- \\
Classical Bergomi & 1.0 & 0.04 & -0.7 & -- & -- & -- & -- \\
Local stoch volatility & 1.0 & 0.04 & -0.3 & -- & -- & 1.0 & 0.04 \\
Heston & 1.0 & 0.04 & -0.7 & 0.5 & -- & 1.5 & 0.04 \\
Rough Heston & 1.0 & 0.04 & -0.7 & 0.5 & 0.1 & 1.5 & 0.04 \\
Quadratic rough Heston & 1.0 & 0.04 & -- & -- & 0.1 & 1.0 & -- \\
Rough Bergomi & 1.0 & 0.04 & -0.848 & 1.991 & 0.1 & -- & -- \\
\bottomrule
\end{tabular}
\end{table}

\paragraph{Training.}
The model is trained with truncated (time-augmented) path-signature MMD$^2$ loss, vanilla SGD at learning rate $10^{-3}$, batch size $4{,}096$, 100~epochs, no schedule or gradient clipping at a fixed NFE budget of $504$. Test-time paths are single-rollout trajectories from $x_0 = \mathbf{1}$, evaluated via two-sample KS statistic at step $t = 55$.

\subsection{Additional Details for the Stochastic Kuramoto Experiments}
\label{app:torus_details}

\paragraph{Data-generating dynamics.}
Trajectories are simulated from equation~\eqref{eq:kuramoto_2nd_order} with bimodal natural frequencies $\Omega_i \in \{+P, -P\}$ \citep{filatrella2008analysis}, default parameters $m = 1$, $K = 2.0$, $P = 0.5$, $D = 0.05$. The deterministic $N = 2$ subsystem admits a unique stable phase-locked equilibrium at $\Delta\theta_\infty = \arcsin(2P/K)$ for $K > 2P$ \citep{acebron2005kuramoto}, used as a verification anchor for the simulator (Appendix below). The integration horizon is $T = 5\,\text{s}$, sub-sampled to $n_\text{obs} = 200$ uniform observation timepoints from a fine integration grid of $n_\text{fine} = 16{,}384$ steps using diffrax's \texttt{Heun} solver. Splits are 5{,}000 / 1{,}000 / 1{,}000 trajectories (train / val / test) per network size $N \in \{2, 4, 8\}$, each with an independent initial condition and Brownian path.

\paragraph{Simulator verification.}
At $\sigma = 0$, $N = 2$, the deterministic 2-oscillator phase difference at $T = 5\,\text{s}$ is independent of $n_\text{fine}$ for $n_\text{fine} \ge 1024$ (relative variation $<10^{-4}$ across $n_\text{fine} \in \{1024, 2048, 4096, 8192, 16384\}$), confirming Heun convergence at the chosen production grid. Residual $\sim 7.7\%$ deviation from $\arcsin(2P/K) = \pi/6$ is consistent with the analytical decay constant of the linearised dynamics around the phase-locked equilibrium ($\sim 0.5\,\text{s}^{-1}$) at finite $T$ rather than integrator error. At $\sigma > 0$, the per-trajectory order parameter $r(t) = \big| N^{-1}\sum_j e^{i\theta_j} \big|$ saturates to a stationary mean $r_\infty \approx 0.70 \pm 0.24$ over an ensemble of 128 trajectories, placing the operating point in the partial-synchronisation regime as designed.

\paragraph{Model.}
The neural SDE on $T\mathbb{T}^N = \mathbb{T}^N \times \mathbb{R}^N$ uses MLP drift and diffusion fields of width $128$ over the feature embedding $(\sin\theta, \cos\theta, \omega) \in \mathbb{R}^{3N}$, with depths $3$ (drift) and $2$ (diffusion), SiLU activation, and a softplus-output diffusion scaled by $0.1$. The diffusion is decoupled (additive noise on $\omega$ only). Integration uses CFEES(2,5) on the product Lie group $T\mathbb{T}^N$, lifted via Bazavov's commutator-free construction \citep{bazavov_commutator-free_2020}.

\paragraph{Loss.}
Multi-horizon energy score evaluated at horizons $h \in \{T/8, T/4, T/2, T\}$ with $m = 4$ Monte Carlo rollouts per horizon, using the wrapped-on-$\theta$, plain-on-$\omega$ distance $d((\theta_a,\omega_a), (\theta_b,\omega_b)) = \sum_i | \mathrm{wrap}(\theta_a^i - \theta_b^i) | + \sum_i | \omega_a^i - \omega_b^i |$. Optimisation uses AdamW at peak learning rate $10^{-3}$, gradient clipping at global norm $1$, $30$ epochs, batch size $64$.

\paragraph{Adjoint pilot (Table~\ref{tab:kuramoto_pilot}).}
Before any training sweep, we verify that the three adjoints under test compute mathematically equivalent gradients. Using a fixed (model, data, Brownian-path) cell at $N = 2$, $n_\text{steps} \in \{200, 1000, 5000\}$, batch $32$, we compute the relative $\ell_2$ distance between each adjoint's gradient and a fine-$dt$ reference at $n_\text{steps,ref} = 10{,}000$ under CFEES(2,5) with the Reversible adjoint. Differences across adjoints at the same $n_\text{steps}$ are at the float32 round-off floor; the residual gap to the fine-$dt$ reference reflects the discretisation difference between the test cell and reference grid (and uses an independent Brownian path), not adjoint error.

\begin{table}[ht]
\centering
\small
\caption{M3.1 + M3.3 gradient fidelity. Values are relative $\ell_2$ distance to a fine-$dt$ CFEES(2,5) with the Reversible adjoint reference at $n_\text{steps,ref} = 10{,}000$. The three adjoints agree to 4 significant figures at every $n_\text{steps}$ (the reported gap is shared discretisation error vs the fine grid).}
\label{tab:kuramoto_pilot}
\renewcommand{\arraystretch}{1.1}
\begin{tabular}{l ccc}
\toprule
$n_\text{steps}$ & Reversible & Full & Recursive \\
\midrule
$200$  & $9.251\times10^{-2}$ & $9.251\times10^{-2}$ & $9.251\times10^{-2}$ \\
$1000$ & $9.290\times10^{-2}$ & $9.290\times10^{-2}$ & $9.290\times10^{-2}$ \\
$5000$ & $9.378\times10^{-2}$ & $9.378\times10^{-2}$ & $9.378\times10^{-2}$ \\
\bottomrule
\end{tabular}
\end{table}

\paragraph{Memory scaling (Figure~\ref{fig:kuramoto_memory_scaling}).}
Table~\ref{tab:kuramoto_memory_data} gives the absolute peak GPU memory measurements that underlie Figure~\ref{fig:kuramoto_memory_scaling}: one forward$+$backward solve through the Kuramoto neural SDE at $N=1000$, batch $64$, hidden width $128$, on a single GPU, with each cell run in an isolated subprocess.

\begin{table}[ht]
\centering
\small
\caption{Peak GPU memory (MiB) for one forward$+$backward solve of the Kuramoto NSDE, $N=1000$, batch $64$.}
\label{tab:kuramoto_memory_data}
\renewcommand{\arraystretch}{1.1}
\begin{tabular}{rccc}
\toprule
$n_{\mathrm{steps}}$ & CF-EES(2,5) (Reversible) & CG2 (Full) & CG2 (Recursive) \\
\midrule
50      & 356 & 670     & 648 \\
100     & 356 & 696     & 650 \\
200     & 352 & 743     & 653 \\
500     & 355 & 891     & 658 \\
1{,}000  & 355 & 1{,}134  & 663 \\
2{,}000  & 355 & 2{,}040  & 674 \\
5{,}000  & 355 & 4{,}970  & 690 \\
10{,}000 & 352 & OOM     & 712 \\
\bottomrule
\end{tabular}
\end{table}

\subsection{Additional Details for Sphere Latent SDE Experiments}
\label{app:sphere_details}

\paragraph{Setting.}
\citet{zeng2023latent} parametrise the drift of the latent SDE as a Chebyshev polynomial in $t$ of an MLP applied to the encoder output, and integrate the resulting Stratonovich SDE with one-step \emph{geometric Euler--Maruyama}: at every step a Lie-algebra increment $\omega = K(t)\,\mathrm{d}t + \sigma\,\mathrm{d}W \in \mathfrak{so}(n)$ is formed and lifted to $\mathrm{SO}(n)$ via $\exp(\omega)$, which then acts on the current state $z_t \in S^{n-1}$ by left multiplication. Their ELBO combines reconstruction at observed times with a closed-form Girsanov path-KL on the sphere; both terms reparameterise through every integrator step, so backpropagation must traverse the entire trajectory---the regime in which a reversible adjoint pays off.

\paragraph{Backbone.}
The standard backbone setup for the UCI Human Activity benchmark of \citet{zeng2023latent} uses $z \in S^{15}$, batch $64$, and $N = 228$ integration steps per trajectory. We replace the geometric Euler--Maruyama step with $\mathrm{CF\text{-}EES}(2,5)$ and route backpropagation through the Reversible adjoint; the encoder, Chebyshev drift, decoder, and classification head are left unchanged.

\paragraph{Memory.}
Figure~\ref{fig:sphere_memory} reports peak GPU memory for one forward$+$backward through the integrator alone. The matrix exponential on $\mathfrak{so}(16)$ is heavier per step than the elementwise lift on $\mathbb{T}^d$, so the memory ratio at a given $N$ is larger than for the Kuramoto benchmark (Figure~\ref{fig:kuramoto_memory_scaling}). Table~\ref{tab:sphere_memory_data} gives the underlying measurements.

\begin{table}[ht]
\centering
\small
\caption{Peak GPU memory (MiB) for one forward$+$backward solve of the latent SDE on $S^{15}$, batch $64$. The reversible measurement at $n_{\mathrm{steps}}=5{,}000$ was not collected; the figure extrapolates from the constant trend at lower step counts.}
\label{tab:sphere_memory_data}
\renewcommand{\arraystretch}{1.1}
\begin{tabular}{rcc}
\toprule
$n_{\mathrm{steps}}$ & CF-EES(2,5) (Reversible) & Geometric Euler (Full) \\
\midrule
50    & 22 & 212    \\
200   & 23 & 802    \\
800   & 28 & 3{,}166 \\
2{,}000 & 40 & 7{,}894 \\
5{,}000 & -- & 19{,}711 \\
\bottomrule
\end{tabular}
\end{table}

\paragraph{Compute parity.}
Table~\ref{tab:sphere_parity} reports test accuracy after $30$ training epochs at a fixed total NN-evaluation budget per trajectory $B = 30$ (geometric Euler--Maruyama uses $N = B$ steps; $\mathrm{CF\text{-}EES}(2,5)$ uses $N = B/3$), averaged over two seeds. The published 990-epoch accuracy of \citet{zeng2023latent} at the full $N = 228$ grid is $90.56 \pm 0.45\%$; both solvers in our 30-epoch parity sweep land within the expected range for the truncated training and reduced step count.

\subsection{Additional Details for Molecular Dynamics Experiment}
\label{app:md_details}
\paragraph{Model.}
The EANN potential uses $n_{\mathrm{GTO}} = 12$ radial Gaussian-type orbitals per species pair with a $6$~\AA{} cutoff, followed by a per-element MLP of two hidden layers of width $64$ with SiLU activations and LayerNorm \cite{zhangEmbeddedAtomNeural2019}; pre-trained weights are loaded from \texttt{params\_eann4.pickle}. Forces are obtained by automatic differentiation of the total energy. The Langevin SDE is integrated with friction $\gamma = 1.0~\mathrm{ps}^{-1}$ and fluctuation--dissipation noise $\sigma = \sqrt{2\gamma k_{\mathrm{B}} T / \mathbf{m}}$ at $T = 298.15~\mathrm{K}$, with all quantities expressed in GROMACS units (nm, ps, g/mol). The dipole-velocity proxy uses charge weights $w_{\mathrm{O}} = 1,\, w_{\mathrm{H}} = -1/2$ and is accumulated as an augmented coordinate of the integrator state so its gradient flows through the full rollout.

\paragraph{Dataset.}
Initial configurations are drawn from the bundled \texttt{water64.pdb} structure (cubic periodic box, edge $1.86~\mathrm{nm}$). Per training step we sample a batch of $6$ initial conditions: positions are the reference geometry perturbed by $\mathcal{N}(\mathbf{0}, 10^{-3}~\mathrm{nm})$ and velocities are drawn from the Maxwell--Boltzmann distribution at $T$. Each batch element is duplicated under time reversal $\mathbf{v}_0 \mapsto -\mathbf{v}_0$, giving $12$ effective trajectories. Pair lists are rebuilt at every force evaluation with a $0.6~\mathrm{nm}$ neighbour-list cutoff.

\paragraph{Training.}
We rollout for $t_{\mathrm{end}} = 0.1~\mathrm{ps}$ and train for $250$ optimization steps with Adam at learning rate $5\times 10^{-4}$, gradients clipped to global norm $1.0$. All non-reversible base solvers (Euler, Midpoint) are lifted to the reversible setting via the algebraically reversible wrapper of \citet{mccallum2024efficient} with coupling parameter $\lambda = 0.999$, and every solver uses the Reversible adjoint. The reported terminal proxy MSE is the loss of \eqref{eq:md_loss} averaged over the batch at the final training step; runtime is wall-clock time for the full $250$-step run on a single GPU with fixed random seed across solvers.

\subsection{Memory Benchmark for Figure~\ref{fig:torus_scaling}}
\label{app:torus_scaling_data}

Figure~\ref{fig:torus_scaling} reports peak XLA scratch memory (\texttt{temp\_bytes}) for one forward$+$backward solve of an SDE on the $7$-torus $\mathbb{T}^7$, batch $1{,}024$, hidden width $128$. Each cell is run in an isolated subprocess. Table~\ref{tab:torus_intro_memory_data} gives the underlying measurements in MiB.

\begin{table}[ht]
\centering
\small
\caption{Peak XLA scratch memory (MiB) for one forward$+$backward solve of an SDE on $\mathbb{T}^7$, batch $1{,}024$.}
\label{tab:torus_intro_memory_data}
\renewcommand{\arraystretch}{1.1}
\begin{tabular}{rccccc}
\toprule
$n_{\mathrm{steps}}$ & CF-EES (Reversible) & CG2 (Full) & CG2 (Recursive) & CG4 (Full) & CG4 (Recursive) \\
\midrule
5      & 390 & 384 & 673 & 402 & 708 \\
10     & 391 & 385 & 673 & 403 & 708 \\
20     & 391 & 385 & 673 & 404 & 708 \\
50     & 390 & 386 & 673 & 405 & 708 \\
100    & 390 & 387 & 673 & 406 & 708 \\
200    & 390 & 390 & 673 & 408 & 708 \\
400    & 390 & 395 & 673 & 413 & 708 \\
800    & 391 & 406 & 674 & 425 & 709 \\
2{,}000  & 390 & 439 & 674 & 458 & 709 \\
5{,}000  & 391 & 522 & 675 & 540 & 710 \\
10{,}000 & 390 & 658 & 676 & 676 & 712 \\
\bottomrule
\end{tabular}
\end{table}

\end{document}